\documentclass[11pt]{article}

\usepackage[margin=1in]{geometry}
\usepackage{amsmath,amssymb,amsthm}
\usepackage{bm}
\usepackage{algorithm}
\usepackage{algpseudocode}
\usepackage{float}
\usepackage{booktabs}
\usepackage{enumitem}
\usepackage{graphicx}
\usepackage[round,authoryear]{natbib}
\usepackage{microtype}
\usepackage{lastpage}
\usepackage[hypertexnames=false]{hyperref}

\hypersetup{
  colorlinks=true,
  linkcolor=blue,
  citecolor=blue,
  urlcolor=blue,
  pdftitle={Logit-Coordinate Generative Models for Mixed Continuous-Categorical Tabular Data},
  pdfauthor={Yuefei Shen and Xiaotong Shen},
  pdfkeywords={mixed tabular data, logit coordinates, flow matching, diffusion models, information geometry}
}

\graphicspath{{./}}

\newtheorem{theorem}{Theorem}[section]
\newtheorem{lemma}[theorem]{Lemma}
\newtheorem{proposition}[theorem]{Proposition}
\newtheorem{corollary}[theorem]{Corollary}
\theoremstyle{definition}
\newtheorem{definition}[theorem]{Definition}
\newtheorem{assumption}[theorem]{Assumption}
\theoremstyle{remark}
\newtheorem{remark}[theorem]{Remark}

\newenvironment{keywords}
  {\par\smallskip\noindent\textbf{Keywords:}\ }
  {\par\medskip}

\title{Logit--Coordinate Generative Models for Mixed Continuous--Categorical Tabular Data}

\author{
  \begin{tabular}{cc}
    Yuefei Shen & Xiaotong Shen\\
    \texttt{yms5084@gmail.com} & \texttt{xshen@umn.edu}
  \end{tabular}\\[0.5em]
  \normalsize School of Statistics, University of Minnesota\\
  \normalsize Minneapolis, MN 55455, USA
}
\date{}

\begin{document}

\maketitle

\begin{abstract}

Mixed continuous--categorical data pose a representation problem for continuous generative models. Flow Matching and Gaussian diffusion operate in Euclidean spaces, whereas categorical laws lie on probability simplices and may be highly imbalanced. We study a logit-coordinate framework that   encodes categorical variables as smoothed natural parameters  and combines them with transformed numerical variables. This yields common formulations of Logit Flow Matching and Logit Diffusion. We introduce a mixed-distribution discrepancy separating categorical marginal error from conditional continuous Wasserstein error, and derive stability bounds and imbalance-aware nonparametric rates linking vector-field or drift error to decoded mixed-distribution error. Controlled simulations show that scaled-logit coordinates improve or match one-hot coordinates, especially under severe rare-cell imbalance. Across four real-data benchmarks and ten splits per dataset, Logit FM improves the primary distributional metrics on three datasets and is comparable on Churn2; Block-Conditional Logit FM consistently improves the flat model; and Logit Diffusion generally improves over or matches One-Hot Diffusion.

\end{abstract}

\begin{keywords}
mixed tabular data, logit coordinates, flow matching, diffusion models, information geometry
\end{keywords}

\section{Introduction}

Mixed tabular data are neither purely Euclidean nor purely discrete. A typical record contains numerical variables, categorical attributes, and often a response label whose distribution is strongly imbalanced. If
\[
X=(X^{(c)},X^{(d)}),
\]
with $X^{(c)}\in\mathbb R^p$ and $X^{(d)}$ taking values in a finite set, a useful generative model must approximate both the categorical marginal law and the continuous law conditional on each category. These two requirements become coupled under imbalance: the categories that are hardest to preserve are also the categories with the fewest observations for learning conditional continuous structure.

Transport-based generative models offer an attractive route for this problem. Flow Matching (FM) \citep{lipman2022flow} learns a time-dependent velocity field whose ordinary differential equation transports a simple base distribution to the data distribution. Diffusion models use stochastic noising and denoising processes, but their probability-flow representations also fit into a vector-field view. For continuous data, this viewpoint has led to stability and convergence analyses that relate vector-field error to distributional error. For mixed data, the same argument cannot be applied without first choosing how categorical variables enter the transport space.

The representation choice is a modeling decision with statistical consequences. One-hot encodings place all categories at equal Euclidean distances and then ask a continuous vector field to interpolate through the ambient space. This construction is convenient, but it does not encode the changing local curvature of the probability simplex. Natural-parameter, or logit, coordinates give a different embedding: after smoothing categorical observations into the interior of the simplex, additive log-ratio coordinates map them to an unconstrained Euclidean space while preserving the exponential-family coordinate structure.

This paper studies logit generative models for mixed continuous--categorical tabular data. The central idea is representation-level: embed categorical variables in smoothed natural-parameter coordinates, concatenate them with transformed numerical variables, and then apply a continuous generative model in the resulting Euclidean coordinate system. Flow Matching and diffusion are treated as two instantiations of this principle. The geometry is used in this coordinate sense: the proposed models use Euclidean dynamics in natural-parameter coordinates, with no Fisher--Rao training objective. The contribution is the combination of this classical log-ratio representation with a mixed-distribution discrepancy, stability analysis, imbalance-aware rates, and within-family empirical comparisons for mixed tabular generation.

\subsection{Generative Modeling for Mixed Tabular Data}

Recent research has extended generative models to tabular data in various forms. 
Early approaches adapted generative adversarial networks (GANs) to heterogeneous tabular settings \citep{xu2019ctgan, zhao2021ctabgan}. These models incorporate conditional sampling mechanisms. They also introduce feature-wise normalization to handle mixed variable types and class imbalance.
Variational autoencoder (VAE) frameworks take a similar approach by introducing differentiable relaxations, for example, the Gumbel--Softmax, to model categorical variables \citep{jang2017gumbel, maddison2017concrete}.

Diffusion and transport-based models have also been extended to tabular data.
Approaches like TabDDPM \citep{kotelnikov2023tabddpm} split continuous and categorical features into separate diffusion processes. 
Related approaches instead treat one-hot encodings as continuous vectors perturbed by Gaussian noise.
In parallel, methods such as Argmax Flows \citep{hoogeboom2021argmax} introduce continuous latent representations that are discretized later to recover categorical variables.

Despite their empirical success, these approaches mostly rely on Euclidean embeddings of
categorical variables, typically through one-hot vectors or learned continuous representations.
The geometric consequences of this choice are rarely discussed explicitly, particularly when class proportions are highly imbalanced. 

\subsection{Related Work and Positioning}

Several recent papers address categorical or mixed-type generation with flow or diffusion models. \citet{hoogeboom2021argmax} develop Argmax Flows and multinomial diffusion for categorical data. TabDDPM \citep{kotelnikov2023tabddpm} combines Gaussian diffusion for numerical variables with multinomial diffusion for categorical variables. More recent flow-matching approaches treat heterogeneous tabular data through variational or cascaded constructions, including exponential-family variational Flow Matching \citep{guzman2025efvfm} and cascaded Flow Matching for heterogeneous tables \citep{mueller2026cascaded}. A separate line studies categorical generation directly on, or through transformations of, the simplex. Statistical Flow Matching \citep{cheng2024sfm} uses Fisher--Rao geometry on statistical manifolds, Categorical Flow Maps \citep{roos2026categorical} use simplex-constrained endpoint parameterizations, and simplex-to-Euclidean methods use log-ratio-type bijections to make categorical flow matching compatible with standard Euclidean models \citep{williams2025simplex}.

This paper is closest in spirit to the simplex-to-Euclidean and exponential-family viewpoints, but it asks a different statistical question. We focus on mixed continuous--categorical tabular distributions under imbalance, where fidelity requires preserving both categorical mass and numerical distributions conditional on categorical-label cells. The proposed logit representation is therefore studied as a representation principle for continuous generative modeling of mixed data. Flow Matching provides one instantiation through logit-space velocity fields, while diffusion provides another through Gaussian noising and denoising in the same transformed coordinates.
This framing separates a coordinate-representation effect that can otherwise be bundled with architecture, conditioning, and benchmark design in mixed-tabular generation.

\subsection{Continuous Generative Dynamics and Geometric Structure}

The primary obstacle in applying continuous generative models to mixed tabular data is geometric. Flow Matching is defined through deterministic velocity fields in Euclidean space, while Gaussian diffusion models perturb and denoise variables in Euclidean coordinates. Even when diffusion is viewed through its probability-flow ODE representation, the generative dynamics operate through continuous vector fields. Thus, both Flow Matching and diffusion require categorical variables to be embedded into a continuous coordinate system before they can be modeled jointly with numerical variables.

Categorical distributions reside on the probability simplex
\[
\Delta_K = \{ p \in \mathbb{R}^K: p_k \ge 0,\ \sum_{k=1}^K p_k = 1 \},
\]
which carries a non-Euclidean information geometry described by the   Fisher--Rao  metric \citep{amari2016information}. 
A common practice embeds categorical variables using one-hot encodings and treats them as continuous during training. Although it is computationally convenient, this approach implicitly treats the simplex as a flat Euclidean subset of  $\mathbb{R}^K$. Consequently, this Euclidean relaxation potentially distorts statistical structure and degrades rare-category fidelity.

Categorical distributions form an exponential family, and the natural parameters correspond to logits. 
The logit transformation maps the interior of the simplex to an unconstrained Euclidean space. In these coordinates, exponential-family structure is represented through natural parameters, which gives a statistically interpretable Euclidean embedding for transport-based generative modeling. 
This motivates a logit-coordinate representation of categorical variables for continuous generative dynamics.

\subsection{Contributions}

Building on this perspective, we develop a logit-coordinate framework for generative modeling of mixed continuous--categorical tabular data. The contribution is fivefold.

\medskip
\noindent\textit{(1) Logit generative modeling for mixed data.}
We embed categorical variables in smoothed natural-parameter coordinates and concatenate them with transformed numerical variables, producing a continuous representation of mixed tabular data. This representation can be used with different continuous generative dynamics, including Flow Matching and Gaussian diffusion.

\medskip
\noindent\textit{(2) Geometry-aware categorical representation.}
The proposed representation replaces one-hot Euclidean coordinates with logit coordinates associated with the categorical exponential family. This gives a simple unconstrained Euclidean training space while preserving the natural-parameter structure of categorical distributions.

\medskip
\noindent\textit{(3) A mixed-distribution fidelity criterion.}
We introduce a discrepancy for mixed distributions that decomposes categorical marginal error and conditional continuous Wasserstein error. This discrepancy reflects the intrinsic mixture structure of mixed tabular data and directly evaluates whether a generator preserves both category proportions and within-category numerical distributions.

\medskip
\noindent\textit{(4) Unified stability analysis for logit generative models.}
We establish stability bounds showing how errors in learned continuous dynamics propagate to mixed-distribution error after decoding from logit coordinates. For Flow Matching, this gives a velocity-field stability bound. For diffusion models, the same mechanism applies through the probability-flow ODE representation, where score-estimation error induces drift-field error.

\medskip
\noindent\textit{(5) Imbalance-aware statistical rates and empirical validation.}
By reducing the learning problem to nonparametric estimation of time-dependent vector fields, we derive convergence rates that make the effective-sample-size loss for rare categories explicit. Controlled simulations compare logit and one-hot representations under rare-cell imbalance, while real-data experiments over ten data splits per dataset evaluate the representation effect and the benefit of additional block-conditional structure.

Together, these results support a representation-level message: for mixed tabular generation under imbalance, the categorical coordinate system can materially affect distributional fidelity across continuous generative modeling frameworks. The empirical comparisons remain primarily within family, while the two TabDDPM variants serve as external references rather than a basis for direct cross-family ranking.

\section{Mathematical Background}

In this section, we formalize the statistical and geometric foundations for mixed discrete--continuous generative modeling.

\noindent\textit{Notation.}
Unless stated otherwise, a superscript $\star$ marks a population or
data-generating (``true'') quantity and a hat marks its learned, estimated, or
generated counterpart; for example, $(P^\star,\widehat P)$ and
$(\pi_k^\star,\widehat\pi_k)$.

\subsection{Mixed Discrete--Continuous Probability Spaces}

Let $(\Omega, \mathcal{F}, \mathbb{P})$ be a probability space.
We consider random variables
\[
X = (X^{(c)}, X^{(d)}),
\]
where
\[
X^{(c)} \in \mathbb{R}^p, 
\quad
X^{(d)} \in \mathcal{D},
\]
and $\mathcal{D}$ is a finite discrete space with total cardinality $K$; for multiple categorical variables, $K$ denotes the number of joint categorical cells.

A generic joint density admits the following factorization:
\[
 p(x^{(c)}, x^{(d)})
 =
 p(x^{(d)}) \, p(x^{(c)} \mid x^{(d)}).
\]
This representation identifies the space of mixed distributions with 
\[
\mathcal{P}(\mathbb{R}^p \times \mathcal{D})
\cong
\left\{
\left(\pi_k, f_k\right)_{k=1}^K:
\pi_k \ge 0,\;
\sum_{k=1}^K \pi_k = 1,\;
f_k \in \mathcal{P}(\mathbb{R}^p)
\right\},
\]
where $\pi_k=\mathbb{P}(X^{(d)}=k)$ and
$f_k(\cdot)=p(\,\cdot\mid X^{(d)}=k)$. For the true data-generating law
$P^\star$ with density $p^\star$, we use the starred components
\[
\pi_k^\star:= P^\star(X^{(d)} = k),
\qquad
f_k^\star(\cdot):= p^\star(\,\cdot \mid X^{(d)} = k).
\]
In particular, categorical imbalance corresponds to small $\pi_k^\star$ for some $k$.
A generative model must preserve both
marginal probabilities $(\pi_k^\star)$
and conditional densities $(f_k^\star)$.

\subsection{Flow Matching in Continuous Spaces}

Flow Matching (FM) \citep{lipman2022flow}
learns a time-dependent vector field
\[
v_\phi: \mathbb{R}^d \times [0,1] \to \mathbb{R}^d
\]
that induces an ODE
\[
\frac{dY_t}{dt} = v_\phi(Y_t, t),
\]
transporting a base distribution $\mu_0$ to a target distribution $\mu_1^\star$.

Training proceeds by sampling
  \[
Y_1 \sim \mu_1^\star, 
\quad
Y_0 \sim \mu_0,
\quad
t \sim \mathrm{Unif}(0,1),
\] 
defining interpolation
  \[
Y_t = (1-t) Y_0 + t Y_1,
\] 
and minimizing
  \[
\mathcal{L}(\phi)
=
\mathbb{E}
\bigl[
\|v_\phi(Y_t,t) - (Y_1 - Y_0)\|^2
\bigr].
\] 
Equivalently, with   $U=Y_1-Y_0$, the population Flow Matching velocity is
$v^\star(y,t)=\mathbb E[U\mid Y_t=y]$; the loss above is the squared-error
regression objective for this conditional mean.

This formulation assumes $Y_1 \in \mathbb{R}^d$.
Thus categorical variables must be embedded in a continuous space prior to FM training.

\subsection{The Probability Simplex}

Categorical distributions over $\mathcal{D} = \{1, \dots, K\}$ form the probability simplex

\[
\Delta_K
=
\left\{
p \in \mathbb{R}^K:
p_k \ge 0,
\sum_{k=1}^K p_k = 1
\right\}.
\]

Its interior
\[
\Delta_K^\circ = \{p \in \Delta_K: p_k > 0 \ \forall k\}
\]
is a smooth $(K-1)$-dimensional manifold embedded in $\mathbb{R}^K$.

Although $\Delta_K$ is a convex subset of $\mathbb{R}^K$, its intrinsic statistical geometry is not Euclidean. 
The Euclidean metric inherited from $\mathbb{R}^K$ does not coincide with the   Fisher--Rao  metric that governs local behavior of statistical divergence (Section 2.4).

Under one-hot encoding, category $k$ is represented by the vertex $e_k \in \mathbb{R}^K$, where $(e_k)_j = 1\{j=k\}$. 
A Euclidean interpolation between two categories $e_i$ and $e_j$ yields

\[
(1-t)e_i + t e_j,
\quad t \in [0,1],
\]

which corresponds to a mixture distribution assigning probabilities $(1-t)$ and $t$ to categories $i$ and $j$.

Such linear interpolation is natural in Euclidean space, but it does not correspond to the affine paths singled out by the exponential-family structure of categorical distributions.
In particular, e-geodesic, or natural-parameter affine, paths are linear in natural-parameter coordinates rather than in probability coordinates.

This difference becomes visible near the boundary of the simplex. 
For example, in the binary case $K=2$, the simplex reduces to the interval $(0,1)$. 
Euclidean distance between probabilities scales linearly, whereas   the Kullback--Leibler divergence depends on probability ratios such as $p\log(p/q)$. When coordinates approach zero at different rates, the ratio between KL divergence and squared Euclidean distance can diverge. 
Thus,  small Euclidean perturbations  between probability vectors  near the boundary   need not provide uniform quadratic control of  statistical divergence.

\subsection{Information Geometry and the Fisher Metric}

Categorical distributions form an exponential family with density
\[
p_\theta(k)
=
\frac{\exp(\theta_k)}{\sum_{j=1}^K \exp(\theta_j)},
\]
where the natural parameter $\theta \in \mathbb{R}^K$ is defined up to an additive constant.
The associated log-partition function is
\[
A(\theta)
=
\log\!\left(\sum_{j=1}^K \exp(\theta_j)\right).
\]

The Fisher information matrix in natural-parameter coordinates is

\[
\mathcal{I}(\theta)
=
\nabla^2 A(\theta)
=
\mathrm{diag}(p_\theta) - p_\theta p_\theta^\top.
\]

This matrix defines a Riemannian metric on the interior $\Delta_K^\circ$.
For small perturbations $d\theta$, the   Kullback--Leibler  divergence admits the second-order expansion

\[
\mathrm{KL}(p_\theta \,\|\, p_{\theta+d\theta})
=
\frac{1}{2}
d\theta^\top \mathcal{I}(\theta)\, d\theta
+
o(\|d\theta\|^2).
\]

Thus the Fisher metric characterizes the local curvature of the simplex under statistical divergence.

The entries of $\mathcal{I}(\theta)$ depend on the probabilities $p_\theta(k)$.
As $p_\theta(k)\to 0$, the Fisher matrix becomes increasingly anisotropic and loses uniform conditioning near the boundary of the simplex. Thus the constants that connect Euclidean perturbations in natural coordinates to statistical divergence deteriorate near rare-category faces.

\medskip
\noindent\textit{Coordinates and metric.}
Natural-parameter coordinates remove the probability constraint and make
e-geodesic, or natural-parameter affine, paths linear in $\theta$.
However, the intrinsic Riemannian metric on $\Delta_K^\circ$
is given by the Fisher information matrix $\mathcal{I}(\theta)$,
which is generally not the identity matrix.
Thus, although $\theta$-space is an unconstrained Euclidean space as a coordinate representation,
the statistical geometry it represents remains curved.

 The full $K\times K$ Fisher matrix has null vector $\mathbf 1$ because adding
a constant to every natural parameter leaves the distribution unchanged.  On
any compact subset of $\Delta_K^\circ$ bounded away from the boundary,   its
nonzero eigenvalues are  bounded above and below.   Equivalently, after fixing a
reference category, the reduced $(K-1)\times(K-1)$ Fisher matrix is uniformly
positive definite. The corresponding quadratic form  is uniformly equivalent
to the Euclidean norm   on identifiable natural-parameter perturbations.
Consequently, regression objectives based on the Euclidean norm in $\theta$-space locally
approximate the   Fisher--Rao  geometry rather than coincide with it. This local equivalence is made precise in \textbf{Proposition~1} of Section~3.4,
which establishes quadratic control of Kullback--Leibler divergence
by Euclidean perturbations in natural-parameter space.

\subsection{Logit (Natural Parameter) Coordinates}

We use the final category as the reference, denoted $r=K$, and
define reduced natural parameters
\[
\eta_k
=
\log \frac{p_k}{p_r},
\quad
k=1,\dots,K-1.
\]

The mapping
\[
\eta \mapsto p
\]
is smooth and bijective between
$\mathbb{R}^{K-1}$
and
$\Delta_K^\circ$.

In these coordinates, probability ratios vary linearly in $\eta$, and
e-geodesic, or natural-parameter affine, paths correspond to linear paths in $\eta$-space \citep{amari2016information}.
Accordingly, performing regression in $\eta$-space preserves the linear
structure of the exponential family while removing the simplex constraint.

\subsection{Implications for Mixed Transport}
When categorical variables are embedded as one-hot vectors in $\mathbb{R}^K$,
Flow Matching minimizes Euclidean error
\[
\| v - v^\star \|^2
\]
under the flat metric inherited from $\mathbb{R}^K$.
This implicitly treats the probability simplex as a Euclidean subset.

In contrast, embedding categories via natural parameters maps
$\Delta_K^\circ$ to $\mathbb{R}^{K-1}$ through a smooth bijection.
Transport is then performed in logit space.
Although the optimization still uses a Euclidean norm in these coordinates,
natural-parameter affine paths become linear trajectories,
and small perturbations in $\eta$ correspond locally to controlled
perturbations in   Kullback--Leibler  divergence
(Section 2.4).

Thus the difference between one-hot and logit embeddings
lies not in the optimization objective itself,
but in the geometric structure imposed by the coordinate representation.

\section{Geometry of Categorical Relaxations}

In Section 2, we show that categorical distributions lie on the probability simplex equipped with the Fisher--Rao metric, whereas Flow Matching is formulated in Euclidean space. Applying Flow Matching to categorical variables therefore requires choosing an embedding into a Euclidean coordinate system. The choice of embedding determines how Euclidean regression errors translate into statistical discrepancies on the simplex.

In this section we compare two embedding strategies: the standard one-hot Euclidean representation and the logit (natural parameter) representation associated with the categorical exponential family. We show that these two constructions induce fundamentally different geometric behaviors. In particular, Euclidean regression in logit coordinates provides local control of Kullback--Leibler divergence and probability error, while Euclidean distances between probability vectors do not yield uniform control of statistical divergence, especially near the boundary of the simplex. These results clarify the geometric role of categorical embeddings in transport-based generative modeling.

\subsection{One-Hot Euclidean Embedding}
Under one-hot encoding, each category $k \in \{1,\dots,K\}$ is represented by the vertex $e_k \in \mathbb{R}^K$. Flow Matching is then applied in $\mathbb{R}^K$ by treating these vertices as ordinary Euclidean points. Let $\Pi: \mathbb{R}^K \to \{1,\dots,K\}$ denote the decoding map $\Pi(y) = \arg\max_j y_j$. Training minimizes the Euclidean regression loss
\[
\mathcal{L}_{\text{one-hot}} = \mathbb{E}\bigl[ \| v_\phi(Y_t,t) - v^\star(Y_t,t) \|^2 \bigr],
\]
where $Y_t$ interpolates between base samples and one-hot encoded data.

This construction   measures vector-field regression error  in the flat geometry
  of $\mathbb R^K$. The loss $\mathcal{L}_{\text{one-hot}}$ does not directly control a 
statistical divergence between decoded categorical laws; the behavior also depends on the learned
flow and the argmax margins at decoding.

\subsection{Logit Embedding}

In the logit embedding, each categorical observation is mapped to natural parameters of the categorical exponential family. After introducing a small smoothing parameter to ensure strict positivity for $p_k$, define the reduced logits
\[
\eta_k = \log \frac{p_k}{p_K}, 
\qquad k = 1,\dots,K-1,
\]
where $K$ is a fixed reference category. This yields a vector $\eta \in \mathbb{R}^{K-1}$.

The mapping $\eta \mapsto p$ is smooth and bijective between $\mathbb{R}^{K-1}$ and the interior of the simplex $\Delta_K^\circ$.  Flow Matching is carried out in $\eta$-space, and categorical probabilities are  recovered through the softmax map at decoding.

\subsection{Choice of Log-Ratio Coordinates}

We adopt the additive log-ratio (ALR) representation by fixing a reference category. This yields a $(K-1)$-dimensional unconstrained parameterization that coincides with the natural parameterization of the categorical exponential family.

The ALR coordinates depend on the chosen reference category, so they are not
permutation invariant.  Changing the reference category applies an invertible
linear transformation to the log-ratio coordinates and therefore does not
change the represented probability vector. It can still affect finite-sample
optimization because the Euclidean loss and network parameterization are not
invariant to arbitrary linear reparameterizations.  Alternative log-ratio
transformations, such as the centered log-ratio (CLR) representation, provide
symmetric embeddings at the cost of introducing a linear constraint. We use
ALR for   its minimal $(K-1)$-dimensional  exponential-family   parameterization.

\medskip
\noindent\textit{Rare categories in logit space.}

Log-ratio coordinates make rare-category contrasts explicit.
If $p_k \to 0$ while the reference category probability remains bounded away from zero,
then
\[
\eta_k = \log\!\left(\frac{p_k}{p_K}\right) \to -\infty.
\]
Thus categories with small probability are mapped to extreme regions in logit space.
In contrast to one-hot   embeddings, where  all vertices lie at equal Euclidean   distance, 
 logit coordinates induce reference-dependent contrasts whose magnitude grows as probabilities become more imbalanced.

This observation should be read together with the effective-sample-size limitation for rare cells. Logit coordinates make rare/frequent contrasts explicit in the representation, while the available data in a rare cell remain of order $n\pi_k^\star$.

\subsection{Embedding Geometry and Divergence Control}
\label{subsec:key_claims}

The distinction between one-hot and logit embeddings reduces to how Euclidean perturbations in the chosen coordinate system translate into statistical discrepancies on the simplex. In particular, we ask whether small Euclidean errors imply small divergence between the corresponding categorical distributions.

In this section, we demonstrate two results which together clarify the geometric consequences of the embedding choice:

(i) \emph{Local fidelity of the logit embedding.} We show that, in natural parameter coordinates, Euclidean perturbations provide local quadratic control of Kullback--Leibler divergence.

(ii) \emph{Global pathology of Euclidean relaxations.} We show that no analogous global control holds when probability vectors are compared directly in Euclidean norm.

Propositions~\ref{prop:local_kl_control} and \ref{prop:euclidean_mismatch}
are proved below, and Proposition~\ref{prop:prob_error_control}
is proved in Section~5.1.

\begin{proposition}[Local equivalence of KL and reduced natural-parameter error]
\label{prop:local_kl_control}
Let $p_\theta$ denote the categorical distribution with reduced natural parameters
$\theta\in\mathbb{R}^{K-1}$ (with a fixed reference category).
For any compact set $\mathcal{K}\subset \Delta_K^\circ$, there exist constants   $0<c\le C<\infty$  and $r>0$ such that
for all $\theta$ with $p_\theta\in\mathcal{K}$ and all $\delta$ with $\|\delta\|\le r$,
  \[
c\|\delta\|^2
\le
\mathrm{KL}(p_\theta \| p_{\theta+\delta})
\le
C \|\delta\|^2.
\] 
\end{proposition}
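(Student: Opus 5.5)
The plan is to use the exponential-family form of KL in reduced natural coordinates. With reference category $K$, write $A(\theta)=\log\bigl(1+\sum_{k=1}^{K-1}e^{\theta_k}\bigr)$. Then $\log p_\theta(k)=\theta_k-A(\theta)$, with $\theta_K:=0$. A direct computation gives the Bregman identity
\[
\mathrm{KL}(p_\theta\|p_{\theta+\delta})=A(\theta+\delta)-A(\theta)-\nabla A(\theta)^\top\delta ,
\]
since $\nabla A(\theta)=(p_\theta(1),\dots,p_\theta(K-1))$. Taylor's theorem with integral remainder then yields
\[
\mathrm{KL}(p_\theta\|p_{\theta+\delta})=\int_0^1(1-s)\,\delta^\top\nabla^2A(\theta+s\delta)\,\delta\,ds .
\]
Here $\nabla^2A(\vartheta)=\mathrm{diag}(q)-qq^\top$, with $q=(p_\vartheta(1),\dots,p_\vartheta(K-1))$, is the reduced Fisher matrix. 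The problem therefore reduces to bounding the eigenvalues of the reduced Fisher matrix uniformly along the segment $\{\theta+s\delta: s\in[0,1]\}$.

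\textbf{Step 1: enlarge the compact set.} Since $\mathcal K\subset\Delta_K^\circ$ is compact and the softmax map is a homeomorphism $\mathbb R^{K-1}\to\Delta_K^\circ$, the preimage $\Theta=\{\theta:p_\theta\in\mathcal K\}$ is compact. Fix $r=1$, or any $r>0$. The closed $r$-neighborhood $\Theta_r$ is again compact, and it contains every segment point $\theta+s\delta$ with $\|\delta\|\le r$.

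\textbf{Step 2: uniform eigenvalue bounds on $\Theta_r$.} For the upper bound, $\delta^\top(\mathrm{diag}(q)-qq^\top)\delta\le\sum_k q_k\delta_k^2\le\|\delta\|^2$, so any $C\ge 1/2$ works. For the lower bound, $\delta^\top\nabla^2A\,\delta$ is the variance of the random variable $Z$ equal to $\delta_k$ on category $k$ (with $\delta_K=0$) under $p_\vartheta$. This variance is strictly positive for $\delta\ne0$: category $K$ has positive mass with value $0$, and some category $k$ has $\delta_k\ne0$ with positive mass. Hence $\nabla^2A(\vartheta)$ is positive definite. Its smallest eigenvalue is continuous in $\vartheta$, so it attains a positive minimum $\lambda_{\min}>0$ on the compact set $\Theta_r$.

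For an explicit constant, one can use $\mathrm{Var}(Z)\ge \min_j p_\vartheta(j)\cdot\tfrac12\max_k\delta_k^2\ge \tfrac{\min_j p_\vartheta(j)}{2(K-1)}\|\delta\|^2$, obtained by comparing the atoms at $0$ and at $\delta_k$. Since the minimum probability is bounded below on $\Theta_r$ by compactness, this gives the same conclusion.

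\textbf{Conclusion.} Combining the integral formula with Step 2 and $\int_0^1(1-s)\,ds=\tfrac12$ gives the statement with $c=\lambda_{\min}/2$ and $C=1/2$.

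The main obstacle is the lower bound. It must hold uniformly along the whole segment, not only at $\theta$. This is why Step 1 enlarges to the neighborhood $\Theta_r$ rather than working on $\Theta$ alone. It is also why positive-definiteness must use the reduced, reference-fixed coordinates: the full $K\times K$ Fisher matrix has the null direction $\mathbf 1$.
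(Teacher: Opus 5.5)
Your proposal is correct and follows essentially the same route as the paper: both write $\mathrm{KL}(p_\theta\|p_{\theta+\delta})$ as the integral Taylor remainder of $A$ against the reduced Fisher matrix $\mathrm{diag}(\bar p)-\bar p\bar p^\top$, and both bound its eigenvalues uniformly on a compact enlargement of the parameter set. The differences are cosmetic. You enlarge in $\theta$-space via the compact preimage $\Theta$ and its closed $r$-neighborhood, so any $r>0$ works, and you supply explicit constants. The paper instead takes a compact neighborhood $\mathcal K'$ of $\mathcal K$ in the simplex and then chooses $r$ small enough that the whole segment stays in $\mathcal K'$.
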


\begin{proof}
We work in reduced natural-parameter coordinates $\theta \in \mathbb{R}^{K-1}$,
with the last category fixed as reference. The categorical model is an exponential family with log-partition function
\[
A(\theta) = \log\!\left(1 + \sum_{k=1}^{K-1} e^{\theta_k}\right),
\]
and probabilities
\[
p_\theta(k) = \frac{e^{\theta_k}}{1+\sum_{j=1}^{K-1} e^{\theta_j}},
\quad
p_\theta(K) = \frac{1}{1+\sum_{j=1}^{K-1} e^{\theta_j}}.
\]

  Write
\[
\bar p_\theta=(p_\theta(1),\ldots,p_\theta(K-1))^\top.
\] 
  The  Fisher information matrix in   reduced natural coordinates is
\[
\mathcal{I}_{\mathrm{red}}(\theta)
=
\nabla^2 A(\theta)
=
\mathrm{diag}(\bar p_\theta)-\bar p_\theta\bar p_\theta^\top.
\]
For every $\theta$, this $(K-1)\times(K-1)$ matrix is positive definite.

  Because $\mathcal K$ is compact and lies in the simplex interior, there is a
compact neighborhood $\mathcal K'\subset\Delta_K^\circ$ of $\mathcal K$ and
constants $0<m\le M<\infty$  such that
  \[
mI_{K-1}\preceq \mathcal I_{\mathrm{red}}(\vartheta)
\preceq MI_{K-1}
\qquad\text{whenever }p_\vartheta\in\mathcal K'.
\] 
  Choose $r>0$ so that $p_{\theta+s\delta}\in\mathcal K'$ for all
$p_\theta\in\mathcal K$, $s\in[0,1]$, and $\|\delta\|\le r$.
The integral Taylor remainder for the log-partition function gives
\[
\mathrm{KL}(p_\theta \| p_{\theta+\delta})
=
\int_0^1(1-s)\,
\delta^\top\mathcal I_{\mathrm{red}}(\theta+s\delta)\delta\,ds.
\]

  Consequently,
\[
\frac{m}{2}\|\delta\|^2
\le
\mathrm{KL}(p_\theta \| p_{\theta+\delta})
\le
\frac{M}{2}\|\delta\|^2,
\] 
  which proves the claim.
\end{proof}

\begin{proposition}[Probability error control via softmax Lipschitzness]
\label{prop:prob_error_control}
Let   $S$ be the reference-category softmax map on $\mathbb{R}^{K-1}$.
Then for all $\theta,\tilde\theta\in\mathbb{R}^{K-1}$,
\[
\|S(\theta)-S(\tilde\theta)\|_1 \le \frac{\sqrt{K}}{2}\|\theta-\tilde\theta\|_2.
\]
\end{proposition}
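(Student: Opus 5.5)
The plan is the mean value theorem along the segment $\theta_s=\tilde\theta+s(\theta-\tilde\theta)$, $s\in[0,1]$, combined with a uniform bound on the $\ell_2\to\ell_1$ operator norm of the Jacobian of $S$. Write $S(\vartheta)=p_\vartheta\in\mathbb R^K$ and $\delta=\theta-\tilde\theta$. By the fundamental theorem of calculus,
\[
S(\theta)-S(\tilde\theta)=\int_0^1 J(\theta_s)\,\delta\,ds ,
\]
so it suffices to prove $\|J(\vartheta)u\|_1\le \tfrac{\sqrt K}{2}\|u\|_2$ for every $\vartheta\in\mathbb R^{K-1}$ and $u\in\mathbb R^{K-1}$.

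First compute the Jacobian. Differentiating $p_\vartheta(k)=e^{\vartheta_k}/(1+\sum_j e^{\vartheta_j})$ and $p_\vartheta(K)=1/(1+\sum_j e^{\vartheta_j})$ gives, with $p=p_\vartheta$,
\[
J(\vartheta)u = p\odot(\bar u-\langle p,\bar u\rangle \mathbf 1),
\]
where $\bar u=(u,0)\in\mathbb R^K$ pads $u$ with a zero in the reference slot. In other words, the $k$-th component is $p_k(\bar u_k-m)$ with $m=\mathbb E_p[\bar u]$. Hence
\[
\|J u\|_1=\mathbb E_p\bigl|\bar u-m\bigr| .
\]

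Next, bound this mean absolute deviation. Let $\bar u$ range over an interval of width $w=\max_k\bar u_k-\min_k\bar u_k$. A random variable supported in an interval of width $w$ has mean absolute deviation at most $w/2$: it is at most its standard deviation, which is at most $w/2$ by Popoviciu's inequality. Since $0$ is among the values of $\bar u$, we have $w\le \max_k |u_k|+\max_k |u_k|$, and $|u_k|\le\|u\|_2$. This gives at least $\|Ju\|_1\le\|u\|_2$.

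The hard step is sharpening this to the claimed constant $\sqrt K/2$, which is larger than $1$ only when $K\ge 4$; for $K\le 3$ the cruder bound is not enough. I would use $w\le |u_i|+|u_j|\le\sqrt2\,\|u\|_2$, obtained by taking the two extreme coordinates, one of which may be the padded $0$. This yields $\|Ju\|_1\le \tfrac{\sqrt2}{2}\|u\|_2\le\tfrac{\sqrt K}{2}\|u\|_2$ for all $K\ge2$. The one case to check separately is when the max and min coordinates are both nonzero and have opposite signs; the bound $w\le|u_i|+|u_j|\le\sqrt2\,\|u\|_2$ still holds there. As a fallback, I would use $\mathbb E_p|\bar u-m|\le\sqrt{\operatorname{Var}_p(\bar u)}$ together with $\operatorname{Var}\le w^2/4$ and $w^2\le 2\|u\|_2^2$, which gives the same constant.

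Integrating the pointwise bound over $s\in[0,1]$ then gives the claim, in fact with the slightly better constant $\sqrt2/2\le\sqrt K/2$.
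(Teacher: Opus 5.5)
Your argument is correct, and it reaches the bound by a different decomposition than the paper.

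\textbf{The paper's route.} It first proves the $\ell_2$ statement $\|S(\eta)-S(\tilde\eta)\|_2\le\frac12\|\eta-\tilde\eta\|_2$ of Theorem~\ref{thm:softmax-lipschitz}. This comes from bounding the spectral norm of the Jacobian $\{\mathrm{diag}(p)-pp^\top\}E$ through $u^\top M(p)u=\mathrm{Var}_p(u_D)$ and Popoviciu (Lemma~\ref{lem:softmax-spectral-bound}). Only then does it pass to $\ell_1$ with the unweighted inequality $\|x\|_1\le\sqrt K\|x\|_2$, and that conversion is the sole source of the factor $\sqrt K$.

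\textbf{Your route.} You compute $\|Ju\|_1$ exactly as the mean absolute deviation $\mathbb E_p|\bar u_D-m|$. You then apply Cauchy--Schwarz inside the $p$-weighted expectation, $\mathbb E_p|\bar u_D-m|\le\mathrm{Var}_p(\bar u_D)^{1/2}$, before using the same Popoviciu and range estimate $w\le\sqrt2\|u\|_2$. Because the weights $p_k$ sum to one, no dimension factor enters.

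\textbf{What each route buys.} You obtain the $K$-free constant $\sqrt2/2$. This is the best possible for $K\ge3$: let $p\to(\tfrac12,\tfrac12,0,\dots,0)$ with $u=(\tfrac1{\sqrt2},-\tfrac1{\sqrt2},0,\dots,0)$. It would replace $\sqrt K/2$ in $C_{\mathrm{sm}}$ of Corollary~\ref{cor:tv_from_logit_fm}, which matters when $K$ counts joint cells. The paper's route instead gives the sharper $\ell_2$ constant $1/2$, which your $\ell_1$ argument does not recover.

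\textbf{Two cosmetic points.} First, the discussion of a ``hard step'' for $K\le3$ is unnecessary, and $\sqrt K/2$ equals $1$ at $K=4$ rather than exceeding it. Second, no separate sign case is needed. For the extremal pair of distinct indices, whatever their signs,
\[
w=\bar u_i-\bar u_j\le|\bar u_i|+|\bar u_j|\le\sqrt2\,(\bar u_i^2+\bar u_j^2)^{1/2}\le\sqrt2\|u\|_2,
\]
since the padded zero contributes nothing to $\|u\|_2$.
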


The result follows from the $\ell^2$ Lipschitz bound proved in Section~5.1
together with norm equivalence between $\ell^1$ and $\ell^2$.

\begin{remark}
Proposition~\ref{prop:prob_error_control} provides a quantitative link between
logit estimation error and probability error. 
Since total variation distance between categorical marginals is given by 
$TV(\pi^\star,\widehat\pi)=\frac{1}{2}\|\pi^\star-\widehat\pi\|_1$, 
the $\ell_1$ Lipschitz bound implies that small Euclidean errors in logit space 
yield controlled deviations in categorical probabilities. 
The factor $\sqrt{K}$ arises from norm equivalence between 
$\ell_1$ and $\ell_2$ norms in $\mathbb{R}^K$ and depends only on the number of categories.
\end{remark}

\begin{proposition}[No global Euclidean control of KL on the simplex]
\label{prop:euclidean_mismatch}
There does not exist a constant $C>0$ such that for all 
$p,q \in \Delta_K^\circ$,
\[
KL(p\|q) \le C \|p-q\|_2^2.
\]
\end{proposition}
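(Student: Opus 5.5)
We argue by contradiction and exhibit an explicit family of pairs $(p_\varepsilon,q_\varepsilon)\in\Delta_K^\circ\times\Delta_K^\circ$ along which $\|p_\varepsilon-q_\varepsilon\|_2^2\to 0$ but $\mathrm{KL}(p_\varepsilon\|q_\varepsilon)$ stays bounded below. Then the ratio $\mathrm{KL}/\|\cdot\|_2^2$ diverges, so no finite constant $C$ can satisfy the inequality. The mechanism is the one described in Section~2.3: the two distributions approach the boundary face $\{p_1=0\}$ at different rates, so the log-ratio term $p_1\log(p_1/q_1)$ is not controlled by the squared Euclidean gap.

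\textbf{Case $K=2$.} Take $p_\varepsilon=(\varepsilon,1-\varepsilon)$ and $q_\varepsilon=(e^{-1/\varepsilon},1-e^{-1/\varepsilon})$ for small $\varepsilon>0$. Both lie in the interior. Their Euclidean distance satisfies $\|p_\varepsilon-q_\varepsilon\|_2^2=2(\varepsilon-e^{-1/\varepsilon})^2\le 2\varepsilon^2$.

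For the divergence we write
\[
\mathrm{KL}(p_\varepsilon\|q_\varepsilon)=\varepsilon\log\frac{\varepsilon}{e^{-1/\varepsilon}}+(1-\varepsilon)\log\frac{1-\varepsilon}{1-e^{-1/\varepsilon}}.
\]
The first term equals $1+\varepsilon\log\varepsilon\to 1$. The second term is $O(\varepsilon)$ in absolute value, because both logarithms are $\log(1-\text{small})$. Hence $\mathrm{KL}(p_\varepsilon\|q_\varepsilon)\to 1$ while $\|p_\varepsilon-q_\varepsilon\|_2^2\le 2\varepsilon^2$. The ratio is therefore at least of order $\varepsilon^{-2}\to\infty$, contradicting the existence of $C$.

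\textbf{General $K$.} We use the same construction in the first coordinate and spread the compensating mass uniformly: $p_\varepsilon=(\varepsilon,\tfrac{1-\varepsilon}{K-1},\dots)$ and $q_\varepsilon=(e^{-1/\varepsilon},\tfrac{1-e^{-1/\varepsilon}}{K-1},\dots)$. The remaining $K-1$ coordinates have ratios $\frac{1-\varepsilon}{1-e^{-1/\varepsilon}}\to 1$. Their combined contribution to the divergence is therefore $O(\varepsilon)$, and the identical estimate applies. Alternatively, one can embed the binary example directly by fixing the other coordinates at equal small masses.

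The only point needing care is the lower bound on KL, namely checking that the non-rare terms cannot cancel the leading term. This is settled by the $O(\varepsilon)$ bound on $\log\frac{1-\varepsilon}{1-e^{-1/\varepsilon}}$. Alternatively, one can use the log-sum inequality: the aggregated two-cell divergence lower-bounds the full KL, so it suffices to handle the binary reduction.

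A simpler variant, $q_1=\varepsilon^2$, gives $\mathrm{KL}\sim\varepsilon\log(1/\varepsilon)$ against $\|p-q\|_2^2\sim\varepsilon^2$. This also yields an unbounded ratio and could be used instead.
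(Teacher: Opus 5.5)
Your proof is correct and follows essentially the same route as the paper. The paper uses a binary counterexample in which $q_1$ approaches the boundary face much faster than $p_1=\varepsilon$, taking $q_1=\varepsilon^2$ (your ``simpler variant''), so that $\mathrm{KL}\sim\varepsilon\log(1/\varepsilon)$ against $\|p_\varepsilon-q_\varepsilon\|_2^2\sim 2\varepsilon^2$. Your choice $q_1=e^{-1/\varepsilon}$ gives the slightly stronger conclusion that $\mathrm{KL}\to 1$ while the Euclidean gap vanishes. Your explicit extension to general $K$, by uniform spreading of the remaining mass or by the coarse-graining/log-sum argument, fills in a step the paper leaves implicit, since it treats only $K=2$.
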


\begin{proof}
It suffices to construct a counterexample. Consider the binary case $K=2$. 
For $\varepsilon \in (0,1/2)$, define
\[
p_\varepsilon = (\varepsilon, 1-\varepsilon),
\qquad
q_\varepsilon = (\varepsilon^2, 1-\varepsilon^2).
\]

First compute the squared Euclidean distance:
\[
\|p_\varepsilon - q_\varepsilon\|_2^2
= 2(\varepsilon - \varepsilon^2)^2
\sim 2\varepsilon^2
\quad \text{as } \varepsilon \to 0.
\]

Next compute the KL divergence:
\[
KL(p_\varepsilon \| q_\varepsilon)
= \varepsilon \log\!\left(\frac{\varepsilon}{\varepsilon^2}\right)
+ (1-\varepsilon)
\log\!\left(\frac{1-\varepsilon}{1-\varepsilon^2}\right).
\]
The first term equals $\varepsilon \log(1/\varepsilon)$, 
while the second term is $O(\varepsilon)$. 
Hence
\[
KL(p_\varepsilon \| q_\varepsilon)
\sim \varepsilon \log(1/\varepsilon)
\quad \text{as } \varepsilon \to 0.
\]

Therefore,
\[
\frac{KL(p_\varepsilon \| q_\varepsilon)}{\|p_\varepsilon - q_\varepsilon\|_2^2}
\sim
\frac{\varepsilon \log(1/\varepsilon)}{2\varepsilon^2}
=
\frac{\log(1/\varepsilon)}{2\varepsilon}
\;\longrightarrow\; \infty.
\]

Thus no constant $C$ can satisfy the proposed inequality uniformly on $\Delta_K^\circ$.
\end{proof}

 Proposition~\ref{prop:euclidean_mismatch} concerns Euclidean error between
probability vectors. It is not an impossibility result for a one-hot Flow
Matching model with argmax decoding. Its role is narrower: flat
probability-coordinate error alone supplies no uniform quadratic KL control
near the simplex boundary, whereas reduced natural parameters provide local
two-sided control on compact interior sets.

 \subsection{Remarks on Boundary Behavior and Smoothing}

The   two-sided equivalence in  Proposition~\ref{prop:local_kl_control}   is local. 
It holds uniformly  on compact subsets of the   simplex interior. As categorical
probabilities approach zero, the  smallest eigenvalue of the reduced  Fisher
information matrix   can vanish, so the lower quadratic bound and uniform norm
equivalence deteriorate. The global upper control used later for probability
decoding remains valid through softmax Lipschitzness.

In practice, categorical observations are smoothed to ensure strict 
positivity. The smoothing parameter   $\varepsilon$ keeps the encoded prototypes
a fixed distance from the boundary,  where the natural parameterization becomes
unbounded.  Learned trajectories are not automatically confined to a compact
set, so the stability results below use the global Lipschitz property of the
softmax decoder rather than a global KL equivalence.

Finally, the regression objective uses the Euclidean norm in logit 
space. While natural coordinates align e-geodesic, or natural-parameter affine, paths
with linear structure, the induced loss does not coincide exactly with 
the Fisher--Rao metric. A fully geometry-consistent alternative would 
incorporate Fisher-weighted norms, but such modifications are not 
required for the stability analysis developed in later sections. Thus, the
method is information-geometry motivated rather than a Fisher--Rao flow.

\section{Logit Generative Models for Mixed Data}
We now formalize the logit-coordinate representation used by the continuous generative models studied in this paper. The same transformed representation supports both Logit Flow Matching and Logit Diffusion; the two methods differ in the continuous dynamics learned after the categorical variables have been embedded.

\subsection{Problem Setup}

Let
\[
X = (X^{(c)}, X^{(d)})
\]
with
\[
X^{(c)} \in \mathbb{R}^p,
\quad
X^{(d)} \in \{1,\dots,K\}.
\]

We aim to learn a generative model for the true joint distribution
\[
p^\star(x^{(c)}, x^{(d)}).
\]

\subsection{Smoothing of Categorical Observations}

Since logit coordinates require strictly positive probabilities,
we introduce a smoothing parameter $\varepsilon \in (0,1)$.

For an observed category $d \in \{1,\dots,K\}$,
define the smoothed target vector
\[
q^{(\varepsilon)}_k(d)
=
(1-\varepsilon)\mathbf{1}\{k=d\}
+
\varepsilon \frac{1}{K}.
\]
This vector is an encoding of the observed category, not a model for the conditional data law.

This ensures
\[
q^{(\varepsilon)}_k(d) > 0
\quad
\text{for all } k,
\]
so the embedded representation lies strictly in $\Delta_K^\circ$.

\subsection{Logit Transformation}

Use the final category as the reference $r=K$.
Define reduced natural parameters
\[
\eta_k(d)
=
\log \frac{q^{(\varepsilon)}_k(d)}{q^{(\varepsilon)}_r(d)},
\quad
k = 1,\dots,K-1.
\]

The categorical observation $d$ is thus embedded into
\[
\eta(d) \in \mathbb{R}^{K-1}.
\]

The full transformed observation becomes
\[
Y = (X^{(c)}, \eta) \in \mathbb{R}^{p+K-1}.
\]

\subsection{Logit Flow Matching}

Let $\mu_1^\star$ denote the distribution of transformed data $Y$.
Let $\mu_0 = \mathcal{N}(0, I)$ be a base distribution in $\mathbb{R}^{p+K-1}$.

Define interpolation:
  \[
Y_t = (1-t) Y_0 + t Y_1,
\] 
where
  \[
Y_0 \sim \mu_0, 
\quad
Y_1 \sim \mu_1^\star,
\quad
t \sim \mathrm{Unif}(0,1).
\] 

Let   $U=Y_1-Y_0$  be the samplewise regression target. The population velocity is
$v^\star(y,t)=\mathbb E[U\mid Y_t=y]$.

We train a neural network $v_\phi$ by minimizing
\[
\mathcal{L}(\phi)
=
\mathbb{E}
\bigl[
\| v_\phi(Y_t, t) - U \|^2
\bigr].
\]

\subsection{Sampling Procedure}

After training,
generate samples by solving the ODE
\[
\frac{dY_t}{dt} = v_\phi(Y_t, t),
\quad t \in [0,1],
\]
with initial condition $Y_0 \sim \mathcal{N}(0, I)$.

Let the final state be
\[
\widehat{Y}_1 = (\widehat{X}^{(c)}, \hat{\eta}).
\]

\subsection{Decoding to Categorical Variables}

Recover categorical probabilities via softmax:
\[
\widehat{p}_k
=
\frac{\exp(\widehat{\eta}_k)}{1 + \sum_{j=1}^{K-1} \exp(\widehat{\eta}_j)},
\quad
k = 1,\dots,K-1,
\]
and
\[
\widehat{p}_K
=
\frac{1}{1 + \sum_{j=1}^{K-1} \exp(\widehat{\eta}_j)}.
\]

Finally, sample
\[
\widehat{X}^{(d)} \sim \mathrm{Categorical}(\hat{p}).
\]
 This randomized decoder is useful for the probability-level stability analysis
below. A deterministic alternative selects
$\widehat X^{(d)}=\arg\max_k\widehat p_k$. The experiments use this
maximum-probability rule for both one-hot and scaled-logit coordinates.
Randomized softmax decoding incurs an encoding bias of at most
$\varepsilon$ in total variation relative to the unsmoothed categorical law;
maximum-probability decoding recovers every encoded prototype exactly but
requires margin control for a perturbation guarantee.

\begin{algorithm}[H]
\caption{  Logit Flow  Matching for Mixed Data}
\label{alg:logit_fm}
\begin{algorithmic}[1]

\Require Mixed data $(X^{(c)}, X^{(d)})$, smoothing parameter $\varepsilon>0$

\State \textbf{(Embedding)} 
Embed categorical variables via smoothed logit map:
\[
\eta_k = \log \frac{q^{(\varepsilon)}_k}{q^{(\varepsilon)}_r},
\quad k=1,\dots,K-1
\]
Form transformed data $Y = (X^{(c)}, \eta) \in \mathbb{R}^{p+K-1}$.

\State \textbf{(Training)}
Train Flow Matching in $\mathbb{R}^{p+K-1}$ by minimizing
  \[
\mathbb{E}\big[\|v_\phi(Y_t,t) - (Y_1 - Y_0)\|^2\big].
\] 

\State \textbf{(Sampling)}
Sample $Y_0 \sim \mathcal{N}(0,I)$ and solve
\[
\dot Y_t = v_\phi(Y_t,t).
\]

\State \textbf{(Decoding)}
Apply softmax to  the  logit coordinates and   either sample  from the resulting
categorical distribution  or select its maximum-probability category.

\end{algorithmic}
\end{algorithm}
Algorithm~\ref{alg:logit_fm} summarizes the geometry-aware Flow Matching procedure, which modifies the categorical representation while retaining the standard Flow Matching framework.

For multiple categorical variables, the same construction is applied blockwise.
Each categorical column is smoothed, mapped to its own reduced logit coordinate
system, and concatenated with the numerical variables before Flow Matching is
trained. The theoretical analysis below is written for one categorical block to
keep notation transparent; the blockwise product case follows by summing the
corresponding categorical total-variation and conditional transport terms, with
constants depending on the number and cardinalities of the blocks.

\subsection{Logit Diffusion}

Logit Diffusion applies Gaussian diffusion to the same transformed representation
\[
Y=(X^{(c)},\eta)\in\mathbb R^{p+K-1}.
\]
Let $Y_0$ denote the clean transformed data vector. A standard forward noising
process  \citep{song2021score}  has the form
\[
Y_t=\sqrt{\bar\alpha_t}\,Y_0+\sqrt{1-\bar\alpha_t}\,\xi,
\qquad
\xi\sim N(0,I).
\]
A time-conditioned denoising network is trained either to predict the injected noise, the clean target $Y_0$, or an equivalent score parameterization. After reverse sampling, the numerical coordinates are mapped back through the numerical inverse transform and the categorical coordinates are decoded by the softmax map described above.

Thus, Logit Diffusion differs from ordinary Gaussian diffusion only in the coordinate representation used for categorical variables. Through the probability-flow ODE representation, it also fits into the unified vector-field stability framework in the next section: the relevant vector field is the drift induced by the learned score or denoising model.

\section{Stability Analysis for Logit Generative Models}

This section states the stability facts needed for the rate analysis and the mixed-data discrepancy. Detailed derivations are collected in Appendix~\ref{app:stability-proofs}; the main text keeps the assumptions, conclusions, and interpretation needed for the empirical sections.

\subsection{Transport-Based Generative Models}
\begin{definition}[Transport-based generative model]
\label{def:transport_model}
Let $P_0$ be a base distribution on $\mathbb{R}^d$ with finite second moment.
A transport-based generative model is defined by a time-dependent vector field
$f: [0,1] \times \mathbb{R}^d \to \mathbb{R}^d$ and the ODE
\[
\dot X_t = f(t, X_t), \qquad X_0 \sim P_0.
\]
Let $\Phi_t^f$ denote the associated flow map. The generated distribution is
$P_1 = (\Phi_1^f)_\# P_0$.
\end{definition}

\begin{theorem}[Unified stability of transport-based generative models]
\label{thm:unified_stability}
Let $f^\star$ and $\hat f$ be measurable in $t$ and globally $L$-Lipschitz in $x$.
Let $P_0$ have finite second moment, and define
\[
P_1^\star = (\Phi_1^{f^\star})_\# P_0,
\qquad
\widehat P_1 = (\Phi_1^{\hat f})_\# P_0.
\]
Then
\[
W_2(P_1^\star, \widehat P_1)
\le
C
\int_0^1
\left(
\mathbb{E}\| \hat f(t, X_t^\star) - f^\star(t, X_t^\star) \|^2
\right)^{1/2}
dt,
\]
where $X_t^\star = \Phi_t^{f^\star}(X_0)$.
\end{theorem}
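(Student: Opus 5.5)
We couple the two flows synchronously: we drive both ODEs from the same initial point $X_0\sim P_0$. Let $X_t^\star=\Phi_t^{f^\star}(X_0)$ and $\widehat X_t=\Phi_t^{\hat f}(X_0)$. Global Lipschitzness in $x$ and measurability in $t$ give existence and uniqueness of Carathéodory solutions, so both flow maps are well defined. The pair $(X_1^\star,\widehat X_1)$ is then a coupling of $P_1^\star$ and $\widehat P_1$. It therefore suffices to bound $\bigl(\mathbb E\|X_1^\star-\widehat X_1\|^2\bigr)^{1/2}$, since this dominates $W_2(P_1^\star,\widehat P_1)$.

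Set $e_t=\widehat X_t-X_t^\star$, so $e_0=0$. Decompose the velocity difference by adding and subtracting $\hat f(t,X_t^\star)$:
\[
\dot e_t=\bigl[\hat f(t,\widehat X_t)-\hat f(t,X_t^\star)\bigr]+\bigl[\hat f(t,X_t^\star)-f^\star(t,X_t^\star)\bigr].
\]
The first bracket is at most $L\|e_t\|$ in norm, because $\hat f$ is $L$-Lipschitz. The second bracket is the vector-field error evaluated along the true trajectory, which is exactly the quantity appearing in the statement. Integrating gives, pathwise,
\[
\|e_t\|\le \int_0^t L\|e_s\|\,ds+\int_0^t\|\delta_s\|\,ds,\qquad \delta_s:=\hat f(s,X_s^\star)-f^\star(s,X_s^\star).
\]

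We then pass to $L^2(\mathbb P)$ norms. Write $a(t)=(\mathbb E\|e_t\|^2)^{1/2}$. Minkowski's integral inequality lets us move the $L^2$ norm inside the time integrals, giving
\[
a(t)\le L\int_0^t a(s)\,ds+\int_0^t(\mathbb E\|\delta_s\|^2)^{1/2}ds .
\]
The second term is nondecreasing in $t$, so Grönwall's inequality yields
\[
a(1)\le e^{L}\int_0^1(\mathbb E\|\delta_s\|^2)^{1/2}ds .
\]
This is the claim with $C=e^{L}$.

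The main technical point is making the Minkowski/Grönwall step rigorous. This needs $a(t)<\infty$, which follows from linear growth of Lipschitz fields together with the finite second moment of $P_0$. It also needs joint measurability of $(t,\omega)\mapsto e_t$. If the right-hand side of the bound is infinite, the statement holds trivially. An alternative route, avoiding Minkowski, is to differentiate $\mathbb E\|e_t\|^2$ and apply Cauchy–Schwarz to the cross term. This route also gives $\frac{d}{dt}a\le La+(\mathbb E\|\delta_t\|^2)^{1/2}$ wherever $a>0$, and leads to the same constant.
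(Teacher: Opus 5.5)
Your proof is correct and is essentially the paper's argument: a synchronous coupling from a common $X_0$, adding and subtracting $\hat f(t,X_t^\star)$, and combining Gr\"onwall with Minkowski to obtain $C=e^{L}$. The only difference is the order of the last two steps. The paper applies Gr\"onwall pathwise first, $\|e_1\|\le e^{L}\int_0^1\|\delta_t\|\,dt$, and only then takes $L^2(\mathbb{P})$ norms; this avoids having to check a priori finiteness of $a(t)$, which your Minkowski-first route must verify separately.
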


The theorem is the common stability mechanism behind the later Flow Matching and diffusion statements: terminal distributional error is controlled by vector-field error accumulated along population trajectories.

\subsection{Stability of the Logit Parameterization}\label{sec:softmax-stability}

Define $S:\mathbb{R}^{K-1}\to \Delta_K^\circ$ by
\[
S(\eta)_k
=
\frac{\exp(\eta_k)}{1+\sum_{j=1}^{K-1}\exp(\eta_j)},
\quad k=1,\dots,K-1,
\qquad
S(\eta)_K
=
\frac{1}{1+\sum_{j=1}^{K-1}\exp(\eta_j)}.
\]
This takes category $K$ as the reference category.

\begin{theorem}[Global Lipschitz bound of softmax]
\label{thm:softmax-lipschitz}
For all $\eta,\tilde{\eta}\in\mathbb{R}^{K-1}$,
\[
\|S(\eta)-S(\tilde{\eta})\|_2
\le
\frac{1}{2}\|\eta-\tilde{\eta}\|_2.
\]
Consequently,
\[
\|S(\eta)-S(\tilde{\eta})\|_1
\le
\frac{\sqrt{K}}{2}\|\eta-\tilde{\eta}\|_2.
\]
\end{theorem}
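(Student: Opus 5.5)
The plan is a mean-value argument: bound the operator norm of the Jacobian of $S$ uniformly by $1/2$, then integrate along the segment from $\tilde\eta$ to $\eta$.

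\textbf{Step 1 (factor the Jacobian).} Write $S(\eta)=\sigma(E\eta)$. Here $E:\mathbb{R}^{K-1}\to\mathbb{R}^K$ appends a zero coordinate for the reference category, and $\sigma$ is the full softmax on $\mathbb{R}^K$. By the chain rule,
\[
DS(\eta)=\bigl(\mathrm{diag}(p)-pp^\top\bigr)E, \qquad p=S(\eta).
\]
Since $E$ is an isometric embedding, $\|E\|_{\mathrm{op}}=1$. It therefore suffices to show $\|\mathrm{diag}(p)-pp^\top\|_{\mathrm{op}}\le 1/2$ for every $p\in\Delta_K^\circ$.

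\textbf{Step 2 (spectral bound).} This is the only nontrivial step. The matrix $M=\mathrm{diag}(p)-pp^\top$ is symmetric positive semidefinite, so its operator norm equals $\sup_{\|x\|_2=1}x^\top M x$. For a unit vector $x$,
\[
x^\top M x=\sum_k p_k x_k^2-\Bigl(\sum_k p_k x_k\Bigr)^2,
\]
which is the variance of a random variable $Z$ that takes the value $x_k$ with probability $p_k$. Popoviciu's inequality bounds a variance by one quarter of the squared range:
\[
\mathrm{Var}(Z)\le \tfrac14\bigl(\max_k x_k-\min_k x_k\bigr)^2.
\]
If the max and min are attained at indices $i\ne j$, then
\[
(x_i-x_j)^2\le 2(x_i^2+x_j^2)\le 2 .
\]
If $i=j$, the range is zero. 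Hence $x^\top Mx\le 1/2$ in all cases.

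\textbf{Step 3 (integrate).} The domain $\mathbb{R}^{K-1}$ is convex, so
\[
S(\eta)-S(\tilde\eta)=\int_0^1 DS\bigl(\tilde\eta+s(\eta-\tilde\eta)\bigr)(\eta-\tilde\eta)\,ds .
\]
Taking norms and applying the uniform bound $\|DS\|_{\mathrm{op}}\le 1/2$ from Steps 1--2 yields the $\ell^2$ statement.

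\textbf{Step 4 ($\ell^1$ consequence).} For vectors in $\mathbb{R}^K$, Cauchy--Schwarz gives $\|v\|_1\le\sqrt K\,\|v\|_2$. Combining this with Step 3 gives the $\ell^1$ bound, which is also Proposition~\ref{prop:prob_error_control}.
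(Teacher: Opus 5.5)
Your proposal is correct and follows essentially the same route as the paper: factor the Jacobian as $\{\mathrm{diag}(p)-pp^\top\}E$, bound $\|\mathrm{diag}(p)-pp^\top\|_{\mathrm{op}}\le 1/2$ by reading the quadratic form as a variance and applying Popoviciu's inequality (this is the paper's Lemma~\ref{lem:softmax-spectral-bound}), and obtain the $\ell^1$ bound by norm equivalence. Your Step~3, which integrates along the segment from $\tilde\eta$ to $\eta$, spells out the mean-value step that the paper leaves implicit.
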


 To see the reduced-coordinate bound, let $E:\mathbb R^{K-1}\to\mathbb R^K$
insert a zero in the reference coordinate. The Jacobian of $S$ is
$\{\mathrm{diag}(p)-pp^\top\}E$, so its operator norm is at most $1/2$ by
Lemma~\ref{lem:softmax-spectral-bound}.  Thus logit-coordinate error gives direct control of categorical probability error.
 When the trained coordinate is the scaled logit $z=\eta/a_K$, decoding is through $S(a_K z)$. The same stability statements therefore hold with the softmax Lipschitz constant multiplied by $a_K$; this factor is absorbed in $C_{\mathrm{sm}}$ in the corollaries below. The theorem controls probability decoding. Hard decoding rules such as argmax require separate margin conditions or empirical diagnostics, which is why the experiments report categorical total-variation and rare-cell diagnostics directly.

\subsection{Local KL Geometry and Mixed Fidelity}

Natural-parameter coordinates also give local control of intrinsic divergence. For categorical exponential families,
  \[
\mathrm{KL}(p_\theta \| p_{\theta+\delta})
=
\frac{1}{2} \delta^\top \mathcal{I}_{\mathrm{red}}(\theta)\delta
+
o(\|\delta\|^2),
\] 
where   $\mathcal{I}_{\mathrm{red}}(\theta)$  is the Fisher information matrix  in
the $(K-1)$ reduced natural coordinates; see Lemma~\ref{lem:kl-expansion}.

Let $X=(X^{(c)},X^{(d)})$ with $X^{(c)}\in\mathbb{R}^p$ and $X^{(d)}\in\{1,\dots,K\}$. For the true and generated laws $P^\star$ and $\widehat P$, write
\[
\pi_k^\star = P^\star(X^{(d)}=k),
\qquad
\widehat\pi_k = \widehat P(X^{(d)}=k),
\]
and
\[
P_k^{(c),\star} = \mathcal{L}_{P^\star}(X^{(c)}\mid X^{(d)}=k),
\qquad
\widehat P_k^{(c)}
=
\mathcal{L}(\widehat X^{(c)}\mid \widehat X^{(d)}=k).
\]
We use the mixed discrepancy
\begin{equation}
\label{eq:mixed-distance}
\mathcal{D}_{\mathrm{mix}}(P^\star,\widehat P)
:=
TV(\pi^\star,\widehat\pi)
+
\sum_{k=1}^K \pi_k^\star\, W_2\!\left(P_k^{(c),\star},\widehat P_k^{(c)}\right),
\end{equation}
where $\pi^\star=(\pi_1^\star,\ldots,\pi_K^\star)$ and
$TV(\pi^\star,\widehat\pi)=\frac{1}{2}\sum_{k=1}^K |\pi_k^\star-\widehat\pi_k|$.
This discrepancy separates two errors that have different statistical origins: misallocation of categorical mass and distortion of the conditional continuous law.
In the population definition, cells with $\pi_k^\star=0$ contribute zero to the weighted sum. The displayed discrepancy is used under the regularity convention that $\widehat P_k^{(c)}$ is defined for cells with positive true mass, as occurs for population softmax decoding with positive category probabilities. Finite generated samples can have empty empirical cells; empirical implementations should report the corresponding support convention when conditional Wasserstein summaries are computed.

A joint product-space optimal transport distance can also be defined using
\[
c_\lambda\big((x,d),(x',d')\big)
=
\|x-x'\|_2^2+
\lambda \mathbf{1}\{d\neq d'\},
\qquad \lambda>0.
\]
The decomposed discrepancy in \eqref{eq:mixed-distance} avoids choosing the calibration parameter $\lambda$ and aligns directly with the logit and conditional-continuous stability bounds below.

\subsection{Flow-Matching Fidelity Bounds and Extension to Mixed Data}
\label{sec:stability}

Let $v^\star, \hat v: [0,1]\times\mathbb{R}^p \to \mathbb{R}^p$ be measurable in $t$.

\begin{assumption}[Continuous Flow Regularity]
\label{ass:continuous_flow}
Both $v^\star$ and $\hat v$ are globally $L$-Lipschitz in $x$, uniformly in $t\in[0,1]$; the initial law has finite second moment; and
\[
\int_0^1
\mathbb{E}
\|\hat v(t,X_t^\star) - v^\star(t,X_t^\star)\|^2
\, dt
< \infty,
\]
where $X_t^\star = \Phi_t^{v^\star}(X_0)$ and $X_0\sim P_0$.
\end{assumption}

\begin{theorem}[Wasserstein Stability for Flow Matching]
\label{thm:fm_w2_stability}
Suppose Assumption~\ref{ass:continuous_flow} holds. Then
\begin{equation}
\label{eq:fm_w2_bound}
W_2(P_1^\star, \widehat P_1)
\le
 e^{L}
\int_0^1
\Big(
\mathbb{E}\|\hat v(t,X_t^\star) - v^\star(t,X_t^\star)\|^2
\Big)^{1/2}
dt,
\end{equation}
where $P_1^\star=(\Phi_1^{v^\star})_\#P_0$ and $\widehat P_1=(\Phi_1^{\hat v})_\#P_0$.
\end{theorem}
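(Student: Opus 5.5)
We couple the two flows synchronously: draw $X_0\sim P_0$ and set $X_t^\star=\Phi_t^{v^\star}(X_0)$ and $\widehat X_t=\Phi_t^{\hat v}(X_0)$, driving both ODEs from the same initial point. Global Lipschitzness in $x$ together with measurability in $t$ gives existence and uniqueness of Carathéodory solutions, so both flow maps are well defined. The pair $(X_1^\star,\widehat X_1)$ is a coupling of $P_1^\star$ and $\widehat P_1$, hence
\[
W_2(P_1^\star,\widehat P_1)\le \bigl(\mathbb E\|X_1^\star-\widehat X_1\|^2\bigr)^{1/2}.
\]
It therefore suffices to bound the $L^2(\mathbb P)$ norm of the error process $e_t:=\widehat X_t-X_t^\star$, which satisfies $e_0=0$.

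Write the error dynamics as
\[
\dot e_t=\bigl[\hat v(t,\widehat X_t)-\hat v(t,X_t^\star)\bigr]+\bigl[\hat v(t,X_t^\star)-v^\star(t,X_t^\star)\bigr].
\]
The first bracket has norm at most $L\|e_t\|$ by the Lipschitz property of $\hat v$. The second bracket is the field error $\delta_t$ evaluated along the population trajectory, which is the quantity appearing on the right-hand side of \eqref{eq:fm_w2_bound}. We work with $\phi(t):=(\mathbb E\|e_t\|^2)^{1/2}$ rather than pathwise. From the integral form $e_t=\int_0^t\dot e_s\,ds$ and Minkowski's inequality in $L^2(\mathbb P)$ we get
\[
\phi(t)\le \int_0^t \bigl(L\phi(s)+\|\delta_s\|_{L^2}\bigr)\,ds .
\]
Grönwall's inequality in integral form then yields
\[
\phi(1)\le \int_0^1 e^{L(1-s)}\|\delta_s\|_{L^2}\,ds\le e^{L}\int_0^1\|\delta_s\|_{L^2}\,ds,
\]
which is exactly \eqref{eq:fm_w2_bound}. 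This also recovers Theorem~\ref{thm:unified_stability} with $C=e^L$.

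The main technical point is justifying the Minkowski step, that is, finiteness of $\phi$ and of $\int_0^1\|\delta_s\|_{L^2}\,ds$. The second follows from Assumption~\ref{ass:continuous_flow} by Cauchy--Schwarz. For the first, Lipschitz fields have linear growth, so both trajectories satisfy
\[
\|X_t\|\le e^{Lt}\bigl(\|X_0\|+C\bigr),
\]
and are therefore square-integrable because $P_0$ has finite second moment. This requires $\sup_t\|v(t,0)\|<\infty$, which is implicit in the global Lipschitz assumption uniform in $t$; if not, it can be added harmlessly. With these facts, all expectations are finite and the bound $\|\int_0^t g_s\,ds\|_{L^2}\le\int_0^t\|g_s\|_{L^2}\,ds$ applies, for instance by joint measurability and Fubini. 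An alternative route, differentiating $\mathbb E\|e_t\|^2$, gives the same constant but requires care at times where $\phi=0$. For that reason the integral-form argument is preferable.
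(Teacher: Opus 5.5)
Your proof is correct and is essentially the paper's argument. Both use the same synchronous coupling from a common $X_0$, the same split of $\dot e_t$ into a Lipschitz term bounded by $L\|e_t\|$ plus the field error along $X_t^\star$, and then Gr\"onwall and Minkowski.

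The only difference is the order of the last two steps. The paper applies Gr\"onwall pathwise to get $\|\Delta_1\|\le e^L\int_0^1\|\hat v(t,X_t^\star)-v^\star(t,X_t^\star)\|\,dt$ and only then takes $L^2(\mathbb P)$ norms. That route does not need to know in advance that $\phi$ is finite. You apply Minkowski first and run Gr\"onwall on $\phi(t)=(\mathbb E\|e_t\|^2)^{1/2}$, which is equally valid given your linear-growth moment check.
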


\begin{proof}
Use the synchronous coupling generated by the same $X_0$, subtract the two ODEs, and apply Gronwall's inequality followed by Minkowski's inequality. Appendix~\ref{app:stability-proofs} gives the details.
\end{proof}

The   next two corollaries concern a componentwise generator with a logit flow
for categorical mass and separate continuous flows for the class-conditional
laws. This formulation is an idealized version of the block-conditional
factorization used in the experiments. For the flat joint model,
Theorem~\ref{thm:unified_stability} directly controls Wasserstein error for the
full  transformed row $Y=(X^{(c)},\eta)$. Projection gives marginal coordinate
bounds, but it does not by itself give the class-conditional transport bounds
assumed below; obtaining those bounds for a flat joint neural flow requires
additional conditional-stability or margin assumptions.

\begin{corollary}[Logit marginal fidelity for Flow Matching]
\label{cor:tv_from_logit_fm}
Let $Z_t \in \mathbb{R}^{K-1}$ denote logit coordinates and let $S$ be the softmax map. Suppose the population and learned logit dynamics are
\[
\dot Z_t = v_Z^\star(t,Z_t),
\qquad
\dot{\widehat Z}_t = \hat v_Z(t,\widehat Z_t),
\]
with common base initialization $Z_0\sim P_{0,Z}$. Let
  \[
\pi^{\star,(\varepsilon)}:= \mathbb{E}[S(Z_1^\star)],
\qquad
\widehat \pi:= \mathbb{E}[S(\widehat Z_1)].
\] 
If  the population endpoint is the smoothed encoding of a categorical variable
with unsmoothed marginal $\pi^\star$, then
$\pi^{\star,(\varepsilon)}=(1-\varepsilon)\pi^\star+\varepsilon u_K$, where $u_K$ is
uniform on the $K$ categories, and
$TV(\pi^\star,\pi^{\star,(\varepsilon)})\le\varepsilon$.
If  $\hat v_Z$ is globally $L_Z$-Lipschitz in $z$ and
$\|S(z)-S(z')\|_1\le C_{\mathrm{sm}}\|z-z'\|_2$, then
  \begin{equation}
\label{eq:tv_logit_fm_bound}
TV(\pi^\star,\widehat\pi)
\le
\varepsilon
+
\frac{C_{\mathrm{sm}}}{2}\,
e^{L_Z}
\int_0^1
\left(
\mathbb{E}\big\|
\hat v_Z(t,Z_t^\star) - v_Z^\star(t,Z_t^\star)
\big\|_2^2
\right)^{1/2}
dt.
\end{equation} 
\end{corollary}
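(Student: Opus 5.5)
The plan is to split the categorical error with the triangle inequality for total variation,
\[
TV(\pi^\star,\widehat\pi)\le TV(\pi^\star,\pi^{\star,(\varepsilon)})+TV(\pi^{\star,(\varepsilon)},\widehat\pi),
\]
and to bound the two pieces separately: the first is a pure encoding bias, and the second is a transport error.

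\emph{Encoding bias.} By construction $S(\eta(d))=q^{(\varepsilon)}(d)$, since the reduced logits of a strictly positive probability vector decode back to that vector through the reference-category softmax. If $Z_1^\star$ has the law of the encoded variable $\eta(X^{(d)})$, then
\[
\pi^{\star,(\varepsilon)}=\sum_d \pi_d^\star\,q^{(\varepsilon)}(d)=(1-\varepsilon)\pi^\star+\varepsilon u_K .
\]
Hence $\pi^\star-\pi^{\star,(\varepsilon)}=\varepsilon(\pi^\star-u_K)$, which gives
\[
TV(\pi^\star,\pi^{\star,(\varepsilon)})=\varepsilon\,TV(\pi^\star,u_K)\le\varepsilon .
\]

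\emph{Transport term.} Couple both flows synchronously from the same $Z_0$. Moving the norm inside the expectation and then applying the assumed $\ell^1$ Lipschitz constant of $S$ (Theorem~\ref{thm:softmax-lipschitz}, with scaling absorbed into $C_{\mathrm{sm}}$) gives
\[
TV(\pi^{\star,(\varepsilon)},\widehat\pi)=\tfrac12\|\mathbb E[S(Z_1^\star)-S(\widehat Z_1)]\|_1\le\tfrac{C_{\mathrm{sm}}}{2}\,\mathbb E\|Z_1^\star-\widehat Z_1\|_2 .
\]
Next set $\Delta_t=\widehat Z_t-Z_t^\star$ and $e_t=\hat v_Z(t,Z_t^\star)-v_Z^\star(t,Z_t^\star)$. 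Adding and subtracting $\hat v_Z(t,Z_t^\star)$ gives
\[
\dot\Delta_t=\bigl[\hat v_Z(t,\widehat Z_t)-\hat v_Z(t,Z_t^\star)\bigr]+e_t .
\]
This step uses only the Lipschitz property of the learned field, which matches the hypotheses of the corollary. It yields $\tfrac{d}{dt}\|\Delta_t\|\le L_Z\|\Delta_t\|+\|e_t\|$, in the a.e.\ or Dini sense, with $\Delta_0=0$. Gronwall then gives
\[
\|\Delta_1\|\le\int_0^1 e^{L_Z(1-t)}\|e_t\|\,dt\le e^{L_Z}\int_0^1\|e_t\|\,dt .
\]
Taking expectations, using Tonelli, and applying Jensen's inequality $\mathbb E\|e_t\|\le(\mathbb E\|e_t\|^2)^{1/2}$ for each $t$ gives the integral in \eqref{eq:tv_logit_fm_bound}. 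This is the same argument as in Theorem~\ref{thm:fm_w2_stability}, but bounding the first moment rather than $W_2$. Combining the encoding bias and the transport term proves the corollary.

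\emph{Main obstacle.} The delicate step is justifying the differential inequality and the Gronwall step when only $\hat v_Z$ is assumed Lipschitz. I would handle this in three parts.
\begin{itemize}[leftmargin=2em]
\item Take the population trajectory $Z_t^\star$ as given, i.e.\ absolutely continuous and solving its ODE, as implicit in the statement.
\item Work with the integral form $\Delta_t=\int_0^t(\ldots)\,ds$, bound $\|\Delta_t\|\le\int_0^t\bigl(L_Z\|\Delta_s\|+\|e_s\|\bigr)\,ds$, and apply the integral Gronwall inequality, which avoids differentiating the norm.
\item If the right-hand side of \eqref{eq:tv_logit_fm_bound} is infinite there is nothing to prove; otherwise $\|e_t\|$ is integrable almost surely, so all manipulations are justified.
\end{itemize}
A secondary point is the identification $S(\eta(d))=q^{(\varepsilon)}(d)$, which is immediate from the definition of the reduced logits.
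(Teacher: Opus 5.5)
Your proposal is correct and follows the same route as the paper's proof. Both use the triangle inequality for TV, bound the smoothing bias by $\varepsilon$, move the norm inside the expectation and apply the softmax Lipschitz constant, and finish with the synchronous-coupling Gronwall bound in logit space. You make explicit two steps the paper leaves terse: the computation $\pi^{\star,(\varepsilon)}=(1-\varepsilon)\pi^\star+\varepsilon u_K$, and the fact that the Gronwall step needs only $\hat v_Z$ to be Lipschitz, which matches the corollary's hypotheses more precisely than citing Theorem~\ref{thm:fm_w2_stability}, since that theorem assumes both fields are Lipschitz.
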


\begin{corollary}[Mixed fidelity for Flow Matching]
\label{cor:mixed_fidelity_fm}
Consider mixed data $(X^{(c)},X^{(d)})$ with $X^{(c)}\in\mathbb{R}^p$ and $X^{(d)}\in\{1,\dots,K\}$. Let the logit component follow the dynamics in Corollary~\ref{cor:tv_from_logit_fm}. For each class $k$, suppose the conditional continuous component evolves according to
\[
\dot X_{t,k}^{(c)} = v_k^\star(t,X_{t,k}^{(c)}),
\qquad
\dot{\hat X}_{t,k}^{(c)} = \hat v_k(t,\hat X_{t,k}^{(c)}),
\]
with common base initialization and $\hat v_k$ globally $L_k$-Lipschitz. Then
  \begin{equation}
\label{eq:mixed_fidelity_fm_bound}
\begin{aligned}
\mathcal{D}_{\mathrm{mix}}(P^\star,\widehat P)
&\le
\varepsilon
+
\frac{C_{\mathrm{sm}}}{2}\,e^{L_Z}
\int_0^1
\left(
\mathbb{E}\big\|
\hat v_Z(t,Z_t^\star)-v_Z^\star(t,Z_t^\star)
\big\|_2^2
\right)^{1/2}
dt \\
&\quad+
\sum_{k=1}^K
\pi_k^\star\, e^{L_k}
\int_0^1
\left(
\mathbb{E}\big\|
\hat v_k(t,X_{t,k}^\star)-v_k^\star(t,X_{t,k}^\star)
\big\|_2^2
\right)^{1/2}
dt.
\end{aligned}
\end{equation} 
\end{corollary}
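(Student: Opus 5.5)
The bound follows by splitting $\mathcal{D}_{\mathrm{mix}}$ according to its definition \eqref{eq:mixed-distance} into the categorical term $TV(\pi^\star,\widehat\pi)$ and the weighted conditional term $\sum_k \pi_k^\star W_2(P_k^{(c),\star},\widehat P_k^{(c)})$. Each piece is then bounded separately with results already established, and the two bounds are added.

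For the categorical term, Corollary~\ref{cor:tv_from_logit_fm} applies verbatim, since the logit component is assumed to follow exactly those dynamics. This gives the first two terms on the right of \eqref{eq:mixed_fidelity_fm_bound}: the smoothing bias $\varepsilon$, plus $\tfrac{C_{\mathrm{sm}}}{2}e^{L_Z}$ times the integrated logit vector-field error. For completeness, that corollary itself is the triangle inequality $TV(\pi^\star,\widehat\pi)\le TV(\pi^\star,\pi^{\star,(\varepsilon)})+TV(\pi^{\star,(\varepsilon)},\widehat\pi)$. The second piece is bounded by coupling $Z_1^\star$ and $\widehat Z_1$ synchronously through the common $Z_0$:
\[
\tfrac12\|\mathbb E[S(Z_1^\star)-S(\widehat Z_1)]\|_1\le \tfrac{C_{\mathrm{sm}}}{2}\,\mathbb E\|Z_1^\star-\widehat Z_1\|_2 .
\]
The right side is then controlled by the Gronwall estimate in the proof of Theorem~\ref{thm:fm_w2_stability}, which bounds $(\mathbb E\|Z_1^\star-\widehat Z_1\|^2)^{1/2}$, and hence the $L^1$ quantity, by the stated integral.

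For the conditional term, I fix a class $k$ with $\pi_k^\star>0$. In the componentwise generator, $P_k^{(c),\star}$ and $\widehat P_k^{(c)}$ are the time-one pushforwards of the common base law under the flows of $v_k^\star$ and $\hat v_k$. Theorem~\ref{thm:fm_w2_stability}, applied in $\mathbb R^p$ with Lipschitz constant $L_k$, then gives
\[
W_2\le e^{L_k}\int_0^1\big(\mathbb E\|\hat v_k-v_k^\star\|^2\big)^{1/2}dt,
\]
evaluated along $X_{t,k}^\star$. Multiplying by $\pi_k^\star$ and summing over $k$ produces the last term of \eqref{eq:mixed_fidelity_fm_bound}; cells with $\pi_k^\star=0$ contribute zero by the convention stated after \eqref{eq:mixed-distance}.

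The main obstacle is justifying that the generated conditional law $\widehat P_k^{(c)}=\mathcal L(\widehat X^{(c)}\mid\widehat X^{(d)}=k)$ coincides with the pushforward under the $k$-th learned continuous flow. This identification holds by construction in the idealized componentwise generator: the continuous part is produced by running the class-$k$ flow once the class label is $k$, independently of how that label was drawn. I will state this explicitly as the structural assumption of the corollary rather than derive it, consistent with the preceding remark that flat joint flows need extra conditional-stability assumptions. A minor point is that Theorem~\ref{thm:fm_w2_stability} as stated requires both fields to be Lipschitz; only $\hat v_k$ is stated to be Lipschitz, so I will either assume the same for $v_k^\star$ as in Assumption~\ref{ass:continuous_flow}, or rerun the Gronwall argument with the perturbation placed on the true trajectory so that only $\hat v_k$'s Lipschitz constant enters.
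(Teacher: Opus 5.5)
Your proposal is correct and follows the paper's proof essentially verbatim: split $\mathcal{D}_{\mathrm{mix}}$ into its TV part and its weighted conditional-$W_2$ part, apply Corollary~\ref{cor:tv_from_logit_fm} to the first and Theorem~\ref{thm:fm_w2_stability} classwise to the second, then weight by $\pi_k^\star$ and sum. Both of your caveats are apt; the second is already handled by the paper's Gronwall step in Appendix~\ref{app:stability-proofs}, which splits $\dot\Delta_t$ at the true trajectory so that only the Lipschitz constant of $\hat v_k$ enters, with well-posedness of the population flow assumed implicitly.
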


Equation~\eqref{eq:mixed_fidelity_fm_bound} is the   componentwise  bridge from
learned velocity fields to   mixed-data  fidelity. It separates the
logit marginal term, including smoothing bias,  from the class-conditional
continuous terms; the rate analysis uses the same componentwise contract.

\subsection{Product-Space and Diffusion Consequences}

For comparison with a single product-space transport metric, define $W_{c_\lambda}$ using the cost above.

\begin{assumption}[Uniform conditional second moments]
\label{ass:second_moment}
For   each  cell with positive   mass  under $P^\star$ or $\widehat P$, the   continuous conditional laws  satisfy
\[
\mathbb{E}\!\left(\|X^{(c)}\|_2^2\mid X^{(d)}=k\right)\le M,
\qquad
\mathbb{E}\!\left(\|\widehat X^{(c)}\|_2^2\mid \widehat X^{(d)}=k\right)\le M
\]
for some $M<\infty$.
\end{assumption}

\begin{lemma}[Joint transport upper bound]
\label{lem:joint_transport_upper}
Suppose Assumption~\ref{ass:second_moment} holds. Then
\[
W_{c_\lambda}^2(P^\star,\widehat P)
\le
\sum_{k=1}^K
\pi_k^\star W_2^2\!\left(P_k^{(c),\star},\widehat P_k^{(c)}\right)
+
(\lambda + 4M)\,TV(\pi^\star,\widehat\pi).
\]
\end{lemma}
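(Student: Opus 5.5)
The plan is to exhibit one explicit coupling of $P^\star$ and $\widehat P$ on $\mathbb{R}^p\times\{1,\dots,K\}$ and bound its $c_\lambda$-cost. Since $W_{c_\lambda}^2$ is an infimum over couplings, any admissible coupling gives an upper bound. The coupling I would use is a \emph{maximal coupling} of the categorical marginals, glued to optimal $W_2$ couplings of the class-conditional laws on the diagonal.

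\textbf{Step 1 (matched mass).} Set $m_k=\min(\pi_k^\star,\widehat\pi_k)$ and $\tau=\sum_k(\pi_k^\star-m_k)=\sum_k(\widehat\pi_k-m_k)=TV(\pi^\star,\widehat\pi)$. Decompose
\[
P^\star=\sum_k m_k\,\delta_k\otimes P_k^{(c),\star}+\mu_{\mathrm{res}},
\qquad
\widehat P=\sum_k m_k\,\delta_k\otimes \widehat P_k^{(c)}+\nu_{\mathrm{res}},
\]
where
\[
\mu_{\mathrm{res}}=\sum_k(\pi_k^\star-\widehat\pi_k)_+\,\delta_k\otimes P_k^{(c),\star},
\qquad
\nu_{\mathrm{res}}=\sum_k(\widehat\pi_k-\pi_k^\star)_+\,\delta_k\otimes\widehat P_k^{(c)}.
\]
Both residual measures have total mass $\tau$.

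\textbf{Step 2 (diagonal coupling).} On each diagonal cell with $m_k>0$, take an optimal coupling $\gamma_k$ of $P_k^{(c),\star}$ and $\widehat P_k^{(c)}$ for the squared Euclidean cost. It exists because both laws have finite second moments by Assumption~\ref{ass:second_moment}. Place mass $m_k\gamma_k$ on the pairs $\{(x,k),(x',k)\}$. There the indicator term of $c_\lambda$ vanishes, so the cost contribution is $m_kW_2^2(P_k^{(c),\star},\widehat P_k^{(c)})$. This is at most $\pi_k^\star W_2^2(\cdot,\cdot)$ because $m_k\le\pi_k^\star$.

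\textbf{Step 3 (off-diagonal coupling).} If $\tau>0$, couple the residuals by the normalized product $\tau^{-1}\mu_{\mathrm{res}}\otimes\nu_{\mathrm{res}}$. The two residuals are supported on disjoint sets of cells, since $(\pi^\star_k-\widehat\pi_k)_+$ and $(\widehat\pi_k-\pi^\star_k)_+$ are never both positive. Bounding the indicator by $1$ and using $\|x-x'\|^2\le2\|x\|^2+2\|x'\|^2$, the cost of this piece is at most
\[
\lambda\tau+2\int\|x\|^2\,d\mu_{\mathrm{res}}+2\int\|x'\|^2\,d\nu_{\mathrm{res}}.
\]
Each residual charges cell $k$ with conditional law $P_k^{(c),\star}$ or $\widehat P_k^{(c)}$. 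These are exactly the conditionals controlled by Assumption~\ref{ass:second_moment}, so each integral is at most $M\tau$. The off-diagonal cost is therefore at most $(\lambda+4M)\,TV(\pi^\star,\widehat\pi)$.

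\textbf{Step 4 (assembly).} Check that the sum of the diagonal and residual pieces has marginals $P^\star$ and $\widehat P$; this follows directly from the decomposition in Step 1. Adding the costs from Steps 2 and 3 gives the stated bound.

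The main point needing care is the bookkeeping in Step 3. The residual measures must carry the \emph{same} conditional laws as the cells themselves, which is what lets the per-cell moment bound $M$ transfer to the residual pieces. The convention for cells with zero mass under one law should also be handled explicitly: such cells have $m_k=0$ and only enter through the residuals, where Assumption~\ref{ass:second_moment} applies because the cell has positive mass under the other law.
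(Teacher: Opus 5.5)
Your proof is correct and follows the same route as the paper. Both use a maximal coupling of the categorical marginals and optimal $W_2$ couplings on the matched cells, which cost at most $\sum_k \pi_k^\star W_2^2$ since $m_k\le\pi_k^\star$; both then bound the mismatched mass crudely via $\|x-x'\|_2^2\le 2\|x\|_2^2+2\|x'\|_2^2$ and Assumption~\ref{ass:second_moment}. Your explicit residual decomposition and normalized product coupling make precise the paper's ``use any coupling'' step, in particular by ensuring that the mismatched pieces carry the correct class-conditional laws so that the moment bound $M$ applies.
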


\begin{theorem}[Product-space stability for mixed Flow Matching]
\label{thm:joint_ot_stability}
Assume the hypotheses of Corollary~\ref{cor:mixed_fidelity_fm} and Assumption~\ref{ass:second_moment}. Then $W_{c_\lambda}^2(P^\star,\widehat P)$ is bounded by the right side of Lemma~\ref{lem:joint_transport_upper} after substituting the classwise $W_2$ bounds from Theorem~\ref{thm:fm_w2_stability} and the total-variation bound from Corollary~\ref{cor:tv_from_logit_fm}.
\end{theorem}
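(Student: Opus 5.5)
The proof is a direct substitution argument that starts from Lemma~\ref{lem:joint_transport_upper}. The hypotheses of Corollary~\ref{cor:mixed_fidelity_fm} and Assumption~\ref{ass:second_moment} are in force, so the lemma gives
\[
W_{c_\lambda}^2(P^\star,\widehat P)
\le
\sum_{k=1}^K \pi_k^\star W_2^2\!\left(P_k^{(c),\star},\widehat P_k^{(c)}\right)
+(\lambda+4M)\,TV(\pi^\star,\widehat\pi).
\]
It remains to bound each term on the right by quantities involving vector-field error.

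\textbf{Classwise terms.} For each $k$ with $\pi_k^\star>0$, the componentwise generator produces $P_k^{(c),\star}$ and $\widehat P_k^{(c)}$ as pushforwards of a common base law under the flows of $v_k^\star$ and $\hat v_k$. Assume both fields are $L_k$-Lipschitz, as in Assumption~\ref{ass:continuous_flow} restricted to class $k$. Theorem~\ref{thm:fm_w2_stability} then gives
\[
W_2\!\left(P_k^{(c),\star},\widehat P_k^{(c)}\right)\le e^{L_k}\int_0^1\Big(\mathbb E\|\hat v_k(t,X^\star_{t,k})-v_k^\star(t,X^\star_{t,k})\|^2\Big)^{1/2}dt=:e^{L_k}\mathcal E_k .
\]
Because $t\mapsto x^2$ is monotone on $[0,\infty)$, squaring preserves the inequality: $\pi_k^\star W_2^2\le \pi_k^\star e^{2L_k}\mathcal E_k^2$. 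If one wants a form involving the time-integrated squared error, Jensen (Cauchy--Schwarz on $[0,1]$) gives $\mathcal E_k^2\le\int_0^1\mathbb E\|\hat v_k-v_k^\star\|^2dt$. Cells with $\pi_k^\star=0$ contribute zero, matching the convention in \eqref{eq:mixed-distance}.

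\textbf{Categorical term.} The coefficient $\lambda+4M$ is nonnegative, so the bound \eqref{eq:tv_logit_fm_bound} from Corollary~\ref{cor:tv_from_logit_fm} can be inserted directly. This yields
\[
(\lambda+4M)\Big(\varepsilon+\tfrac{C_{\mathrm{sm}}}{2}e^{L_Z}\mathcal E_Z\Big).
\]
Adding the two pieces gives the stated bound on $W_{c_\lambda}^2$.

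\textbf{Main obstacle.} The one point needing care is consistency of conditioning. Lemma~\ref{lem:joint_transport_upper} uses $\widehat P_k^{(c)}=\mathcal L(\widehat X^{(c)}\mid\widehat X^{(d)}=k)$, whereas Corollary~\ref{cor:mixed_fidelity_fm} controls the law produced by the $k$-th conditional flow. In the idealized componentwise generator, $\widehat X^{(c)}$ is drawn from the $k$-th conditional flow given the decoded category $\widehat X^{(d)}=k$, so the two laws coincide. I would state this identification explicitly.

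I would also check that Assumption~\ref{ass:second_moment} applies to every cell with positive mass under either law, since the lemma needs it. Under population softmax decoding, $\widehat\pi_k>0$ for all $k$, so every generated cell must be covered. With these two checks done, the proof is purely algebraic substitution and needs no new estimate.
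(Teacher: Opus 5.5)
Your proposal is correct and matches the paper's proof: both start from Lemma~\ref{lem:joint_transport_upper}, insert the squared classwise bounds from Theorem~\ref{thm:fm_w2_stability}, and insert \eqref{eq:tv_logit_fm_bound}, which is legitimate because $\lambda+4M\ge 0$. Your two extra checks are left implicit in the paper's one-line substitution argument: that $\widehat P_k^{(c)}$ is the $k$-th conditional flow's output, and that Assumption~\ref{ass:second_moment} covers every cell with positive generated mass.
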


\begin{proposition}[Diffusion probability-flow stability]
\label{prop:diffusion_stability}
Let $s^\star(t,x)=\nabla\log p_t^\star(x)$ be the population score and let $\hat s$ be an estimator. For the probability flow drifts
\[
f^\star(t,x)=\mu(t,x)-\tfrac12\sigma^2(t)s^\star(t,x),
\qquad
\hat f(t,x)=\mu(t,x)-\tfrac12\sigma^2(t)\hat s(t,x),
\]
assume $f^\star$ and $\hat f$ are globally Lipschitz in $x$ uniformly over $t\in[0,1]$ and that the initial laws have finite second moments. Then
\[
W_2(P_1^\star,\widehat P_1)
\le
C
\int_0^1
\bigl(
\mathbb{E}\|\hat s(t,X_t^\star)-s^\star(t,X_t^\star)\|^2
\bigr)^{1/2}
\,dt,
\]
for a constant $C$ depending on the Lipschitz constants and $\sigma(t)$.
\end{proposition}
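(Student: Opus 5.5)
The plan is to reduce Proposition~\ref{prop:diffusion_stability} to Theorem~\ref{thm:unified_stability}, or more concretely to the synchronous-coupling Gronwall argument behind Theorem~\ref{thm:fm_w2_stability}. The reduction rests on the fact that the two probability-flow drifts share the term $\mu(t,x)$. Their difference is therefore a pure score error:
\[
\hat f(t,x)-f^\star(t,x)=-\tfrac12\sigma^2(t)\bigl(\hat s(t,x)-s^\star(t,x)\bigr).
\]
Both drifts are globally Lipschitz in $x$ uniformly in $t$, say with constant $L$, and the initial law $P_0$ has finite second moment. Hence the hypotheses of Theorem~\ref{thm:unified_stability} hold, with the ODEs started from the same $X_0\sim P_0$.

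The steps are as follows. \emph{First}, I couple the two flows synchronously, setting $X_t^\star=\Phi_t^{f^\star}(X_0)$ and $\widehat X_t=\Phi_t^{\hat f}(X_0)$. Writing $e_t=\widehat X_t-X_t^\star$, I split
\[
\hat f(t,\widehat X_t)-f^\star(t,X_t^\star)=\bigl[\hat f(t,\widehat X_t)-\hat f(t,X_t^\star)\bigr]+\bigl[\hat f(t,X_t^\star)-f^\star(t,X_t^\star)\bigr].
\]
This gives $\tfrac{d}{dt}\|e_t\|\le L\|e_t\|+\|\hat f(t,X_t^\star)-f^\star(t,X_t^\star)\|$ almost everywhere. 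To avoid differentiating the norm at $0$, I would work with the integral form instead. \emph{Second}, Gronwall's inequality with $e_0=0$ yields
\[
\|e_1\|\le\int_0^1 e^{L(1-t)}\|\Delta_t\|\,dt,
\]
where $\Delta_t$ denotes the drift error along the population path. \emph{Third}, I take $L^2(\mathbb P)$ norms and apply Minkowski's integral inequality, which gives $(\mathbb E\|e_1\|^2)^{1/2}\le e^{L}\int_0^1(\mathbb E\|\Delta_t\|^2)^{1/2}dt$. Since $(\widehat X_1,X_1^\star)$ is a coupling of $(\widehat P_1,P_1^\star)$, the left side upper-bounds $W_2(P_1^\star,\widehat P_1)$. \emph{Fourth}, I substitute $\|\Delta_t\|=\tfrac12\sigma^2(t)\|\hat s-s^\star\|$ and pull out $\sup_t\tfrac12\sigma^2(t)$. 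This gives the claim with
\[
C=\tfrac12 e^{L}\sup_{t\in[0,1]}\sigma^2(t).
\]

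The main obstacle is making this constant finite and well defined. If $\sigma$ is only measurable, I would instead keep the weight $\sigma^2(t)$ inside the integral. Alternatively, I would state the result under the standing assumption that $\sigma$ is bounded on $[0,1]$, which is the reading of ``depending on $\sigma(t)$''. A secondary technical point is that the right-hand side must be finite, or the bound is trivial, and that $X_t^\star$ has finite second moment so the expectations make sense. The latter follows from Lipschitz drifts via the linear-growth bound $\|f^\star(t,x)\|\le\|f^\star(t,0)\|+L\|x\|$ and a second Gronwall step, assuming $t\mapsto f^\star(t,0)$ is integrable. Global Lipschitzness also guarantees existence and uniqueness of the flow maps, so the pushforwards in the statement are well defined.

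Time orientation needs a short remark. The diffusion generative process runs in reverse time, so I would reparametrize $t\mapsto 1-t$ so that the flow runs from the (approximately Gaussian) initial law to the data law on $[0,1]$. This leaves Lipschitz constants and $\sup\sigma^2$ unchanged. Consequently the proposition is Theorem~\ref{thm:unified_stability} applied to the pair $(f^\star,\hat f)$, with the score error absorbing the factor $\tfrac12\sigma^2$.
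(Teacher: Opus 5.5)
Your proposal is correct and follows the paper's own proof: the paper likewise applies Theorem~\ref{thm:unified_stability} (synchronous coupling from a common $X_0$, Gronwall, then Minkowski) to the pair $(f^\star,\hat f)$, and uses the identity $\hat f-f^\star=-\tfrac12\sigma^2(t)(\hat s-s^\star)$ to turn drift error into score error. Your explicit constant $C=\tfrac12 e^{L}\sup_{t}\sigma^2(t)$ and your remarks on bounded $\sigma$, well-posedness of the flows, and time orientation make precise what the paper leaves implicit.
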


\begin{corollary}[Logit-diffusion fidelity]
\label{cor:mixed_diffusion_stability}
  Suppose the  logit and class-conditional   ODEs satisfy the  conditions of Proposition~\ref{prop:diffusion_stability}. The mixed discrepancy then  obeys the same structural bound as \eqref{eq:mixed_fidelity_fm_bound}, with the Flow Matching velocity errors replaced by probability-flow drift errors. Equivalently, using
$\widehat f-f^\star=-\frac{1}{2}\sigma^2(t)(\widehat s-s^\star)$, the bound can be written directly in terms of score-estimation error.
\end{corollary}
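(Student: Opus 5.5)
The plan is to prove Corollary~\ref{cor:mixed_diffusion_stability} by rerunning the proof of Corollary~\ref{cor:mixed_fidelity_fm}, replacing the Flow Matching velocity fields by the probability-flow drifts. I will use Proposition~\ref{prop:diffusion_stability}, or more exactly its drift-level form Theorem~\ref{thm:unified_stability}, as the transport estimate. The argument is componentwise. The categorical block evolves by the logit probability-flow ODE $\dot Z_t=f_Z(t,Z_t)$. Each class $k$ has its own continuous probability-flow ODE $\dot X_{t,k}=f_k(t,X_{t,k})$. As in the Flow Matching case, the population and learned dynamics share a common base initialization.

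\textbf{Step 1 (categorical term).} Split $TV(\pi^\star,\widehat\pi)\le TV(\pi^\star,\pi^{\star,(\varepsilon)})+TV(\pi^{\star,(\varepsilon)},\widehat\pi)$. The smoothing term is at most $\varepsilon$, exactly as in Corollary~\ref{cor:tv_from_logit_fm}.

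For the second term, use the synchronous coupling $(Z_1^\star,\widehat Z_1)$ driven by the same $Z_0$. Jensen and Theorem~\ref{thm:softmax-lipschitz} give
\[
\|\pi^{\star,(\varepsilon)}-\widehat\pi\|_1\le C_{\mathrm{sm}}\,\mathbb E\|Z_1^\star-\widehat Z_1\|_2 .
\]
The Gronwall--Minkowski argument behind Theorem~\ref{thm:fm_w2_stability} then bounds $(\mathbb E\|Z_1^\star-\widehat Z_1\|^2)^{1/2}$ by $e^{L_Z}\int_0^1(\mathbb E\|\hat f_Z-f_Z^\star\|^2)^{1/2}dt$, with the error evaluated along the population trajectory. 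This argument uses only Lipschitzness of the learned drift, which is assumed.

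\textbf{Step 2 (conditional terms).} For each $k$, apply the same synchronous-coupling bound to the class-$k$ drifts. This gives
\[
W_2(P_k^{(c),\star},\widehat P_k^{(c)})\le e^{L_k}\int_0^1(\mathbb E\|\hat f_k-f_k^\star\|^2)^{1/2}dt .
\]
Multiply by $\pi_k^\star$ and sum over $k$. Adding the result of Step 1 reproduces \eqref{eq:mixed_fidelity_fm_bound} with velocity errors replaced by drift errors.

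\textbf{Step 3 (score form).} The two drifts share $\mu$, so $\hat f-f^\star=-\tfrac12\sigma^2(t)(\hat s-s^\star)$ pointwise. Substituting this gives each integrand as $\tfrac12\sigma^2(t)(\mathbb E\|\hat s-s^\star\|^2)^{1/2}$. If $\sigma$ is bounded on $[0,1]$, the factor $\tfrac12\sup_t\sigma^2(t)$ can be pulled out, matching the constant in Proposition~\ref{prop:diffusion_stability}.

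\textbf{Main obstacle.} The delicate point is bookkeeping about time direction and base laws. The generative probability-flow ODE runs in reverse time from a Gaussian. I would reparametrize $t\mapsto 1-t$ so that it fits Definition~\ref{def:transport_model}, noting that Lipschitz constants and the integral are unchanged. I would also assume, as in the FM corollary, that the population flow started from the common base law reproduces the (smoothed) target exactly. Otherwise an extra initialization or mismatch term $e^{L}W_2(\text{initial laws})$ would appear, and I would either state it or absorb it into the hypotheses.

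Finally, identifying $\widehat P_k^{(c)}$ with the output of the class-$k$ flow relies on the componentwise generator convention. The proof inherits this convention from Corollary~\ref{cor:mixed_fidelity_fm} rather than establishing it for a flat joint model.
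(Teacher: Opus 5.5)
Your proposal is correct and follows essentially the same route as the paper's proof: apply the drift-level bound of Theorem~\ref{thm:unified_stability} componentwise to the logit and class-conditional probability-flow ODEs, assemble the terms as in Corollaries~\ref{cor:tv_from_logit_fm} and~\ref{cor:mixed_fidelity_fm}, and convert drift error to score error via $\widehat f-f^\star=-\tfrac12\sigma^2(t)(\widehat s-s^\star)$. Your remarks on time reversal and on the base-law (initialization) mismatch go beyond the paper's terse argument, which absorbs both into its idealized hypotheses.
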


These diffusion consequences are included to show that the logit-coordinate representation is algorithm-agnostic. The empirical comparisons later use the same representation principle within Flow Matching and diffusion families.

\section{Statistical Convergence Rates for Logit Generative Models}

This section gives the rate calculation needed to interpret the rare-cell
behavior in the experiments. The argument combines the stability bound in
Section~\ref{sec:stability} with standard nonparametric regression rates for
the learned vector fields.  Related Flow Matching theory derives nearly minimax
distributional rates under a different set of assumptions
\citep{fukumizu2024flowmatching}.

  We state one explicit sieve result for the componentwise model used in
Corollary~\ref{cor:mixed_fidelity_fm}. This avoids treating the calculation as
an implementation-level guarantee for the flat joint neural generator.

 \subsection{Flow Matching as Nonparametric Regression}

Let $Y_1\sim\mu_1^\star$ be the transformed data distribution,   $Y_0\sim\mu_0$  the base distribution, and $t\sim\mathrm{Unif}(0,1)$ independent. Define
  \[
Y_t=(1-t)Y_0+tY_1,
\qquad
U=Y_1-Y_0,
\qquad
X=(Y_t,t).
\] 
The population Flow Matching risk is $\mathcal L(v)=\mathbb E\|v(X)-U\|^2$.

\begin{proposition}[Population Minimizer]
\label{prop:pop-min}
The function $v^\star(x)=\mathbb E[U\mid X=x]$ is a minimizer of $\mathcal L(v)$ over square-integrable functions.
\end{proposition}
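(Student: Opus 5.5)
The plan is the standard orthogonal decomposition of squared-error risk around the conditional mean. First I check that $v^\star$ is well defined and square integrable. Since $U=Y_1-Y_0$ with $Y_0\sim\mathcal N(0,I)$ and $Y_1\sim\mu_1^\star$, we have $\mathbb E\|U\|^2\le 2\mathbb E\|Y_1\|^2+2\mathbb E\|Y_0\|^2$. This is finite provided $\mu_1^\star$ has a finite second moment, which I take as implicit: the continuous coordinates are transformed, and the logit coordinates $\eta(d)$ take only finitely many bounded values because of the $\varepsilon$-smoothing. Conditional Jensen then gives $\mathbb E\|v^\star(X)\|^2\le\mathbb E\|U\|^2<\infty$, so $v^\star\in L^2(P_X)$. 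Here $v^\star(x)=\mathbb E[U\mid X=x]$ is a fixed measurable version of the conditional expectation.

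Next, for an arbitrary $v\in L^2(P_X;\mathbb R^d)$, I expand
\[
\mathcal L(v)=\mathbb E\|v(X)-v^\star(X)\|^2+2\,\mathbb E\big\langle v(X)-v^\star(X),\,v^\star(X)-U\big\rangle+\mathbb E\|v^\star(X)-U\|^2 .
\]
The cross term vanishes by the tower property. Conditioning on $X$, the factor $v(X)-v^\star(X)$ is $X$-measurable, and $\mathbb E[v^\star(X)-U\mid X]=0$. The product is integrable by Cauchy--Schwarz because both factors lie in $L^2$. This is the only step requiring care: it needs the integrability just established so that the conditioning and the splitting of expectations are legitimate.

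It follows that $\mathcal L(v)=\mathcal L(v^\star)+\mathbb E\|v(X)-v^\star(X)\|^2\ge\mathcal L(v^\star)$, so $v^\star$ is a minimizer. As a byproduct, it is unique up to $P_X$-null sets, and the excess risk equals the squared $L^2(P_X)$ distance to $v^\star$. This identity is what the subsequent rate analysis will use to convert regression error into the vector-field error appearing in Theorem~\ref{thm:fm_w2_stability}. I expect no real obstacle. The only point to state explicitly is the finite-second-moment condition on $\mu_1^\star$, without which $\mathcal L$ could be infinite everywhere.
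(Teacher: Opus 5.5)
Your proof is correct and follows the paper's argument: both decompose $U-v(X)$ around $v^\star(X)$, kill the cross term by the tower property, and obtain $\mathcal L(v)=\mathcal L(v^\star)+\mathbb E\|v(X)-v^\star(X)\|^2\ge\mathcal L(v^\star)$. Your explicit check that $\mathbb E\|U\|^2<\infty$, which gives $v^\star\in L^2(P_X)$ and integrability of the cross term, is left implicit in the paper's proof and is a worthwhile addition.
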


This is the standard $L^2$ projection property of conditional expectation; Appendix~\ref{app:regression} gives the proof.

\begin{assumption}[Regularity for velocity regression]
\label{ass:velocity_regularity}
Let $d_Z=K-1$. The population logit velocity $v_Z^\star(t,z)$ is $\alpha_Z$-Hölder on $[0,1]\times\mathcal Z\subset\mathbb R^{1+d_Z}$, and each class-conditional continuous velocity $v_k^\star(t,x)$ is $\alpha_c$-Hölder on $[0,1]\times\mathcal X\subset\mathbb R^{1+p}$. The domains $\mathcal Z$ and $\mathcal X$ are bounded, the conditional variance in the regression problem is uniformly finite, and the estimators use $m$-dimensional linear sieve classes $\mathcal F_m\subset L^2(P_X)$.
\end{assumption}

\begin{lemma}[Sieve regression rate]
\label{lem:sieve}
Under Assumption~\ref{ass:velocity_regularity}, there exists a constant $C>0$ such that
\[
\mathbb E\|\hat v_m(X)-v^\star(X)\|^2
\le
C\left(m^{-2\alpha/d}+\frac{m}{n}\right),
\]
where $d$ denotes the input dimension of $X$. Choosing $m\asymp n^{d/(2\alpha+d)}$ gives $\mathbb E\|\hat v_m-v^\star\|^2\lesssim n^{-2\alpha/(2\alpha+d)}$.
\end{lemma}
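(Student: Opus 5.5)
The plan is the usual bias--variance decomposition for least-squares estimation over a linear sieve, applied coordinatewise to the vector-valued regression $U=v^\star(X)+\xi$. Here $\xi:=U-\mathbb E[U\mid X]$ has conditional mean zero by Proposition~\ref{prop:pop-min}, and its conditional covariance is bounded by $\sigma^2 I$ under Assumption~\ref{ass:velocity_regularity}. Let $\hat v_m$ be the empirical least-squares projection of $U$ onto $\mathcal F_m$, computed from $n$ i.i.d.\ draws of $(X,U)$. For the componentwise model we apply this separately to the logit regression, with $d=1+d_Z$ and $\alpha=\alpha_Z$, and to each class-conditional regression, with $d=1+p$ and $\alpha=\alpha_c$. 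In the class-conditional case $n$ is replaced by the cell sample size $n_k\approx n\pi_k^\star$.

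\textbf{Approximation step.} Take $\mathcal F_m$ to be a tensor-product spline or wavelet sieve of dimension $m$ on the bounded domain $[0,1]\times\mathcal Z$ (respectively $[0,1]\times\mathcal X$). For an $\alpha$-Hölder function in $d$ variables, standard Jackson-type approximation gives $\inf_{g\in\mathcal F_m}\|g-v^\star\|_\infty\lesssim m^{-\alpha/d}$. Hence the $L^2(P_X)$ projection $v_m$ of $v^\star$ onto $\mathcal F_m$ satisfies $\|v_m-v^\star\|_{L^2(P_X)}^2\lesssim m^{-2\alpha/d}$.

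\textbf{Estimation step.} Write $\hat v_m-v^\star=(\hat v_m-v_m)+(v_m-v^\star)$. Let $\psi$ be the sieve basis, orthonormalized in $L^2(P_X)$. The first term equals $\psi^\top(\hat G^{-1}\tfrac1n\sum_i\psi(X_i)(U_i-v_m(X_i)))$, where $\hat G$ is the empirical Gram matrix. On the event that $\|\hat G-I\|_{op}\le 1/2$, the variance part has expected squared norm at most $4\sigma^2 m/n$. The residual bias part $v^\star-v_m$, projected empirically, contributes at most a constant times $m^{-2\alpha/d}$. A matrix Bernstein inequality shows the complementary event has probability at most $m\exp(-cn/(\zeta_m^2))$, where $\zeta_m=\sup_x\|\psi(x)\|\lesssim\sqrt m$ for splines and wavelets; this is negligible once $m\log m=o(n)$. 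To keep the expectation finite on the bad event, we truncate $\hat v_m$ at a level $B$. Such a $B$ exists because $\mathcal Z$ and $\mathcal X$ are bounded and $v^\star$ is continuous there, and truncation can only decrease the error.

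Combining the two steps gives $C(m^{-2\alpha/d}+m/n)$. Balancing the terms with $m\asymp n^{d/(2\alpha+d)}$ gives the stated rate $n^{-2\alpha/(2\alpha+d)}$.

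\textbf{Main obstacle.} The delicate step is the Gram-matrix control. It requires $P_X$ to have a density bounded above and below on the domain, so that the basis is well conditioned. Because $Y_t$ is a Gaussian--data interpolation, its density is smooth, and on the bounded domain it is bounded below. I would state this as part of the regularity implicit in Assumption~\ref{ass:velocity_regularity}. Beyond that, the argument is a routine citation of standard sieve least-squares results, for example Newey (1997) or Belloni et al.
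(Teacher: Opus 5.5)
Your argument is a correct version of the standard sieve least-squares proof, but its estimation step takes a different route from the paper's.

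\textbf{The paper's route.} The paper uses the same split, written as an exact Pythagorean identity in $L^2(P_X)$ (since $\hat v_m-v_m^\star\in\mathcal F_m$ and $v_m^\star-v^\star\perp\mathcal F_m$). It bounds the bias by classical approximation theory, as you do. For the variance term it simply cites Theorem~11.1 of Gy\"orfi et al.\ (2002). That result is assumption-light: it is distribution-free, with no design density, no truncation and no restriction on $m$. However, it is a fixed-design bound on $\mathbb E[\int|\hat v_m-v^\star|^2\,d\mu_n\mid X_1,\dots,X_n]$ in the empirical norm. Using it for $\mathbb E\|\hat v_m-v_m^\star\|^2_{L^2(P_X)}$ therefore silently needs an empirical-to-population transfer.

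\textbf{What your route adds.} Your ingredients supply exactly that transfer: the event $\|\hat G-I\|_{op}\le 1/2$, the matrix Chernoff bound with $\zeta_m\lesssim\sqrt m$, and truncation on the complement. The distribution-free random-design alternative in the same book also truncates and pays an extra $\log n$. Your route gives a genuine $L^2(P_X)$ bound of order $m^{-2\alpha/d}+m/n$ without logarithms, at three costs:
(i) a truncated estimator rather than the raw $\hat v_m\in\mathcal F_m$, which is harmless and essentially needed anyway, because $\hat G$ is singular with positive probability for local bases;
(ii) the restriction $m\log m=o(n)$, which holds at $m\asymp n^{d/(2\alpha+d)}$;
(iii) a design-density condition that is not part of Assumption~\ref{ass:velocity_regularity}.

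\textbf{The design condition is not implied.} On (iii), your justification fails, so the condition must be added explicitly rather than read as implicit. For the logit block, $\mu_1^\star$ is supported on the $K$ encoded prototypes $\eta(1),\dots,\eta(K)$. Given $t$, $Y_t$ is therefore a Gaussian mixture with centers $t\,\eta(d)$ and common scale $1-t$. As $t\to 1$, its density tends to zero away from the prototypes and blows up like $(1-t)^{-d_Z}$ near them. So the density of $X=(t,Y_t)$ is bounded neither below nor above on $[0,1]\times\mathcal Z$, and $\zeta_m\lesssim\sqrt m$ for the $L^2(P_X)$-orthonormalized basis does not follow. You can repair this in either of two ways. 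One is to assume the two-sided density bound, or directly the Gram-conditioning and $\zeta_m$ bounds. The other is to restrict to $t\in[0,1-\delta]$, where Gaussian smoothing does give two-sided density bounds on bounded sets.
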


The proof follows standard linear least-squares sieve bounds under finite
conditional variance and Hölder approximation  \citep{Tsybakov2009}; see
Appendix~\ref{app:sieve}.

\begin{assumption}[Bounded Density Ratio]
\label{ass:density}
The distribution of states visited by the population flow is absolutely continuous with respect to the regression input distribution, with Radon--Nikodym derivative $d\mu_{\mathrm{path}}/d\mu_{\mathrm{train}}\le\kappa$.
\end{assumption}

\begin{lemma}[Bridge Inequality]
\label{lem:bridge}
Under Assumption~\ref{ass:density},
\[
\mathbb E_{\mathrm{path}}\|\hat v-v^\star\|^2
\le
\kappa\,\mathbb E_{\mathrm{train}}\|\hat v-v^\star\|^2.
\]
\end{lemma}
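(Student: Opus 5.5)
The plan is a change of measure. Write $g:=\|\hat v-v^\star\|^2$. This is a nonnegative measurable function on the regression input space of pairs $x=(y,t)$. The two expectations in the statement are integrals of the same $g$ against two different laws on that space:
\[
\mathbb E_{\mathrm{path}}\|\hat v-v^\star\|^2=\int g\,d\mu_{\mathrm{path}},
\qquad
\mathbb E_{\mathrm{train}}\|\hat v-v^\star\|^2=\int g\,d\mu_{\mathrm{train}}.
\]
Here $\mu_{\mathrm{train}}$ is the law of $X=(Y_t,t)$ under the interpolation construction. The measure $\mu_{\mathrm{path}}$ is the law of $(X_t^\star,t)$ with $t\sim\mathrm{Unif}(0,1)$, where $X_t^\star=\Phi_t^{v^\star}(X_0)$. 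This time-averaged version is the form in which the path error enters Theorem~\ref{thm:fm_w2_stability} after Cauchy--Schwarz in $t$.

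The argument then has two steps. First, Assumption~\ref{ass:density} gives $\mu_{\mathrm{path}}\ll\mu_{\mathrm{train}}$, with Radon--Nikodym derivative $\rho=d\mu_{\mathrm{path}}/d\mu_{\mathrm{train}}$ satisfying $0\le\rho\le\kappa$ $\mu_{\mathrm{train}}$-a.e. By the Radon--Nikodym theorem,
\[
\int g\,d\mu_{\mathrm{path}}=\int g\,\rho\,d\mu_{\mathrm{train}}.
\]
Second, since $g\ge 0$, I bound $g\rho\le\kappa g$ pointwise a.e. and integrate, which gives the claim. No integrability hypothesis is needed: both sides are integrals of nonnegative functions, so the inequality holds in $[0,\infty]$. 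It is informative whenever the training risk is finite, which is the situation in Lemma~\ref{lem:sieve}.

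The only real obstacle is definitional rather than analytic. The paper must fix what "states visited by the population flow" means as a measure on $[0,1]\times\mathbb R^d$, so that the path expectation is exactly the quantity used downstream. I would state explicitly that it is the time-averaged law above, and note one further point. For a fixed-time version, one needs the density bound for each $t$ in order to pair with the $\int_0^1(\cdot)^{1/2}dt$ form of Theorem~\ref{thm:fm_w2_stability}. Alternatively, one can pass through Jensen's inequality,
\[
\int_0^1 a_t^{1/2}\,dt\le\Bigl(\int_0^1 a_t\,dt\Bigr)^{1/2},
\]
so that only the time-averaged bound is required. I would adopt this second route, because it lets the lemma be used directly in the rate theorem.
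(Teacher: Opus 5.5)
Your proof is correct and follows the paper's argument. Both write the path expectation of the nonnegative function $g=\|\hat v-v^\star\|^2$ as $\int g\,(d\mu_{\mathrm{path}}/d\mu_{\mathrm{train}})\,d\mu_{\mathrm{train}}$ and bound the Radon--Nikodym derivative by $\kappa$. Your reading of $\mu_{\mathrm{path}}$ as the time-averaged law, together with the Jensen step $\int_0^1 a_t^{1/2}\,dt\le(\int_0^1 a_t\,dt)^{1/2}$, is also how the paper uses the lemma in the proof of Theorem~\ref{thm:mixed-rate}.
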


This is a change-of-measure step; Appendix~\ref{app:bridge} gives the proof.

\subsection{Main Result: Mixed Fidelity Rate}

We now combine Lemma~\ref{lem:sieve}, Lemma~\ref{lem:bridge}, and
Corollary~\ref{cor:mixed_fidelity_fm}. The result is a population-level
regularity statement for bounded transformed domains and idealized
componentwise velocity regressions. A finite-sample guarantee for the exact
neural-network training and numerical sampling procedures used in the
experiments would additionally require localization, optimization, and
discretization analyses.

\begin{theorem}[Nonparametric Rate for Componentwise Mixed Flow Matching]
\label{thm:mixed-rate}
Suppose Assumptions~\ref{ass:velocity_regularity} and~\ref{ass:density}
hold, and suppose that the population and learned velocity fields are
uniformly Lipschitz on the bounded domains over $t\in[0,1]$. Then
\[
\mathcal{D}_{\mathrm{mix}}(P^\star,\widehat P)
\leq
\varepsilon
+
C_1 n^{-\alpha_Z/(2\alpha_Z+d_Z+1)}
+
\sum_{k=1}^K
\pi_k^\star C_{2,k}(n\pi_k^\star)^{-\alpha_c/(2\alpha_c+p+1)},
\]
where $d_Z=K-1$ and
$\pi_k^\star=P^\star(X^{(d)}=k)$.
\end{theorem}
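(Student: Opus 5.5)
The plan is to start from the componentwise bridge in Corollary~\ref{cor:mixed_fidelity_fm}. It already bounds $\mathcal D_{\mathrm{mix}}(P^\star,\widehat P)$ by three pieces:
\begin{itemize}
\item the smoothing bias $\varepsilon$;
\item a logit term $\tfrac{C_{\mathrm{sm}}}{2}e^{L_Z}\int_0^1(\mathbb E\|\hat v_Z-v_Z^\star\|^2)^{1/2}dt$, with the expectation taken along the population logit trajectory $Z_t^\star$;
\item a weighted sum of class-conditional terms $\pi_k^\star e^{L_k}\int_0^1(\mathbb E\|\hat v_k-v_k^\star\|^2)^{1/2}dt$, with expectations along $X_{t,k}^\star$.
\end{itemize}
The uniform Lipschitz hypothesis in the theorem supplies the Lipschitz constants needed by that corollary. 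So the task reduces to bounding each path-averaged velocity error by a regression rate and inserting it.

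For each term I first apply Cauchy--Schwarz (Jensen) in $t$:
\[
\int_0^1(\mathbb E\|\cdot\|^2)^{1/2}dt\le\Big(\int_0^1\mathbb E\|\cdot\|^2dt\Big)^{1/2}.
\]
The right side is the $L^2$ error under the path measure $\mu_{\mathrm{path}}$ on $(t,\text{state})$. Lemma~\ref{lem:bridge} under Assumption~\ref{ass:density} then converts this into $\kappa$ times the $L^2(P_X)$ error under the training distribution of $X=(Y_t,t)$. That is exactly the quantity controlled by Lemma~\ref{lem:sieve}.

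For the logit component the regression input is $(t,z)\in[0,1]\times\mathcal Z$, of dimension $d=d_Z+1$, with smoothness $\alpha_Z$, and it uses all $n$ samples. Choosing $m\asymp n^{(d_Z+1)/(2\alpha_Z+d_Z+1)}$ gives squared error $\lesssim n^{-2\alpha_Z/(2\alpha_Z+d_Z+1)}$. Taking the square root produces the $C_1n^{-\alpha_Z/(2\alpha_Z+d_Z+1)}$ term, with $C_1$ absorbing $C_{\mathrm{sm}}e^{L_Z}\sqrt{\kappa C}/2$.

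For class $k$ the input dimension is $p+1$ and the smoothness is $\alpha_c$. Only the $n_k$ observations with $X^{(d)}=k$ enter, and I replace $n_k$ by $n\pi_k^\star$. The sieve rate gives $(n\pi_k^\star)^{-\alpha_c/(2\alpha_c+p+1)}$ after the square root, and $C_{2,k}$ absorbs $e^{L_k}\sqrt{\kappa C}$, which may depend on $k$. Multiplying by the weights $\pi_k^\star$ from $\mathcal D_{\mathrm{mix}}$ and summing gives the stated bound.

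The main obstacle is making the bound in terms of $n\pi_k^\star$ honest, since $n_k$ is a Binomial$(n,\pi_k^\star)$ random variable. I would condition on $n_k$ and use Lemma~\ref{lem:sieve} with $n_k$ in place of $n$. On the event $n_k\ge n\pi_k^\star/2$, a Chernoff bound shows the rate changes only by a constant factor absorbed in $C_{2,k}$. On the complement, whose probability is at most $e^{-n\pi_k^\star/8}$, I use the boundedness of the domains and velocities to bound the error by a constant. That contribution is exponentially small and dominated by the polynomial rate once $n\pi_k^\star\gtrsim1$; for $n\pi_k^\star<1$ the inequality holds trivially after enlarging $C_{2,k}$. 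A secondary point is that the bound is read in expectation over the training sample, or with the sieve lemma interpreted as giving the stated mean-squared rate. Jensen's inequality lets me pass the expectation inside the square roots.
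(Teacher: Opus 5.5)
Your proposal follows the paper's proof essentially step for step: start from Corollary~\ref{cor:mixed_fidelity_fm}, apply Jensen in $t$, pass to the training distribution with Lemma~\ref{lem:bridge}, then invoke Lemma~\ref{lem:sieve} with input dimensions $d_Z+1$ and $p+1$ and sample sizes $n$ and $n\pi_k^\star$. Your treatment of the random cell count $n_k$ and of the expectation over the training sample is extra care that the paper omits, since it simply substitutes $n\pi_k^\star$ and states the bound without an expectation; note that your bad-event step needs the learned field to be bounded (e.g.\ a truncated sieve estimator), and the stated Lipschitz hypothesis alone does not guarantee this.
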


The theorem separates categorical and conditional continuous estimation.
The categorical logit component uses all $n$ observations, whereas the
conditional continuous component in cell $k$ has effective sample size
$n\pi_k^\star$. Its contribution to $\mathcal{D}_{\mathrm{mix}}$ is then weighted
by the population mass $\pi_k^\star$. Thus, $\mathcal{D}_{\mathrm{mix}}$
summarizes population-level fidelity, while
$\mathcal{D}_{\mathrm{bal}}$ and the rare-cell diagnostics used in the
simulation more directly expose errors in low-probability cells. Uniform
finite-sample statements over rare cells would additionally require lower
bounds on the relevant cell probabilities and concentration control for
the empirical cell counts.

\begin{remark}[Smoothing and decoding]
For fixed $\varepsilon$, the theorem controls fidelity to the unsmoothed
data law up to the additive encoding bias. Consistency under randomized
softmax decoding therefore requires a sequence
$\varepsilon=\varepsilon_n\to0$. However, the scaled-logit decoding
constant involves
\[
a_K
=
\log\left\{
\frac{1-\varepsilon+\varepsilon/K}{\varepsilon/K}
\right\},
\]
which diverges as $\varepsilon\to0$. The theorem therefore does not specify
an optimal smoothing schedule, and any rate statement involving
$\varepsilon_n$ requires the regularity and stability constants to remain
uniform along that sequence. The maximum-probability decoder used in the
experiments has no prototype-level smoothing bias, but its perturbation
analysis requires a margin condition and is outside the scope of the
theorem.
\end{remark}

\begin{remark}[Dominating component of the rate]
Suppose that $K$ is fixed, the constants in Theorem~\ref{thm:mixed-rate}
do not depend on $n$, and
\[
\min_{1\leq k\leq K}\pi_k^\star\geq \pi_{\min}^\star>0.
\]
Define
\[
\beta_Z
=
\frac{\alpha_Z}{2\alpha_Z+d_Z+1},
\qquad
\beta_c
=
\frac{\alpha_c}{2\alpha_c+p+1}.
\]
Then
\[
\sum_{k=1}^K
\pi_k^\star C_{2,k}(n\pi_k^\star)^{-\beta_c}
=
O(n^{-\beta_c}),
\]
so the statistical rate is determined by the slower of
$n^{-\beta_Z}$ and $n^{-\beta_c}$.

If
\[
\beta_Z<\beta_c,
\qquad\text{equivalently}\qquad
\alpha_Z(p+1)<\alpha_c(d_Z+1),
\]
then the categorical logit term dominates. If instead
\[
\beta_c<\beta_Z,
\qquad\text{equivalently}\qquad
\alpha_c(d_Z+1)<\alpha_Z(p+1),
\]
then the conditional continuous term dominates. When
$\beta_Z=\beta_c$, both components contribute at the same order.

Let
\[
\beta_*=\min\{\beta_Z,\beta_c\}.
\]
If the smoothing parameter is chosen so that
\[
\varepsilon_n=O(n^{-\beta_*}),
\]
and the regularity and stability constants remain uniform over this
sequence, then
\[
\mathcal{D}_{\mathrm{mix}}(P^\star,\widehat P)
=
O_p(n^{-\beta_*}).
\]
More explicitly,
\[
\mathcal{D}_{\mathrm{mix}}(P^\star,\widehat P)
=
\begin{cases}
O_p\!\left(
n^{-\alpha_Z/(2\alpha_Z+d_Z+1)}
\right),
&
\alpha_Z(p+1)<\alpha_c(d_Z+1),
\\[6pt]
O_p\!\left(
n^{-\alpha_c/(2\alpha_c+p+1)}
\right),
&
\alpha_c(d_Z+1)<\alpha_Z(p+1),
\end{cases}
\]
with both terms contributing at the same order in the equality case.
If $\varepsilon_n=o(n^{-\beta_*})$, the smoothing bias is asymptotically
negligible. If some $\pi_k^\star$ decrease with $n$, the cellwise
effective-sample-size terms must instead be retained explicitly.
\end{remark}

\section{Simulation Study}

This section presents controlled simulation experiments designed to evaluate the role of categorical representation in generative modeling for mixed continuous--categorical data. The goal is to separate the effect of the categorical coordinate system from the effect of the surrounding generative algorithm. We therefore compare one-hot and scaled-logit embeddings within the same model families: Flow Matching and Gaussian diffusion.

Unlike a purely categorical marginal experiment, the simulation below includes mixed conditional dependence. The data-generating process contains several categorical factors, a rare joint categorical cell, and continuous variables whose conditional distribution depends on the full categorical state. This design reflects the main difficulty of mixed tabular generation: a model must preserve both joint categorical mass and the numerical distribution within categorical cells.

\subsection{Data-Generating Process}

Each observation has the form
\[
X=(X_{\mathrm{num}},A,B,Y),
\]
where
\[
X_{\mathrm{num}}\in\mathbb R^4,\qquad
A\in\{0,1,2,3\},\quad
B\in\{0,1,2\},\quad
Y\in\{0,1\}.
\]
We combine the categorical variables into the joint state
\[
G=(A,B,Y),
\]
so that the categorical block has $K=4\times 3\times 2=24$ possible states. We designate one rare joint cell,
\[
g_{\mathrm{rare}}=(A=3,B=2,Y=1),
\]
and set
\[
\mathbb P(G=g_{\mathrm{rare}})=\rho,\qquad \rho\in\{0.05,0.01\}.
\]
The remaining categorical probabilities follow a long-tailed distribution with dependence among $A$, $B$, and $Y$; the full specification is given in Appendix~B.

Conditional on the joint categorical state, the numerical variables follow a Gaussian distribution,
\[
X_{\mathrm{num}}\mid G=g\sim N(\mu_g,\Sigma_g).
\]
Both the mean vector and covariance matrix depend on the full categorical state $g=(a,b,y)$ through additive and interaction effects. Thus, the conditional numerical distribution changes across categorical cells. This construction evaluates whether a generator preserves both $\mathbb P(G)$ and $\mathbb P(X_{\mathrm{num}}\mid G)$.

For each imbalance regime, we generate $n_{\mathrm{train}}=60{,}000$ training samples and $n_{\mathrm{test}}=20{,}000$ test samples independently from the data-generating distribution. The generated sample size is also 20,000. All results are aggregated across ten random seeds.

\subsection{Categorical Representations and Methods}

We compare two continuous representations of the joint categorical state $G$. The one-hot representation maps each state $g\in\{1,\ldots,K\}$ to the vertex $e_g\in\mathbb R^K$, and generated categorical values are decoded by the argmax rule. The scaled-logit representation first smooths the categorical observation,
\[
p_j^{(\varepsilon)}(g)
=(1-\varepsilon)\mathbf 1\{j=g\}+\frac{\varepsilon}{K},
\qquad j=1,\ldots,K,
\]
then applies additive log-ratio coordinates with a fixed reference category $r$,
\[
\eta_j(g)=\log\frac{p_j^{(\varepsilon)}(g)}{p_r^{(\varepsilon)}(g)},
\qquad j\ne r.
\]
The logits are normalized by the prototype magnitude
\[
a_K=\log\frac{1-\varepsilon+\varepsilon/K}{\varepsilon/K},
\qquad
z_j(g)=\eta_j(g)/a_K.
\]
In implementation, the reference category is chosen as a high-frequency category from the training data for numerical stability and is fixed before model fitting. This is a coordinate convention, not a separate model component.
 Generated scaled logits are mapped through the inverse softmax and decoded by
maximum-probability selection, matching the argmax rule used for the one-hot
representation.
 Because ALR coordinates depend on the reference category, finite-sample
training can in principle be sensitive to this choice; systematic
reference-category sensitivity analysis is left for future work.

We use a $2\times 2$ comparison that varies the categorical representation and the generative algorithm:
\[
\text{representation}\in\{\text{one-hot},\text{scaled-logit}\},
\qquad
\text{algorithm}\in\{\text{Flow Matching},\text{diffusion}\}.
\]
This gives four methods: OneHot-FM, Logit-FM, OneHot-Diffusion, and Logit-Diffusion. All methods are flat joint generators in the simulation and model the transformed vector $(Z_{\mathrm{num}},Z_{\mathrm{cat}})$ jointly. This choice keeps the comparison focused on representation, with conditional factorization held fixed. Within each algorithmic family, the one-hot and logit variants use the same architecture, optimizer, batch size, and sampling procedure. We therefore interpret the experiments as two within-family representation comparisons, with no general ranking of Flow Matching against diffusion.

\subsection{Evaluation Metrics}

Let $\pi_g^\star=\mathbb P(G=g)$ denote the true joint categorical distribution and let $\widehat\pi_g$ be the generated empirical distribution. We report categorical total variation $TV(\pi^\star,\widehat\pi)=\|\widehat\pi-\pi^\star\|_1/2$, the primary mixed discrepancy
\[
\mathcal{D}_{\mathrm{mix}}(P^\star,\widehat P)
=TV(\pi^\star,\widehat\pi)
+\sum_{g=1}^K \pi_g^\star
W_2\!\left(P^{\mathrm{num},\star}_g,\widehat P^{\mathrm{num}}_g\right),
\]
and the balanced diagnostic
\[
\mathcal{D}_{\mathrm{bal}}(P^\star,\widehat P)
=TV(\pi^\star,\widehat\pi)
+\frac{1}{K}\sum_{g=1}^K
W_2\!\left(P^{\mathrm{num},\star}_g,\widehat P^{\mathrm{num}}_g\right).
\]
The weighted version reflects the population mixed distribution, while the balanced version makes low-probability cell distortions more visible. For the designated rare cell $g_{\mathrm{rare}}$, we also report generated rare-cell mass, absolute rare-cell error, and rare-cell conditional Wasserstein error. Finally, we report a factor-level conditional label diagnostic, $\mathrm{CondErr}_{Y\mid AB}$, and a two-sample AUC for distinguishing real from generated samples.

\subsection{Results}

\medskip
\noindent\textit{Flow Matching results.}
Table~\ref{tab:sim_fm} reports the mean and standard deviation over ten random seeds. At the moderate imbalance level $\rho=0.05$, the rare joint cell has expected training support $n_{\mathrm{train}}\rho=3{,}000$, and the two Flow Matching representations are essentially tied. Under severe imbalance, $\rho=0.01$, the expected rare-cell support falls to 600. In this setting, Logit-FM reduces categorical total variation, $\mathcal{D}_{\mathrm{mix}}$, $\mathcal{D}_{\mathrm{bal}}$, conditional label error, and two-sample AUC relative to OneHot-FM.

\begin{table}[ht]
\centering
\small
\caption{Flow Matching simulation. Here $\rho$ is the rare-cell mass; TV compares joint categorical--label masses; $\mathcal{D}_{\mathrm{mix}}$ combines TV with mass-weighted conditional numerical $W_2$; $\mathcal{D}_{\mathrm{bal}}$ weights cells equally; $\mathrm{CondErr}_{Y\mid AB}$ measures error in $P(Y\mid A,B)$; and two-sample AUC has ideal value 0.5. Entries are mean $\pm$ SD over ten seeds. Lower is better, except AUC is better closer to 0.5. Logit-FM improves all five means at $\rho=0.01$ and is nearly tied with OneHot-FM at $\rho=0.05$.}
\label{tab:sim_fm}
\resizebox{\textwidth}{!}{%
\begin{tabular}{llccccc}
\hline
$\rho$ & Method & $TV(\pi^\star,\widehat\pi)$ & $\mathcal{D}_{\mathrm{mix}}$ & $\mathcal{D}_{\mathrm{bal}}$ & $\mathrm{CondErr}_{Y\mid AB}$ & AUC\\
\hline
0.05 & OneHot-FM & $0.040\pm0.003$ & $0.158\pm0.010$ & $0.190\pm0.010$ & $0.032\pm0.009$ & $0.518\pm0.003$\\
0.05 & Logit-FM & $0.039\pm0.006$ & $0.158\pm0.011$ & $0.190\pm0.011$ & $0.032\pm0.007$ & $0.517\pm0.006$\\
0.01 & OneHot-FM & $0.044\pm0.007$ & $0.165\pm0.008$ & $0.203\pm0.008$ & $0.036\pm0.008$ & $0.523\pm0.005$\\
0.01 & Logit-FM & $0.038\pm0.007$ & $0.157\pm0.007$ & $0.195\pm0.007$ & $0.032\pm0.008$ & $0.517\pm0.006$\\
\hline
\end{tabular}}
\end{table}

\begin{table}[ht]
\centering
\small
\caption{Flow Matching rare-cell diagnostics for $g_{\mathrm{rare}}=(A=3,B=2,Y=1)$, whose true mass is $\rho$. The columns report generated mass, absolute mass error, and conditional numerical $W_2$ within $g_{\mathrm{rare}}$. Entries are mean $\pm$ SD over ten seeds; lower error is better. At $\rho=0.01$, Logit-FM has smaller mean mass error, while the $W_2$ means are similar relative to their SDs.}
\label{tab:sim_fm_rare}
\begin{tabular}{llccc}
\hline
$\rho$ & Method & $\widehat\pi_{g_{\mathrm{rare}}}$ & $|\widehat\pi_{g_{\mathrm{rare}}}-\pi_{g_{\mathrm{rare}}}^\star|$ & Rare-cell $W_2$\\
\hline
0.05 & OneHot-FM & $0.049\pm0.004$ & $0.003\pm0.002$ & $0.121\pm0.025$\\
0.05 & Logit-FM & $0.051\pm0.005$ & $0.004\pm0.003$ & $0.127\pm0.022$\\
0.01 & OneHot-FM & $0.010\pm0.003$ & $0.002\pm0.002$ & $0.284\pm0.077$\\
0.01 & Logit-FM & $0.011\pm0.002$ & $0.001\pm0.001$ & $0.281\pm0.088$\\
\hline
\end{tabular}
\end{table}

The rare-cell diagnostics in Table~\ref{tab:sim_fm_rare} are more variable because they depend on the lowest-support joint categorical cell. At $\rho=0.01$, the rare-cell mass error is directionally smaller for Logit-FM, while the rare-cell conditional Wasserstein values are nearly tied relative to their standard deviations. The broader separation under severe imbalance is therefore read from the combined $\mathcal{D}_{\mathrm{mix}}$, $\mathcal{D}_{\mathrm{bal}}$, categorical-mass, and conditional-label diagnostics rather than from rare-cell Wasserstein alone.

\medskip
\noindent\textit{Diffusion results.}
Table~\ref{tab:sim_diffusion} reports the corresponding diffusion comparison. Within the diffusion family, the scaled-logit representation has slightly lower reported mean mixed-distribution discrepancies than the one-hot representation in both imbalance regimes. At $\rho=0.05$, Logit-Diffusion reduces $\mathcal{D}_{\mathrm{mix}}$ from 0.398 to 0.392 and $\mathcal{D}_{\mathrm{bal}}$ from 0.408 to 0.400. At $\rho=0.01$, it reduces $\mathcal{D}_{\mathrm{mix}}$ from 0.390 to 0.383 and $\mathcal{D}_{\mathrm{bal}}$ from 0.399 to 0.392. The conditional label error is essentially tied, so the diffusion gains are driven mainly by mixed-distribution fidelity.

\begin{table}[ht]
\centering
\small
\caption{Gaussian diffusion simulation. Here $\rho$ is the rare-cell mass; TV compares joint categorical--label masses; $\mathcal{D}_{\mathrm{mix}}$ combines TV with mass-weighted conditional numerical $W_2$; $\mathcal{D}_{\mathrm{bal}}$ weights cells equally; $\mathrm{CondErr}_{Y\mid AB}$ measures error in $P(Y\mid A,B)$; and two-sample AUC has ideal value 0.5. Entries are mean $\pm$ SD over ten seeds. Lower is better, except AUC is better closer to 0.5. Logit-Diffusion lowers both mixed discrepancies and AUC at each $\rho$; TV and conditional-label error are nearly tied.}
\label{tab:sim_diffusion}
\resizebox{\textwidth}{!}{%
\begin{tabular}{llccccc}
\hline
$\rho$ & Method & $TV(\pi^\star,\widehat\pi)$ & $\mathcal{D}_{\mathrm{mix}}$ & $\mathcal{D}_{\mathrm{bal}}$ & $\mathrm{CondErr}_{Y\mid AB}$ & AUC\\
\hline
0.05 & OneHot-Diffusion & $0.016\pm0.003$ & $0.398\pm0.009$ & $0.408\pm0.013$ & $0.0124\pm0.0019$ & $0.608\pm0.004$\\
0.05 & Logit-Diffusion & $0.016\pm0.003$ & $0.392\pm0.008$ & $0.400\pm0.010$ & $0.0122\pm0.0035$ & $0.606\pm0.003$\\
0.01 & OneHot-Diffusion & $0.017\pm0.003$ & $0.390\pm0.008$ & $0.399\pm0.013$ & $0.0143\pm0.0031$ & $0.606\pm0.004$\\
0.01 & Logit-Diffusion & $0.016\pm0.003$ & $0.383\pm0.007$ & $0.392\pm0.009$ & $0.0143\pm0.0036$ & $0.603\pm0.004$\\
\hline
\end{tabular}}
\end{table}

\begin{table}[ht]
\centering
\small
\caption{Gaussian diffusion rare-cell diagnostics for $g_{\mathrm{rare}}=(A=3,B=2,Y=1)$, whose true mass is $\rho$. The columns report generated mass, absolute mass error, and conditional numerical $W_2$ within $g_{\mathrm{rare}}$. Entries are mean $\pm$ SD over ten seeds; lower error is better. The representations have similar rare-cell accuracy: Logit-Diffusion has slightly smaller mass error at both $\rho$ values, while the $W_2$ ordering changes with $\rho$.}
\label{tab:sim_diffusion_rare}
\begin{tabular}{llccc}
\hline
$\rho$ & Method & $\widehat\pi_{g_{\mathrm{rare}}}$ & $|\widehat\pi_{g_{\mathrm{rare}}}-\pi_{g_{\mathrm{rare}}}^\star|$ & Rare-cell $W_2$\\
\hline
0.05 & OneHot-Diffusion & $0.049\pm0.002$ & $0.0019\pm0.0010$ & $0.352\pm0.045$\\
0.05 & Logit-Diffusion & $0.050\pm0.002$ & $0.0013\pm0.0010$ & $0.345\pm0.044$\\
0.01 & OneHot-Diffusion & $0.010\pm0.001$ & $0.0009\pm0.0005$ & $0.407\pm0.077$\\
0.01 & Logit-Diffusion & $0.009\pm0.001$ & $0.0008\pm0.0008$ & $0.415\pm0.090$\\
\hline
\end{tabular}
\end{table}

The diffusion results provide a second within-family representation comparison. Under the same diffusion architecture and training procedure, replacing one-hot categorical coordinates with scaled-logit coordinates gives slightly lower reported mean mixed-distribution metrics in most runs. The difference is modest but consistent across the two imbalance regimes, suggesting that the same coordinate choice can matter within diffusion as well. We do not interpret these results as a direct algorithmic comparison between diffusion and Flow Matching; each model family is used to test the effect of the categorical coordinate system under a fixed generative mechanism.

\section{Real-Data Experiments}
\label{sec:real_data}

We evaluate the logit-coordinate framework on four real mixed-type tabular datasets: Adult, Churn2, Cardio, and Buddy. These datasets contain multiple categorical variables, non-Gaussian numerical variables, label-dependent structure, and higher-order dependence across feature blocks. The experiments address two main questions. First, within a fixed continuous generative model family, does replacing one-hot categorical coordinates with scaled-logit coordinates improve mixed-distribution fidelity? Second, does an explicit block-conditional factorization further improve the preservation of label--categorical--numerical dependence?

To reduce sensitivity to a particular train--validation--test partition, each method is evaluated over ten data splits. These consist of the original benchmark split and nine additional random splits generated using distinct seeds. For a given split, all methods use the same training, validation, and test observations. We report the mean and standard deviation of each evaluation metric over the ten runs. The same model variants and probabilistic factorizations are used on all four datasets; in particular, no dataset-specific Logit-FM variant is introduced for Churn2.

\subsection{Experimental Setup}

For the continuous-representation models, categorical variables are embedded into Euclidean coordinates before applying either Flow Matching or Gaussian diffusion. We write $X_{\mathrm{cat}}$ and $X_{\mathrm{num}}$ for the original categorical and numerical variables, and $Z_{\mathrm{cat}}$ and $Z_{\mathrm{num}}$ for their transformed continuous coordinates. Specifically,
\[
Z_{\mathrm{cat}}=E(X_{\mathrm{cat}})
\]
is either a scaled-logit or one-hot categorical embedding, while
\[
Z_{\mathrm{num}}=T(X_{\mathrm{num}})
\]
is the numerical transformation used by the corresponding generative model.
 The Flow Matching models use empirical Gaussian-quantile transformations for
numerical variables, whereas the Gaussian diffusion models use standardized
numerical coordinates.  Probabilistic factorizations are expressed in the
original variables, whereas Flow Matching and diffusion dynamics are
implemented in the transformed $Z$-coordinates.  For every
continuous-representation model, generated categorical coordinates are
decoded by maximum probability; this holds for both the one-hot and
scaled-logit variants.

\paragraph{Flow Matching models.}
We compare three Flow Matching models. One-Hot FM and Logit FM are flat joint models that transform each mixed-type observation into a single continuous vector,
\[
Z=(Z_{\mathrm{num}},Z_{\mathrm{cat}},Z_y),
\]
and train one joint Flow Matching velocity field. The two models use the same joint-generation framework and differ in their categorical representations: One-Hot FM uses standard one-hot coordinates, whereas Logit FM uses scaled-logit coordinates.
 The flat Flow Matching network has hidden dimension 512 and six residual
blocks, is trained with learning rate $10^{-3}$, and is sampled by a 200-step
Euler ODE solver. These settings are shared by the one-hot and logit variants.

The Block-Conditional Logit FM model uses the factorization
\[
P(y,X_{\mathrm{cat}},X_{\mathrm{num}})
=
P(y)
P(X_{\mathrm{cat}}\mid y)
P(X_{\mathrm{num}}\mid X_{\mathrm{cat}},y).
\]
It first samples the label, then generates the categorical block conditional on the label, and finally generates the numerical block conditional on both the generated categorical variables and the label. This model uses the same scaled-logit representation as Logit FM but introduces additional conditional structure in the joint generator. The definitions of all three Flow Matching models are held fixed across Adult, Churn2, Cardio, and Buddy.

\paragraph{Continuous-representation diffusion models.}
We also evaluate One-Hot Diffusion and Logit Diffusion. Both use Gaussian diffusion in transformed continuous coordinates and the same block-conditional factorization,
\[
P(y,X_{\mathrm{cat}},X_{\mathrm{num}})
=
P(y)
P(X_{\mathrm{cat}}\mid y)
P(X_{\mathrm{num}}\mid X_{\mathrm{cat}},y).
\]
The two methods use the same diffusion parameterization and training protocol, differing only in whether categorical variables are represented by one-hot or scaled-logit coordinates. Neither method uses multinomial categorical diffusion.

\paragraph{TabDDPM baselines.}
TabDDPM \citep{kotelnikov2023tabddpm} is included as an external mixed-type diffusion baseline. We report two baseline versions, denoted TabDDPM-MLP and TabDDPM-CB, corresponding to the MLP and CatBoost versions used in our TabDDPM experiments. Both variants are evaluated on the same ten data splits as the continuous-representation models. Reporting the two versions separately avoids combining results obtained from different TabDDPM configurations.

  The seven evaluated methods are  One-Hot FM, Logit FM, Block-Conditional Logit
FM, One-Hot Diffusion, Logit Diffusion,   and the MLP and CB variants of TabDDPM.

\subsection{Evaluation Metrics}
\label{subsec:real_metrics}

We evaluate distributional fidelity, categorical dependence, and downstream predictive utility. The primary   criterion  is the standardized mixed discrepancy
\[
\mathcal{D}_{\mathrm{mix}}
=
TV(\pi_y^\star,\widehat{\pi}_y)
+
\sum_{c\in\mathcal{Y}}
\pi_y^\star(c)
W_2\!\left(
P^\star(X_{\mathrm{num}}\mid y=c),
\widehat P(\widehat X_{\mathrm{num}}\mid \widehat y=c)
\right),
\]
where numerical variables are standardized before computing the Wasserstein term. We also report the weighted conditional $W_2$ component separately.

Categorical fidelity is assessed using exact total variation over the joint categorical-label configurations and mean marginal total variation across the categorical predictors. Pairwise categorical dependence is measured by the mean absolute error in Cramér's $V$. Downstream utility is evaluated using train-on-synthetic, test-on-real macro-F1 (TSTR).

Each metric is computed for the original benchmark split and nine additional random splits. Results are reported as mean $\pm$ standard deviation over the ten runs. Lower values are better for the distributional and dependence metrics, whereas higher values are better for TSTR.

\subsection{Results}
\label{subsec:real_results}

Tables~\ref{tab:adult_distribution}--\ref{tab:buddy_diagnostics} report the ten-run results. The primary comparisons are within model family: One-Hot FM versus Logit FM, and One-Hot Diffusion versus Logit Diffusion. TabDDPM-MLP and TabDDPM-CB are reported separately as external mixed-type diffusion baselines.

\paragraph{Adult.}
Within Flow Matching, Logit FM improves all distributional and dependence metrics over One-Hot FM, with similar TSTR. Block-Conditional Logit FM further improves every reported metric relative to flat Logit FM. Within diffusion, Logit Diffusion has lower $\mathcal{D}_{\mathrm{mix}}$, conditional $W_2$, and Cramér's $V$ error, whereas One-Hot Diffusion has slightly lower exact and marginal categorical TV. The two diffusion representations have nearly identical TSTR. TabDDPM-CB is more competitive than TabDDPM-MLP on the primary distributional metrics, but also exhibits substantially greater variability across runs.

\begin{table}[ht]
\centering
\caption{Adult distributional fidelity (mean $\pm$ SD over the benchmark split and nine random splits). $\mathcal{D}_{\mathrm{mix}}$ is label TV plus label-weighted conditional numerical $W_2$ on standardized features; Conditional $W_2$ is its transport term; exact categorical TV compares joint categorical--label masses. Lower is better; boldface marks the best mean. The three column minima are attained by Logit Diffusion, TabDDPM-CB, and One-Hot Diffusion, respectively.}
\label{tab:adult_distribution}
\begin{tabular}{lccc}
\toprule
Method
& $\mathcal{D}_{\mathrm{mix}}$
& Conditional $W_2$
& Exact categorical TV \\
\midrule
One-Hot FM
& $0.4967 \pm 0.0186$
& $0.3708 \pm 0.0170$
& $0.4966 \pm 0.0073$ \\

Logit FM
& $0.4128 \pm 0.0141$
& $0.3421 \pm 0.0145$
& $0.3928 \pm 0.0068$ \\

Block-Cond.\ Logit FM
& $0.3798 \pm 0.0125$
& $0.3245 \pm 0.0111$
& $0.3427 \pm 0.0032$ \\

One-Hot Diffusion
& $0.3500 \pm 0.0134$
& $0.3309 \pm 0.0132$
& $\mathbf{0.2578 \pm 0.0018}$ \\

Logit Diffusion
& $\mathbf{0.3452 \pm 0.0111}$
& $0.3256 \pm 0.0104$
& $0.2630 \pm 0.0025$ \\

TabDDPM-MLP
& $0.4995 \pm 0.0548$
& $0.4695 \pm 0.0555$
& $0.2809 \pm 0.0028$ \\

TabDDPM-CB
& $0.3576 \pm 0.0688$
& $\mathbf{0.3238 \pm 0.0685}$
& $0.2729 \pm 0.0047$ \\
\bottomrule
\end{tabular}
\end{table}

\begin{table}[ht]
\centering
 \setlength{\tabcolsep}{5pt}
 \caption{Adult categorical fidelity and utility (mean $\pm$ SD over ten splits). Marginal TV averages categorical-predictor TV; Cramér's $V$ error averages pairwise-dependence error; TSTR is train-on-synthetic, test-on-real macro-F1. Lower errors and higher TSTR are better; boldface marks the best mean. Block-Conditional Logit FM minimizes marginal TV, and Logit Diffusion is best on the other two metrics.}
\label{tab:adult_diagnostics}
\begin{tabular}{lccc}
\toprule
Method
& Marginal categorical TV
& Cramér's $V$ error
& TSTR \\
\midrule
One-Hot FM
& $0.0307 \pm 0.0023$
& $0.0526 \pm 0.0019$
& $0.7968 \pm 0.0027$ \\

Logit FM
& $0.0246 \pm 0.0035$
& $0.0368 \pm 0.0026$
& $0.7956 \pm 0.0030$ \\

Block-Cond.\ Logit FM
& $\mathbf{0.0052 \pm 0.0005}$
& $0.0168 \pm 0.0012$
& $0.8012 \pm 0.0025$ \\

One-Hot Diffusion
& $0.0073 \pm 0.0005$
& $0.0062 \pm 0.0010$
& $0.8079 \pm 0.0031$ \\

Logit Diffusion
& $0.0080 \pm 0.0007$
& $\mathbf{0.0058 \pm 0.0009}$
& $\mathbf{0.8081 \pm 0.0027}$ \\

TabDDPM-MLP
& $0.0154 \pm 0.0006$
& $0.0080 \pm 0.0040$
& $0.7977 \pm 0.0022$ \\

TabDDPM-CB
& $0.0173 \pm 0.0009$
& $0.0092 \pm 0.0006$
& $0.8006 \pm 0.0027$ \\
\bottomrule
\end{tabular}
\end{table}

\paragraph{Churn2.}
Flat Logit FM and One-Hot FM perform similarly, with no consistent advantage in the primary distributional metrics. Block-Conditional Logit FM substantially improves $\mathcal{D}_{\mathrm{mix}}$, conditional $W_2$, categorical fidelity, and TSTR relative to both flat FM models. Within diffusion, Logit Diffusion consistently outperforms One-Hot Diffusion and achieves the best mean in all six reported metrics. TabDDPM-CB improves conditional $W_2$ and TSTR over TabDDPM-MLP, but has markedly worse categorical fidelity.

\begin{table}[htb]
\centering
\caption{Churn2 distributional fidelity (mean $\pm$ SD over the benchmark split and nine random splits). $\mathcal{D}_{\mathrm{mix}}$ is label TV plus label-weighted conditional numerical $W_2$ on standardized features; Conditional $W_2$ is its transport term; exact categorical TV compares joint categorical--label masses. Lower is better; boldface marks the best mean. Logit Diffusion is best overall, and Block-Conditional Logit FM is best among the Flow Matching models, on all three metrics.}
\label{tab:churn2_distribution}
\begin{tabular}{lccc}
\toprule
Method
& $\mathcal{D}_{\mathrm{mix}}$
& Conditional $W_2$
& Exact categorical TV \\
\midrule
One-Hot FM
& $0.2973 \pm 0.0047$
& $0.2762 \pm 0.0050$
& $0.0211 \pm 0.0009$ \\

Logit FM
& $0.3009 \pm 0.0107$
& $0.2797 \pm 0.0094$
& $0.0212 \pm 0.0025$ \\

Block-Cond.\ Logit FM
& $0.2637 \pm 0.0057$
& $0.2445 \pm 0.0057$
& $0.0193 \pm 0.0011$ \\

One-Hot Diffusion
& $0.1499 \pm 0.0041$
& $0.1364 \pm 0.0041$
& $0.0135 \pm 0.0019$ \\

Logit Diffusion
& $\mathbf{0.1436 \pm 0.0036}$
& $\mathbf{0.1314 \pm 0.0029}$
& $\mathbf{0.0122 \pm 0.0017}$ \\

TabDDPM-MLP
& $0.3115 \pm 0.0091$
& $0.2900 \pm 0.0088$
& $0.0216 \pm 0.0018$ \\

TabDDPM-CB
& $0.2835 \pm 0.0079$
& $0.2232 \pm 0.0064$
& $0.0603 \pm 0.0029$ \\
\bottomrule
\end{tabular}
\end{table}

\begin{table}[htb]
\centering
 \setlength{\tabcolsep}{5pt}
 \caption{Churn2 categorical fidelity and utility (mean $\pm$ SD over ten splits). Marginal TV averages categorical-predictor TV; Cramér's $V$ error averages pairwise-dependence error; TSTR is train-on-synthetic, test-on-real macro-F1. Lower errors and higher TSTR are better; boldface marks the best mean. Block-Conditional Logit FM minimizes marginal TV, and Logit Diffusion is best on the other two metrics.}
\label{tab:churn2_diagnostics}
\begin{tabular}{lccc}
\toprule
Method
& Marginal categorical TV
& Cramér's $V$ error
& TSTR \\
\midrule
One-Hot FM
& $0.0042 \pm 0.0018$
& $0.0066 \pm 0.0016$
& $0.7637 \pm 0.0061$ \\

Logit FM
& $0.0035 \pm 0.0011$
& $0.0064 \pm 0.0009$
& $0.7624 \pm 0.0060$ \\

Block-Cond.\ Logit FM
& $\mathbf{0.0017 \pm 0.0006}$
& $0.0055 \pm 0.0007$
& $0.7752 \pm 0.0041$ \\

One-Hot Diffusion
& $0.0031 \pm 0.0010$
& $0.0037 \pm 0.0013$
& $0.8205 \pm 0.0045$ \\

Logit Diffusion
& $0.0022 \pm 0.0009$
& $\mathbf{0.0036 \pm 0.0008}$
& $\mathbf{0.8224 \pm 0.0040}$ \\

TabDDPM-MLP
& $0.0030 \pm 0.0013$
& $0.0061 \pm 0.0014$
& $0.7703 \pm 0.0055$ \\

TabDDPM-CB
& $0.0215 \pm 0.0016$
& $0.0177 \pm 0.0013$
& $0.7937 \pm 0.0047$ \\
\bottomrule
\end{tabular}
\end{table}

\paragraph{Cardio.}
Logit FM improves the primary distributional metrics over One-Hot FM, and Block-Conditional Logit FM provides a further substantial reduction in $\mathcal{D}_{\mathrm{mix}}$ and conditional $W_2$. Within diffusion, Logit Diffusion consistently improves over One-Hot Diffusion and achieves the best mean for all three distributional metrics, Cramér's $V$ error, and TSTR. One-Hot FM and Block-Conditional Logit FM tie for the lowest marginal categorical TV at the reported precision. TabDDPM-CB improves $\mathcal{D}_{\mathrm{mix}}$, conditional $W_2$, and TSTR over TabDDPM-MLP, but performs worse on categorical fidelity and dependence.

\begin{table}[htb]
\centering
\caption{Cardio distributional fidelity (mean $\pm$ SD over the benchmark split and nine random splits). $\mathcal{D}_{\mathrm{mix}}$ is label TV plus label-weighted conditional numerical $W_2$ on standardized features; Conditional $W_2$ is its transport term; exact categorical TV compares joint categorical--label masses. Lower is better; boldface marks the best mean. Logit Diffusion is best overall on all three metrics; within Flow Matching, Logit FM improves on One-Hot FM and block conditioning improves further.}
\label{tab:cardio_distribution}
\begin{tabular}{lccc}
\toprule
Method
& $\mathcal{D}_{\mathrm{mix}}$
& Conditional $W_2$
& Exact categorical TV \\
\midrule
One-Hot FM
& $0.8383 \pm 0.0892$
& $0.8257 \pm 0.0890$
& $0.0147 \pm 0.0009$ \\

Logit FM
& $0.7575 \pm 0.0872$
& $0.7462 \pm 0.0879$
& $0.0133 \pm 0.0011$ \\

Block-Cond.\ Logit FM
& $0.6013 \pm 0.0798$
& $0.5917 \pm 0.0796$
& $0.0120 \pm 0.0004$ \\

One-Hot Diffusion
& $0.4060 \pm 0.0620$
& $0.3963 \pm 0.0613$
& $0.0121 \pm 0.0010$ \\

Logit Diffusion
& $\mathbf{0.3814 \pm 0.0505}$
& $\mathbf{0.3729 \pm 0.0506}$
& $\mathbf{0.0106 \pm 0.0007}$ \\

TabDDPM-MLP
& $0.9592 \pm 0.0873$
& $0.9460 \pm 0.0876$
& $0.0143 \pm 0.0006$ \\

TabDDPM-CB
& $0.7112 \pm 0.0834$
& $0.6781 \pm 0.0836$
& $0.0332 \pm 0.0013$ \\
\bottomrule
\end{tabular}
\end{table}

\begin{table}[htb]
\centering
 \setlength{\tabcolsep}{5pt}
 \caption{Cardio categorical fidelity and utility (mean $\pm$ SD over ten splits). Marginal TV averages categorical-predictor TV; Cramér's $V$ error averages pairwise-dependence error; TSTR is train-on-synthetic, test-on-real macro-F1. Lower errors and higher TSTR are better; boldface marks the best mean at the reported precision. One-Hot FM and Block-Conditional Logit FM tie on marginal TV, and Logit Diffusion is best on the other two metrics.}
\label{tab:cardio_diagnostics}
\begin{tabular}{lccc}
\toprule
Method
& Marginal categorical TV
& Cramér's $V$ error
& TSTR \\
\midrule
One-Hot FM
& $\mathbf{0.0010 \pm 0.0001}$
& $0.0048 \pm 0.0005$
& $0.7366 \pm 0.0014$ \\

Logit FM
& $0.0012 \pm 0.0004$
& $0.0040 \pm 0.0006$
& $0.7371 \pm 0.0013$ \\

Block-Cond.\ Logit FM
& $\mathbf{0.0010 \pm 0.0002}$
& $0.0028 \pm 0.0003$
& $0.7386 \pm 0.0012$ \\

One-Hot Diffusion
& $0.0014 \pm 0.0002$
& $0.0028 \pm 0.0005$
& $0.7398 \pm 0.0011$ \\

Logit Diffusion
& $0.0011 \pm 0.0004$
& $\mathbf{0.0027 \pm 0.0004}$
& $\mathbf{0.7403 \pm 0.0012}$ \\

TabDDPM-MLP
& $0.0032 \pm 0.0003$
& $0.0044 \pm 0.0006$
& $0.7360 \pm 0.0014$ \\

TabDDPM-CB
& $0.0092 \pm 0.0006$
& $0.0120 \pm 0.0007$
& $0.7385 \pm 0.0016$ \\
\bottomrule
\end{tabular}
\end{table}

\paragraph{Buddy.}
Logit FM improves substantially over One-Hot FM across all reported metrics. Block-Conditional Logit FM provides a further large reduction in $\mathcal{D}_{\mathrm{mix}}$ and conditional $W_2$, and gives the best Flow Matching results overall. Within diffusion, One-Hot and Logit Diffusion are very close: One-Hot Diffusion has slightly better exact categorical TV, Cramér's $V$ error, and TSTR, while Logit Diffusion has slightly lower $\mathcal{D}_{\mathrm{mix}}$ and conditional $W_2$. TabDDPM-CB consistently improves over TabDDPM-MLP but remains behind the block-conditional FM and diffusion models on the primary fidelity metrics.

\begin{table}[htb]
\centering
\caption{Buddy distributional fidelity (mean $\pm$ SD over the benchmark split and nine random splits). $\mathcal{D}_{\mathrm{mix}}$ is label TV plus label-weighted conditional numerical $W_2$ on standardized features; Conditional $W_2$ is its transport term; exact categorical TV compares joint categorical--label masses. Lower is better; boldface marks the best mean. Block-Conditional Logit FM minimizes the first two metrics, and One-Hot Diffusion minimizes exact categorical TV.}
\label{tab:buddy_distribution}
\begin{tabular}{lccc}
\toprule
Method
& $\mathcal{D}_{\mathrm{mix}}$
& Conditional $W_2$
& Exact categorical TV \\
\midrule
One-Hot FM
& $0.4453 \pm 0.0137$
& $0.2663 \pm 0.0081$
& $0.2943 \pm 0.0076$ \\

Logit FM
& $0.3177 \pm 0.0157$
& $0.2532 \pm 0.0082$
& $0.1507 \pm 0.0104$ \\

Block-Cond.\ Logit FM
& $\mathbf{0.1826 \pm 0.0057}$
& $\mathbf{0.1409 \pm 0.0043}$
& $0.1096 \pm 0.0015$ \\

One-Hot Diffusion
& $0.2032 \pm 0.0080$
& $0.1778 \pm 0.0082$
& $\mathbf{0.0761 \pm 0.0027}$ \\

Logit Diffusion
& $0.2017 \pm 0.0074$
& $0.1751 \pm 0.0071$
& $0.0801 \pm 0.0023$ \\

TabDDPM-MLP
& $0.2776 \pm 0.0100$
& $0.2140 \pm 0.0089$
& $0.0908 \pm 0.0019$ \\

TabDDPM-CB
& $0.2232 \pm 0.0051$
& $0.1653 \pm 0.0056$
& $0.0773 \pm 0.0014$ \\
\bottomrule
\end{tabular}
\end{table}

\begin{table}[htb]
\centering
 \setlength{\tabcolsep}{5pt}
 \caption{Buddy categorical fidelity and utility (mean $\pm$ SD over ten splits). Marginal TV averages categorical-predictor TV; Cramér's $V$ error averages pairwise-dependence error; TSTR is train-on-synthetic, test-on-real macro-F1. Lower errors and higher TSTR are better; boldface marks the best mean. Block-Conditional Logit FM minimizes marginal TV, and One-Hot Diffusion is best on the other two metrics.}
\label{tab:buddy_diagnostics}
\begin{tabular}{lccc}
\toprule
Method
& Marginal categorical TV
& Cramér's $V$ error
& TSTR \\
\midrule
One-Hot FM
& $0.0318 \pm 0.0019$
& $0.0756 \pm 0.0030$
& $0.9078 \pm 0.0023$ \\

Logit FM
& $0.0168 \pm 0.0035$
& $0.0328 \pm 0.0035$
& $0.9114 \pm 0.0031$ \\

Block-Cond.\ Logit FM
& $\mathbf{0.0064 \pm 0.0010}$
& $0.0177 \pm 0.0019$
& $0.9356 \pm 0.0016$ \\

One-Hot Diffusion
& $0.0090 \pm 0.0013$
& $\mathbf{0.0082 \pm 0.0039}$
& $\mathbf{0.9623 \pm 0.0007}$ \\

Logit Diffusion
& $0.0092 \pm 0.0007$
& $0.0121 \pm 0.0035$
& $0.9618 \pm 0.0011$ \\

TabDDPM-MLP
& $0.0170 \pm 0.0016$
& $0.0188 \pm 0.0029$
& $0.9247 \pm 0.0020$ \\

TabDDPM-CB
& $0.0156 \pm 0.0012$
& $0.0141 \pm 0.0017$
& $0.9300 \pm 0.0021$ \\
\bottomrule
\end{tabular}
\end{table}

\subsection{Summary of Findings}

The ten-run experiments show three main patterns. First, Logit FM improves the primary distributional metrics over One-Hot FM on Adult, Cardio, and Buddy, while the two flat representations are comparable on Churn2. Second, Block-Conditional Logit FM consistently improves over flat Logit FM across all four datasets, indicating that categorical representation and conditional factorization address complementary aspects of mixed-data fidelity.

Third, Logit Diffusion improves over One-Hot Diffusion on most metrics for Adult, Churn2, and Cardio. The two diffusion representations are nearly tied on Buddy, with different metrics favoring each representation. The separate TabDDPM results also show that its MLP and CatBoost variants can behave differently: the CatBoost version generally improves conditional Wasserstein error and TSTR, but may worsen categorical fidelity.

Overall, the results provide uncertainty-quantified, within-family evidence that logit coordinates can improve mixed tabular generation, while explicit conditional structure is important for preserving higher-order dependence. Comparisons across Flow Matching, diffusion, and TabDDPM should remain descriptive because these model families differ in architecture, objective, and training procedure.

\section{Discussion}
The theoretical and empirical results support a representation-level view of mixed tabular generation. Categorical variables are not ordinary Euclidean coordinates: their distributions live on probability simplices, and their statistical behavior is especially delicate under imbalance. The proposed logit-coordinate framework addresses this issue by embedding categorical variables in smoothed natural-parameter coordinates before applying continuous generative dynamics. This construction applies across generative algorithms. Flow Matching and diffusion use different training objectives and sampling procedures, and both can operate on the same transformed logit representation.

\medskip
\noindent\textit{Choice of fidelity discrepancy.}
The mixed discrepancy $\mathcal{D}_{\mathrm{mix}}$ is designed to respect the intrinsic mixture structure of mixed continuous--categorical distributions. Errors in generative modeling arise in two distinct forms: misallocation of categorical mass and distortion of class-conditional continuous geometry. By separating these components, $\mathcal{D}_{\mathrm{mix}}$ isolates marginal imbalance from within-class transport distortion and makes explicit which aspect of the generator contributes to overall error.

Wasserstein distances provide a natural notion of distributional comparison for transport-based generative models \citep{Arjovsky2017, villani2009optimal, Peyre2019}. They metrize weak convergence under suitable moment conditions and are stable under Lipschitz pushforward maps, properties that are well suited for analyzing continuous generative dynamics. In this setting, the Wasserstein component governs the stability of class-conditional continuous distributions, while categorical marginal errors are controlled through the stability of logit-coordinate decoding.

While a joint optimal transport metric could be imposed on the product space $\mathbb R^p\times\{1,\ldots,K\}$, as formalized in Appendix~A.3.3, such constructions require a scale parameter to balance continuous distortion against categorical mismatch. The proposed decomposition avoids this calibration choice and aligns with the separate mechanisms governing categorical mass and conditional continuous geometry in logit-coordinate generative models.

\medskip
\noindent\textit{Stability and transport geometry.}
The stability analysis helps explain why the logit representation can improve reported mixed-data fidelity in the evaluated model families. After categorical variables are embedded in logit coordinates, both Flow Matching and diffusion define continuous generative dynamics in a Euclidean transformed space. Errors in the learned dynamics then propagate to the decoded mixed distribution through two steps: Wasserstein stability in transformed coordinates and softmax stability of the categorical decoding map. This perspective separates the geometric effect of the categorical representation from the particular training objective used to learn the dynamics. Thus, empirical gains of logit-coordinate models are best interpreted as representation-level improvements.

\medskip
\noindent\textit{Rare-cell imbalance.}
The convergence analysis links the representation question to the statistical difficulty of mixed-data generation. For a categorical cell with population mass $\pi_k^\star$, the conditional component is learned from an effective sample size of order $n\pi_k^\star$. Rare cells therefore create two coupled demands: a generator must preserve their probability mass and estimate their conditional numerical laws from limited observations. This mechanism is consistent with the controlled simulations, in which the benefit of scaled-logit coordinates is most visible under severe imbalance. It also clarifies why categorical representation and conditional modeling must be considered together.

\medskip
\noindent\textit{Representation versus conditional structure.}
Mixed tabular generation therefore involves two related but distinct design problems. The first is geometric: categorical variables should be represented in coordinates that respect their probabilistic structure. The scaled-logit embedding addresses this issue by mapping categorical observations into smoothed natural-parameter coordinates before applying continuous generative dynamics. The second is structural: realistic tabular data often contain strong dependence across labels, categorical variables, and numerical variables. A flat joint model may reproduce marginal distributions reasonably well while still failing to preserve conditional numerical structure within categorical-label cells. The block-conditional extension addresses this issue by explicitly factorizing the joint distribution into label, categorical, and numerical components.

\medskip
\noindent\textit{Interpreting Flow Matching and diffusion comparisons.}
The Flow Matching and diffusion experiments provide complementary tests of the same representation principle rather than a direct ranking of generative algorithms. Because the two families differ in their objectives, architectures, conditioning mechanisms, and sampling procedures, the primary comparisons are within family: One-Hot FM versus Logit FM, and One-Hot Diffusion versus Logit Diffusion. These comparisons isolate the effect of replacing one-hot coordinates with scaled-logit coordinates while holding the model family fixed. Results over the original benchmark partition and nine additional random partitions show that Logit FM improves the primary distributional metrics on three datasets and is comparable on Churn2, while Logit Diffusion generally improves over or matches One-Hot Diffusion. The two TabDDPM variants serve as external references rather than a basis for direct cross-family ranking.

\medskip
\noindent\textit{Limitations and future directions.}
The present results leave several questions open. First, logit coordinates address categorical representation, whereas higher-order dependence in mixed tabular data requires additional structural modeling. The block-conditional results indicate that preserving numerical structure within categorical-label cells may require additional architectural structure.

Second, the diffusion experiments evaluate only particular Gaussian diffusion parameterizations in transformed coordinates. Performance may depend on the denoising objective, noise schedule, conditioning mechanism, and decoding rule.

Third, the theory isolates the representation mechanism under population and nonparametric estimation conditions. A guarantee for the implemented neural procedures would also need to account for optimization error, finite-step sampling error, hyperparameter sensitivity, and model-selection effects. The experiments over ten data splits quantify variability across partitions and runs, while broader studies should examine sensitivity to computational budgets and design choices such as smoothing, logit scaling, the ALR reference category, and categorical decoding.

Taken together, the analysis and experiments identify the categorical coordinate system as a substantive design choice for mixed tabular generators. Logit coordinates provide a common Euclidean representation for Flow Matching and diffusion, while block-conditional modeling supplies additional structure for preserving conditional dependence. Promising directions include richer conditional architectures, support-aware diffusion mechanisms, Fisher-weighted objectives, alternative log-ratio representations, and finite-sample theory that incorporates optimization and numerical sampling error.

\section*{Acknowledgments}
This work was supported in part by NSF grant DMS-1952539 and NIH grants R01GM113250, R01GM126002, R01AG065636, R01AG074858, R01AG069895, and U01AG073079.

\newpage
\appendix
\section{Supplemental Proofs}
\label{app:proofs}

\subsection{Proof Details for Stability Results}
\label{app:stability-proofs}

\medskip
\noindent\textit{Proof of Theorem~\ref{thm:unified_stability} and Theorem~\ref{thm:fm_w2_stability}.}
Couple the two terminal laws by using the same initial random variable $X_0\sim P_0$. Let $X_t^\star=\Phi_t^{f^\star}(X_0)$ and $\widehat X_t=\Phi_t^{\hat f}(X_0)$. With $\Delta_t=\widehat X_t-X_t^\star$,
\[
\dot\Delta_t
=
\hat f(t,\widehat X_t)-f^\star(t,X_t^\star)
=
\{\hat f(t,\widehat X_t)-\hat f(t,X_t^\star)\}
+
\{\hat f(t,X_t^\star)-f^\star(t,X_t^\star)\}.
\]
The Lipschitz condition gives, for almost every $t$,
\[
\frac{d}{dt}\|\Delta_t\|
\le
L\|\Delta_t\|+
\|\hat f(t,X_t^\star)-f^\star(t,X_t^\star)\|.
\]
Since $\Delta_0=0$, Gronwall's inequality yields
\[
\|\Delta_1\|
\le
e^L\int_0^1\|\hat f(t,X_t^\star)-f^\star(t,X_t^\star)\|\,dt.
\]
Taking $L^2$ norms and applying Minkowski's inequality gives the stated bound. The Flow Matching statement is the same argument with $f=v$.

\medskip
\noindent\textit{Proof of Corollary~\ref{cor:tv_from_logit_fm}.}
Use the synchronous coupling $Z_t^\star=\Phi_t^{v_Z^\star}(Z_0)$ and $\widehat Z_t=\Phi_t^{\hat v_Z}(Z_0)$. With $\Pi=S(Z_1^\star)$ and $\widehat\Pi=S(\widehat Z_1)$,
  \[
TV(\pi^\star,\widehat\pi)
\le
\mathrm{TV}(\pi^\star,\pi^{\star,(\varepsilon)})
+
\frac{1}{2}\|\mathbb E\Pi-\mathbb E\widehat\Pi\|_1
\le
\varepsilon
+
\frac{1}{2}\mathbb E\|S(Z_1^\star)-S(\widehat Z_1)\|_1.
\] 
The Lipschitz property of $S$ and the preceding flow-stability bound applied in logit space give \eqref{eq:tv_logit_fm_bound}.

\medskip
\noindent\textit{Proof of Corollary~\ref{cor:mixed_fidelity_fm}.}
The categorical part follows from Corollary~\ref{cor:tv_from_logit_fm}. For each class $k$, Theorem~\ref{thm:fm_w2_stability} gives the conditional continuous bound
\[
W_2(P_k^{(c),\star},\widehat P_k^{(c)})
\le
e^{L_k}\int_0^1
\left(\mathbb E\|\hat v_k(t,X_{t,k}^\star)-v_k^\star(t,X_{t,k}^\star)\|_2^2\right)^{1/2}dt.
\]
Multiplying by $\pi_k^\star$, summing over $k$, and adding the total-variation term gives \eqref{eq:mixed_fidelity_fm_bound}.

\medskip
\noindent\textit{Proof of Lemma~\ref{lem:joint_transport_upper}.}
Take a maximal coupling $(D,\widehat D)$ of the categorical marginals, so that $\mathbb P(D\ne\widehat D)=TV(\pi^\star,\widehat\pi)$. Conditional on $D=\widehat D=k$, couple the continuous components by an optimal $W_2$ coupling of $P_k^{(c),\star}$ and $\widehat P_k^{(c)}$. Conditional on $D\ne\widehat D$, use any coupling and the conditional second-moment bound from Assumption~\ref{ass:second_moment} together with
$\|x-y\|_2^2\le 2\|x\|_2^2+2\|y\|_2^2$.
The matched part contributes at most $\sum_k\pi_k^\star W_2^2(P_k^{(c),\star},\widehat P_k^{(c)})$, while the mismatched part contributes at most $4M TV(\pi^\star,\widehat\pi)$ plus the categorical cost $\lambda TV(\pi^\star,\widehat\pi)$.

\medskip
\noindent\textit{Proof of Theorem~\ref{thm:joint_ot_stability}.}
Substitute the class-conditional Wasserstein bounds from Theorem~\ref{thm:fm_w2_stability} and the categorical total-variation bound from Corollary~\ref{cor:tv_from_logit_fm} into Lemma~\ref{lem:joint_transport_upper}.

\medskip
\noindent\textit{Proof of Proposition~\ref{prop:diffusion_stability} and Corollary~\ref{cor:mixed_diffusion_stability}.}
The probability flow ODE has drift $f^\star(t,x)=\mu(t,x)-\frac{1}{2}\sigma^2(t)s^\star(t,x)$, and the learned drift replaces $s^\star$ by $\widehat s$. Applying Theorem~\ref{thm:unified_stability} to these two drifts gives a bound in terms of $\|\widehat f-f^\star\|$. The identity
\[
\widehat f(t,x)-f^\star(t,x)=-\frac{1}{2}\sigma^2(t)\{\widehat s(t,x)-s^\star(t,x)\}
\]
converts the drift error to score-estimation error. Applying the same argument componentwise to the logit and class-conditional probability-flow ODEs gives the mixed diffusion bound.
\subsection{Spectral Bound for the Softmax Jacobian}

\begin{lemma}[Spectral bound for the categorical Fisher matrix]
\label{lem:softmax-spectral-bound}
For any probability vector $p\in\Delta_K$,
\[
\|\mathrm{diag}(p)-pp^\top\|_{\mathrm{op}}
\le
\frac{1}{2}.
\]
Moreover, the bound is sharp and is attained when
$p=(1/2,1/2,0,\dots,0)$.
\end{lemma}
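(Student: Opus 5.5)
The plan is to use that $A:=\mathrm{diag}(p)-pp^\top$ is symmetric positive semidefinite. Then its operator norm equals its largest eigenvalue, which equals $\sup_{\|x\|_2=1} x^\top A x$. For a unit vector $x$,
\[
x^\top A x=\sum_k p_k x_k^2-\Bigl(\sum_k p_k x_k\Bigr)^2=\mathrm{Var}_p(x),
\]
where $\mathrm{Var}_p(x)$ is the variance of the random variable that takes value $x_k$ with probability $p_k$. This same identity gives positive semidefiniteness. The problem therefore reduces to bounding the variance of a discrete random variable whose values satisfy $\sum_k x_k^2=1$.

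For the variance bound, the first step is to show that only the two extreme values matter. Let $j$ and $l$ be indices of the largest and smallest $x_k$ among those with $p_k>0$. Any random variable supported on $[x_l,x_j]$ has variance at most $(x_j-x_l)^2/4$; this is Popoviciu's inequality. It follows from $\mathrm{Var}(X)\le \mathbb E(X-m)^2$ with $m$ the midpoint of the interval, together with $|X-m|\le (x_j-x_l)/2$. Next,
\[
(x_j-x_l)^2\le 2(x_j^2+x_l^2)\le 2\sum_k x_k^2=2
\]
when $j\ne l$. This gives $x^\top A x\le 1/2$. If $j=l$, the variable is constant and the variance is $0$.

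For sharpness, take $p=(1/2,1/2,0,\dots,0)$. Then $A$ has the upper-left $2\times 2$ block $\frac14\begin{pmatrix}1&-1\\-1&1\end{pmatrix}$ and zeros elsewhere. The vector $x=(e_1-e_2)/\sqrt2$ gives $x^\top A x=\frac14\cdot 2=\frac12$. Checking which inequalities become equalities confirms this is the extremal configuration: $x_j=-x_l$, all remaining mass of $x$ is zero, and $p$ splits equally between $j$ and $l$.

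I expect the only delicate step to be combining the Popoviciu bound with the norm constraint so that the constant comes out as exactly $1/2$. A cruder route, such as the Gershgorin bound $\max_k 2p_k(1-p_k)\le 1/2$, also gives $1/2$ and could serve as a backup. For that route, row $k$ has diagonal entry $p_k(1-p_k)$ and off-diagonal absolute sum $p_k\sum_{i\ne k}p_i=p_k(1-p_k)$. Their sum $2p_k(1-p_k)$ is at most $1/2$, so this alternative gives a second, even shorter, proof.
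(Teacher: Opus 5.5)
Your proof is correct and follows the paper's argument essentially step for step: the variance identity $u^\top(\mathrm{diag}(p)-pp^\top)u=\mathrm{Var}_p(u_D)$, then Popoviciu's inequality, then $(\max_i u_i-\min_i u_i)^2\le 2\|u\|_2^2$, with the same extremal pair $p=(1/2,1/2,0,\dots,0)$, $u=(e_1-e_2)/\sqrt{2}$. Your Gershgorin backup is also valid: each row has diagonal entry $p_k(1-p_k)$ and off-diagonal absolute sum $p_k(1-p_k)$, so every eigenvalue is at most $2p_k(1-p_k)\le 1/2$. It is a shorter alternative that the paper does not use.
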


\begin{proof}
Define
\[
M(p)=\mathrm{diag}(p)-pp^\top.
\]
This matrix is symmetric and positive semidefinite.
It equals the covariance matrix of the one-hot random vector
$e_D$, where $P(D=k)=p_k$.

For any unit vector $u\in\mathbb{R}^K$,
\[
u^\top M(p) u
=
\sum_{k=1}^K p_k u_k^2
-
\left(\sum_{k=1}^K p_k u_k\right)^2.
\]
This equals $\mathrm{Var}_p(u_D)$,
the variance of the scalar random variable $u_D$ under distribution $p$.

Thus,
\[
\|M(p)\|_{\mathrm{op}}
=
\sup_{\|u\|_2=1}
\mathrm{Var}_p(u_D).
\]

Popoviciu's variance inequality and $\|u\|_2=1$ give
\[
\mathrm{Var}_p(u_D)
\le \frac{1}{4}\left(\max_i u_i-\min_i u_i\right)^2
\le \frac{1}{2}\|u\|_2^2
=\frac{1}{2}.
\]
The second inequality follows because
$|u_i-u_j|\le \sqrt{2}\,(u_i^2+u_j^2)^{1/2}\le\sqrt{2}\|u\|_2$.
Equality is attained by
$p=(1/2,1/2,0,\dots,0)$ and
$u=(1/\sqrt{2},-1/\sqrt{2},0,\dots,0)^\top$. Therefore
\[
\|M(p)\|_{\mathrm{op}}\le\frac{1}{2}
\quad\text{for all } p\in\Delta_K.
\]
\end{proof}

\subsection{Local KL Expansion in Natural Parameters}

\begin{lemma}[Local KL expansion for categorical exponential family]
\label{lem:kl-expansion}
Let $p_\theta$ denote the categorical distribution
with reduced natural parameters $\theta\in\mathbb{R}^{K-1}$.
Then for sufficiently small $\delta$,
  \[
\mathrm{KL}(p_\theta \| p_{\theta+\delta})
=
\frac{1}{2} \delta^\top I_{\mathrm{red}}(\theta)\delta
+
o(\|\delta\|^2),
\] 
where
  \[
I_{\mathrm{red}}(\theta)
=
\mathrm{diag}(\bar p_\theta)-\bar p_\theta\bar p_\theta^\top,
\qquad
\bar p_\theta=(p_\theta(1),\ldots,p_\theta(K-1))^\top,
\] 
is the Fisher information matrix  in reduced natural coordinates.
\end{lemma}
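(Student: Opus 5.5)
The plan is to use the exponential-family structure of the categorical model and the identity that expresses KL divergence as a Bregman divergence of the log-partition function. In reduced natural coordinates the log-partition is $A(\theta)=\log(1+\sum_{k=1}^{K-1}e^{\theta_k})$. Then
\[
\log p_\theta(k)=\langle\theta,T(k)\rangle-A(\theta),
\]
where $T(k)=e_k\in\mathbb R^{K-1}$ for $k<K$ and $T(K)=0$.

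The first step is to compute the divergence directly:
\[
\mathrm{KL}(p_\theta\|p_{\theta+\delta})
=\mathbb E_{p_\theta}\bigl[\log p_\theta(D)-\log p_{\theta+\delta}(D)\bigr]
=A(\theta+\delta)-A(\theta)-\langle\delta,\mathbb E_{p_\theta}T(D)\rangle .
\]
Since $\mathbb E_{p_\theta}T(D)=\bar p_\theta=\nabla A(\theta)$, this is exactly the Bregman divergence $A(\theta+\delta)-A(\theta)-\langle\nabla A(\theta),\delta\rangle$.

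The second step computes the Hessian. Differentiating $\nabla A(\theta)=\bar p_\theta$ once more gives $\nabla^2A(\theta)=\mathrm{diag}(\bar p_\theta)-\bar p_\theta\bar p_\theta^\top=I_{\mathrm{red}}(\theta)$. This uses $\partial_{\theta_j}p_\theta(k)=p_\theta(k)(\mathbf 1\{j=k\}-p_\theta(j))$ for $k,j<K$.

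The third step is a second-order Taylor expansion of the smooth function $A$ with integral remainder, exactly as in the proof of Proposition~\ref{prop:local_kl_control}:
\[
\mathrm{KL}(p_\theta\|p_{\theta+\delta})=\int_0^1(1-s)\,\delta^\top I_{\mathrm{red}}(\theta+s\delta)\delta\,ds
=\tfrac12\delta^\top I_{\mathrm{red}}(\theta)\delta+R(\delta),
\]
where
\[
R(\delta)=\int_0^1(1-s)\,\delta^\top\{I_{\mathrm{red}}(\theta+s\delta)-I_{\mathrm{red}}(\theta)\}\delta\,ds .
\]

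The only point needing care is showing $R(\delta)=o(\|\delta\|^2)$. For this I use continuity of $\vartheta\mapsto I_{\mathrm{red}}(\vartheta)$, which holds because $p_\vartheta$ is smooth in $\vartheta$. Continuity gives
\[
\sup_{s\in[0,1]}\|I_{\mathrm{red}}(\theta+s\delta)-I_{\mathrm{red}}(\theta)\|_{\mathrm{op}}\to0
\quad\text{as }\delta\to0,
\]
and therefore $|R(\delta)|\le\tfrac12\|\delta\|^2\cdot o(1)$. Alternatively, since $A$ is $C^3$, the remainder is in fact $O(\|\delta\|^3)$ with a constant that is locally uniform in $\theta$.

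I expect no real obstacle; the computation is routine. The one step to state carefully is the identification $\nabla A=\bar p_\theta$, which is what makes the linear term cancel.
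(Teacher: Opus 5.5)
Your proposal is correct and follows essentially the same route as the paper, which writes $\mathrm{KL}(p_\theta\|p_{\theta+\delta})$ as the Bregman divergence $A(\theta+\delta)-A(\theta)-\nabla A(\theta)^\top\delta$ of the log-partition function, uses $\nabla A=\bar p_\theta$ and $\nabla^2A=I_{\mathrm{red}}(\theta)$, and concludes by a second-order Taylor expansion. You additionally write out the details the paper treats as standard exponential-family facts: the derivation of the Bregman identity, the Hessian computation, and the integral-remainder argument showing the remainder is $o(\|\delta\|^2)$.
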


\begin{proof}
The categorical model forms an exponential family with
log-partition function
\[
A(\theta)
=
\log\!\left(
1+\sum_{k=1}^{K-1} e^{\theta_k}
\right).
\]
Standard exponential-family theory gives
  \[
\nabla A(\theta)=\bar p_\theta,
\qquad
\nabla^2 A(\theta)=I_{\mathrm{red}}(\theta).
\] 

The KL divergence between two exponential-family members satisfies
\[
\mathrm{KL}(p_\theta\|p_{\theta+\delta})
=
A(\theta+\delta)-A(\theta)
-
\nabla A(\theta)^\top\delta.
\]
A second-order Taylor expansion of $A$ around $\theta$
yields the stated formula.
See \citet{amari2000methods} for a general treatment.
\end{proof}

\subsection{Proof of Proposition~\ref{prop:pop-min}}
\label{app:regression}

\begin{proof}
Let $v$ be any measurable function with $\mathbb E\|v(X)\|^2<\infty$ and define
$v^\star(X)=\mathbb E[U\mid X]$.
Write
\[
U - v(X) = (U - v^\star(X)) + (v^\star(X)-v(X)).
\]
Taking squared norms and expanding,
\[
\|U-v(X)\|^2
=
\|U-v^\star(X)\|^2 + \|v^\star(X)-v(X)\|^2
+2\langle U-v^\star(X),\, v^\star(X)-v(X)\rangle.
\]
Taking expectations, the cross term vanishes because
\[
\mathbb E\big[\langle U-v^\star(X),\, v^\star(X)-v(X)\rangle\big]
=
\mathbb E\Big[\,
\mathbb E\big[\langle U-v^\star(X),\, v^\star(X)-v(X)\rangle \mid X\big]
\Big],
\]
and conditional on $X$, $v^\star(X)-v(X)$ is deterministic while
$\mathbb E[U-v^\star(X)\mid X]=0$. Hence the cross term is $0$, and
\[
\mathbb E\|U-v(X)\|^2
=
\mathbb E\|U-v^\star(X)\|^2 + \mathbb E\|v^\star(X)-v(X)\|^2
\ge
\mathbb E\|U-v^\star(X)\|^2,
\]
showing $v^\star$ is a minimizer.
\end{proof}

\subsection{Proof of Lemma~\ref{lem:sieve}}
\label{app:sieve}
\begin{proof}
Recall the regression model
  \[
X=(Y_t,t), \qquad U=Y_1-Y_0,
\] 
and define
\[
v^\star(x)=\mathbb E[U\mid X=x].
\]

Let
\[
L(v)=\mathbb E\|v(X)-U\|^2,
\qquad
\hat L_n(v)
=
\frac{1}{n}\sum_{i=1}^n\|v(X_i)-U_i\|^2.
\]

Define the empirical minimizer
\[
\hat v_m
\in
\arg\min_{v\in\mathcal F_m}
\widehat L_n(v),
\]
and let
\[
v_m^\star
\in
\arg\min_{v\in\mathcal F_m}
L(v)
\]
denote the population projection.

Because $\mathcal F_m$ is a linear subspace of $L^2(P_X)$,
$v_m^\star$ is the $L^2(P_X)$-projection of $v^\star$ onto $\mathcal F_m$.
Hence the Pythagorean identity holds:
\[
\|v-v^\star\|_{L^2(P_X)}^2
=
\|v-v_m^\star\|_{L^2(P_X)}^2
+
\|v_m^\star-v^\star\|_{L^2(P_X)}^2,
\qquad \forall v\in\mathcal F_m.
\]

For each realized training sample, $\hat v_m\in\mathcal F_m$, so the same
identity applies to the data-dependent empirical minimizer:
\[
\|\hat v_m-v^\star\|_{L^2(P_X)}^2
=
\|\hat v_m-v_m^\star\|_{L^2(P_X)}^2
+
\|v_m^\star-v^\star\|_{L^2(P_X)}^2.
\]

Taking expectations yields
\begin{equation}
\label{eq:bias_variance_appendix}
\mathbb E\|\hat v_m-v^\star\|_{L^2(P_X)}^2
=
\mathbb E\|\hat v_m-v_m^\star\|_{L^2(P_X)}^2
+
\|v_m^\star-v^\star\|_{L^2(P_X)}^2.
\end{equation}

We now bound the two terms separately.

First, under Assumption~\ref{ass:velocity_regularity}(iii),
the conditional variance of $U$ is uniformly bounded.
Since $\mathcal F_m$ is an $m$-dimensional linear space
(Assumption~\ref{ass:velocity_regularity}(iv)),
Theorem~11.1 of \citet{Gyorfi2002}
implies that the least squares estimator satisfies
\[
\mathbb E\|\hat v_m-v_m^\star\|_{L^2(P_X)}^2
\le
C_1 \frac{m}{n},
\]
for a constant $C_1$ depending only on the variance bound.

Second, under Assumption~\ref{ass:velocity_regularity}(i),
$v^\star\in\mathcal H^\alpha(\Omega)$.
For standard spline or wavelet sieve spaces,
classical approximation theory yields
\[
\|v_m^\star-v^\star\|_{L^2(P_X)}^2
\le
C_2 m^{-2\alpha/d},
\]
where $d$ is the input dimension of $X$.

Substituting these bounds into
\eqref{eq:bias_variance_appendix}
gives
\[
\mathbb E\|\hat v_m-v^\star\|_{L^2(P_X)}^2
\le
C_1\frac{m}{n}
+
C_2 m^{-2\alpha/d}.
\]

Absorbing constants completes the proof.
\end{proof}

\subsection{Proof of Lemma~\ref{lem:bridge}}
\label{app:bridge}
\begin{proof}
By Assumption~\ref{ass:density}, for $g\ge 0$,
\[
\begin{aligned}
\mathbb E_{\mathrm{path}}[g(X)]
&=
\int g(x)\, d\mu_{\mathrm{path}}(x)\\
&=
\int g(x)\frac{d\mu_{\mathrm{path}}}{d\mu_{\mathrm{train}}}(x)\,
d\mu_{\mathrm{train}}(x)\\
&\le
\kappa \int g(x)\, d\mu_{\mathrm{train}}(x)
=
\kappa\, \mathbb E_{\mathrm{train}}[g(X)].
\end{aligned}
\]
Taking $g(x)=\|\hat v(x)-v^\star(x)\|^2$ yields the stated inequality.
\end{proof}

\subsection{Proof of Theorem~\ref{thm:mixed-rate}}
\label{app:mix-rate}
\begin{proof}
We treat the logit marginal component and the class-conditional
continuous component separately and combine the bounds according to
the decomposition of $\mathcal{D}_{\mathrm{mix}}$.

By Corollary~  \ref{cor:mixed_fidelity_fm}, the mixed discrepancy satisfies
  \[
\begin{aligned}
\mathcal{D}_{\mathrm{mix}}(P^\star,\widehat P)
\le\;&
\varepsilon
+
\frac{C_{\mathrm{sm}}}{2} e^{L_Z}\int_0^1
\Big(\mathbb{E}_{\mathrm{path}}
\| \hat v_Z(t,Z_t^\star)-v_Z^\star(t,Z_t^\star)\|^2\Big)^{1/2}\,dt \\
&+
\sum_{k=1}^K \pi_k^\star\, e^{L_k}
\int_0^1
\Big(\mathbb{E}_{\mathrm{path}}
\| \hat v_k(t,X_{t,k}^\star)-v_k^\star(t,X_{t,k}^\star)\|^2\Big)^{1/2}\,dt,
\end{aligned}
\] 
where $(Z_t^\star)$ and $(X_{t,k}^\star)$ denote the corresponding
population flows.

Since $\sqrt{\cdot}$ is concave, Jensen's inequality implies
\[
\int_0^1 \sqrt{g(t)}\,dt
\le
\Big(\int_0^1 g(t)\,dt\Big)^{1/2}
\]
for any nonnegative function $g$.
Applying this inequality to each integral yields
  \[
\begin{aligned}
\mathcal{D}_{\mathrm{mix}}(P^\star,\widehat P)
\le\;&
\varepsilon
+
\frac{C_{\mathrm{sm}}}{2} e^{L_Z}
\Bigg(
\int_0^1
\mathbb{E}_{\mathrm{path}}
\| \hat v_Z(t,Z_t^\star)-v_Z^\star(t,Z_t^\star)\|^2
\,dt
\Bigg)^{1/2} \\
&+
\sum_{k=1}^K \pi_k^\star\, e^{L_k}
\Bigg(
\int_0^1
\mathbb{E}_{\mathrm{path}}
\| \hat v_k(t,X_{t,k}^\star)-v_k^\star(t,X_{t,k}^\star)\|^2
\,dt
\Bigg)^{1/2}.
\end{aligned}
\] 

We now relate the path-integrated mean squared errors to the
training-distribution mean squared errors.
For each component, the regression input consists of $(t,Z_t)$
in the logit case and $(t,X_t)$ in the $k$-th continuous case.
By Lemma~\ref{lem:bridge} and Assumption~  \ref{ass:density},
there exists a constant $\kappa>0$ such that
\[
\int_0^1
\mathbb{E}_{\mathrm{path}}
\| \hat v(\cdot)-v^\star(\cdot)\|^2\,dt
\le
\kappa\,
\mathbb{E}_{\mathrm{train}}
\| \hat v(X)-v^\star(X)\|^2.
\]

Substituting this bound into the previous inequality and absorbing
constants ($\kappa$, $e^{L_Z}$, $e^{L_k}$, $C_{\mathrm{sm}}$) yields
  \[
\begin{aligned}
\mathcal{D}_{\mathrm{mix}}(P^\star,\widehat P)
\le\;&
\varepsilon
+
C_1
\Big(
\mathbb{E}_{\mathrm{train}}
\|\hat v_Z(X)-v_Z^\star(X)\|^2
\Big)^{1/2} \\
&+
\sum_{k=1}^K
\pi_k^\star\, C_{2,k}
\Big(
\mathbb{E}_{\mathrm{train}}
\|\hat v_k(X)-v_k^\star(X)\|^2
\Big)^{1/2}.
\end{aligned}
\] 

Applying Lemma~\ref{lem:sieve} to each regression problem gives the
corresponding mean squared error rates.
For the logit component, the input dimension is $d_Z+1$
with Hölder smoothness $\alpha_Z$, so that
\[
\mathbb{E}_{\mathrm{train}}
\|\hat v_Z-v_Z^\star\|^2
\lesssim
n^{-2\alpha_Z/(2\alpha_Z+d_Z+1)}.
\]
Taking square roots yields
\[
C_1
\Big(
\mathbb{E}_{\mathrm{train}}
\|\hat v_Z-v_Z^\star\|^2
\Big)^{1/2}
\lesssim
n^{-\alpha_Z/(2\alpha_Z+d_Z+1)}.
\]

For the $k$-th class-conditional component, the effective sample size
is $n\pi_k^\star$, the input dimension is $p+1$, and the smoothness index is
$\alpha_c$. Lemma~\ref{lem:sieve} therefore gives
\[
\mathbb{E}_{\mathrm{train}}
\|\hat v_k-v_k^\star\|^2
\lesssim
(n\pi_k^\star)^{-2\alpha_c/(2\alpha_c+p+1)}.
\]
Taking square roots and multiplying by $\pi_k^\star$ yields
\[
\pi_k^\star\, C_{2,k}
\Big(
\mathbb{E}_{\mathrm{train}}
\|\hat v_k-v_k^\star\|^2
\Big)^{1/2}
\lesssim
\pi_k^\star\,(n\pi_k^\star)^{-\alpha_c/(2\alpha_c+p+1)}.
\]

Combining the bounds for the logit and continuous components
establishes the claimed result.
\end{proof}

\newpage
\section{Simulation Data-Generating Details}
\label{app:simulation_dgp}

This section gives the full data-generating specification used in Section~7. The joint categorical state is
\[
G=(A,B,Y),\qquad
A\in\{0,1,2,3\},\quad
B\in\{0,1,2\},\quad
Y\in\{0,1\}.
\]
The rare cell is
\[
g_{\mathrm{rare}}=(A=3,B=2,Y=1),
\]
with probability
\[
\mathbb P(G=g_{\mathrm{rare}})=\rho,\qquad \rho\in\{0.05,0.01\}.
\]
For all other cells $g=(a,b,y)\ne g_{\mathrm{rare}}$, probabilities are assigned proportionally to
\[
q_{aby}
=0.63^a0.72^b0.58^y
\cdot 1.45^{\mathbf 1\{(a,b)\in\{(0,0),(1,1),(2,0)\}\}}
\cdot 0.70^{\mathbf 1\{y=1,b=2\}}.
\]
Thus,
\[
\pi_{aby}^\star=\mathbb P(G=(a,b,y))
=
\begin{cases}
\rho, & (a,b,y)=g_{\mathrm{rare}},\\[4pt]
(1-\rho)\,
\dfrac{q_{aby}}{\sum_{(a',b',y')\ne g_{\mathrm{rare}}}q_{a'b'y'}},
& (a,b,y)\ne g_{\mathrm{rare}}.
\end{cases}
\]

Conditional on $G=(a,b,y)$,
\[
X_{\mathrm{num}}\mid G=(a,b,y)\sim N(\mu_{aby},\Sigma_{aby}).
\]
The conditional mean is
\[
\mu_{aby}=\alpha_a+\beta_b+\gamma_y+\delta_{aby},
\]
where
\[
\begin{aligned}
\alpha_0&=(0,0,0,0),&
\alpha_1&=(0.55,-0.20,0.15,0),\\
\alpha_2&=(-0.40,0.45,0.20,-0.15),&
\alpha_3&=(0.20,0.10,-0.35,0.55),
\end{aligned}
\]
\[
\beta_0=(0,0,0,0),\qquad
\beta_1=(0.18,0.35,-0.10,0.25),\qquad
\beta_2=(-0.25,0.10,0.40,-0.20),
\]
and
\[
\gamma_0=(0,0,0,0),\qquad
\gamma_1=(0.35,-0.15,0.30,0.20).
\]
The interaction term is
\[
\delta_{aby}=
\begin{pmatrix}
0.10\mathbf 1\{a=3,y=1\}-0.07\mathbf 1\{b=2\}\\
0.12\mathbf 1\{a=1,b=1\}-0.05y\\
0.08\mathbf 1\{a=2,y=1\}+0.06\mathbf 1\{b=2,y=0\}\\
0.10\mathbf 1\{a=3,b=2\}-0.04a
\end{pmatrix}.
\]
The covariance matrix is
\[
\Sigma_{aby}=D_{aby}R_{aby}D_{aby}+10^{-6}I_4,
\]
where $D_{aby}=\mathrm{diag}(s_{aby})$ and
\[
s_{aby}=
\begin{pmatrix}
0.85+0.07a\\
0.95+0.06b\\
0.90+0.05y+0.03((a+b)\bmod 2)\\
0.80+0.04a+0.04y
\end{pmatrix}.
\]
The correlation matrix $R_{aby}$ has diagonal entries equal to one and off-diagonal entries
\[
\begin{aligned}
(R_{aby})_{12}=(R_{aby})_{21}&=0.20-0.04b,\\
(R_{aby})_{13}=(R_{aby})_{31}&=-0.10+0.03a,\\
(R_{aby})_{24}=(R_{aby})_{42}&=0.15-0.05y,\\
(R_{aby})_{34}=(R_{aby})_{43}&=0.10+0.02b,
\end{aligned}
\]
with all other off-diagonal entries equal to zero.

 \section{Additional Real-Data Results}

\subsection{Distributional Diagnostics}
\label{app:real_data_plots}

In addition to the quantitative results in Section~\ref{sec:real_data}, we provide qualitative distributional diagnostics based on the original benchmark partition of each dataset. These plots are not aggregated over the nine additional random partitions and should be interpreted as representative illustrations. The main evidence is the mean and standard deviation over the ten data splits reported in the main text.

Figures~\ref{fig:num_adult_v4}--\ref{fig:num_buddy_v4} compare representative marginal numerical distributions, Figures~\ref{fig:cat_adult_v4}--\ref{fig:cat_buddy_v4} compare marginal categorical distributions, and Figures~\ref{fig:cond_adult_v4}--\ref{fig:cond_buddy_v4} compare conditional numerical distributions of the form
\[
P(X_{\mathrm{num}}\mid X_{\mathrm{cat}},y).
\]
The conditional plots are particularly relevant to the block-conditional factorization
\[
P(y)P(X_{\mathrm{cat}}\mid y)
P(X_{\mathrm{num}}\mid X_{\mathrm{cat}},y).
\]

\renewcommand{\floatpagefraction}{0.72}
\setcounter{totalnumber}{4}

\begin{figure}[p]
\centering
\includegraphics[width=.92\textwidth,height=.82\textheight,keepaspectratio]{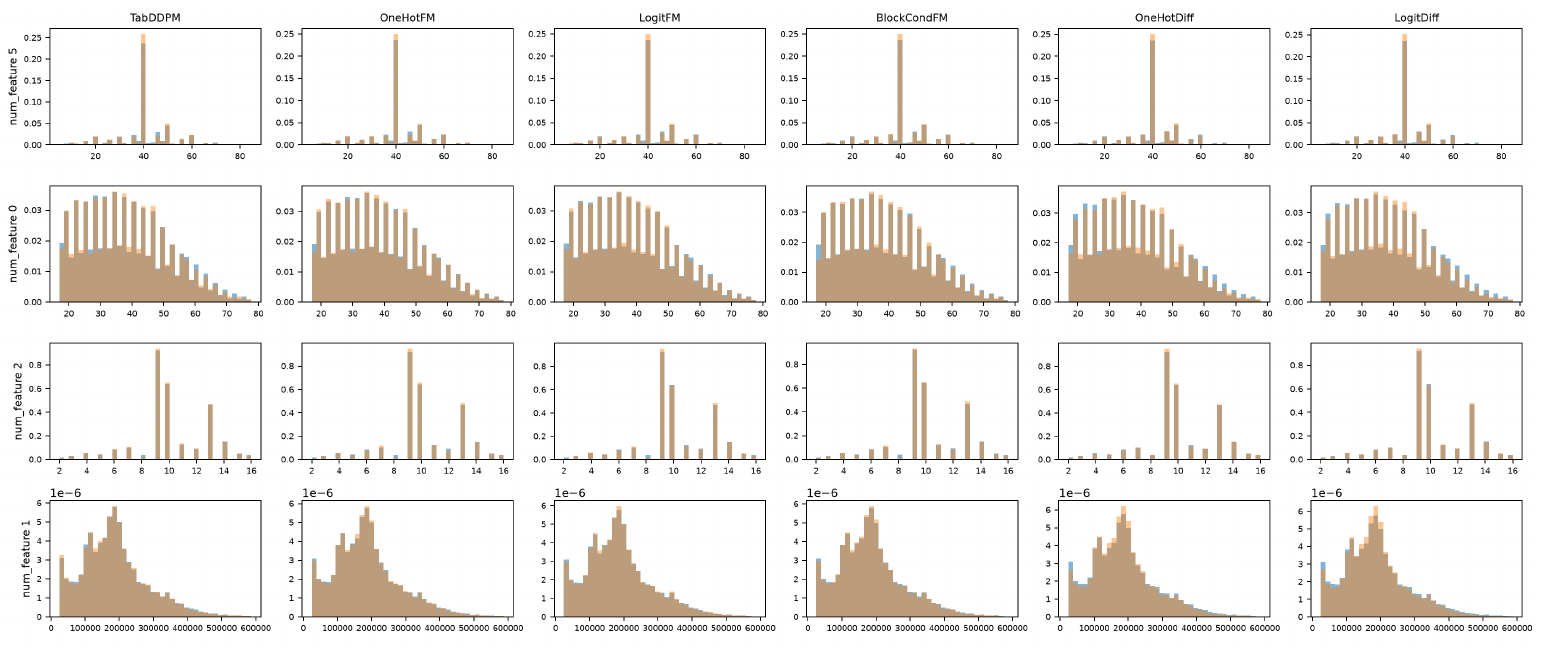}
\caption{Adult marginal numerical distributions on the original benchmark split. Columns show TabDDPM; one-hot, logit, and block-conditional logit Flow Matching; and one-hot/logit diffusion. Rows are selected challenging features. Blue/orange histograms represent real/synthetic observations; the axes give feature value and density. Greater overlap indicates better marginal fidelity. Broad shapes align, with local bin differences. This is a single-split diagnostic of univariate marginals.}
\label{fig:num_adult_v4}
\end{figure}

\begin{figure}[p]
\centering
\includegraphics[width=.92\textwidth,height=.82\textheight,keepaspectratio]{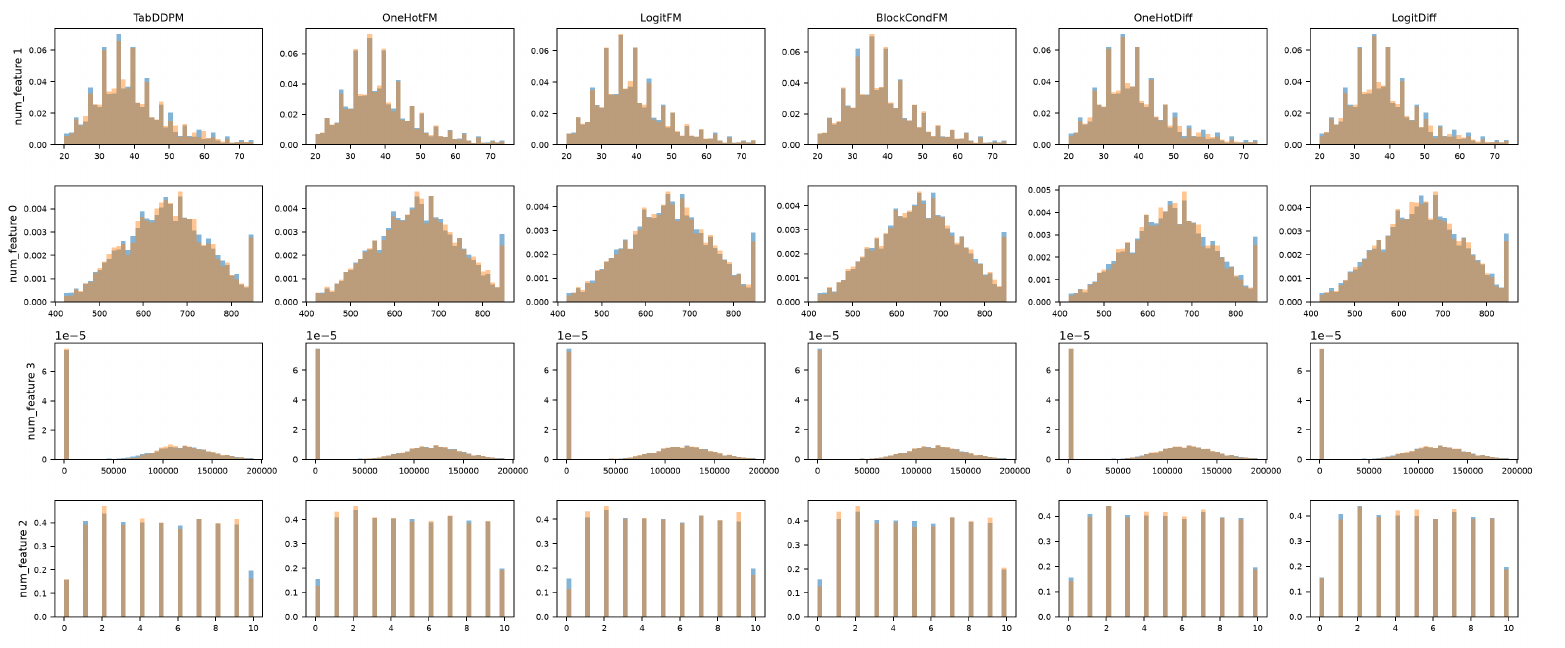}
\caption{Churn2 marginal numerical distributions on the original benchmark split. Columns show TabDDPM; one-hot, logit, and block-conditional logit Flow Matching; and one-hot/logit diffusion. Rows are selected challenging features. Blue/orange histograms represent real/synthetic observations; the axes give feature value and density. Greater overlap indicates better marginal fidelity. All methods recover the main distributional shapes, with modest local and endpoint discrepancies. This is a single-split diagnostic of univariate marginals.}
\label{fig:num_churn_v4}
\end{figure}

\begin{figure}[p]
\centering
\includegraphics[width=.92\textwidth,height=.82\textheight,keepaspectratio]{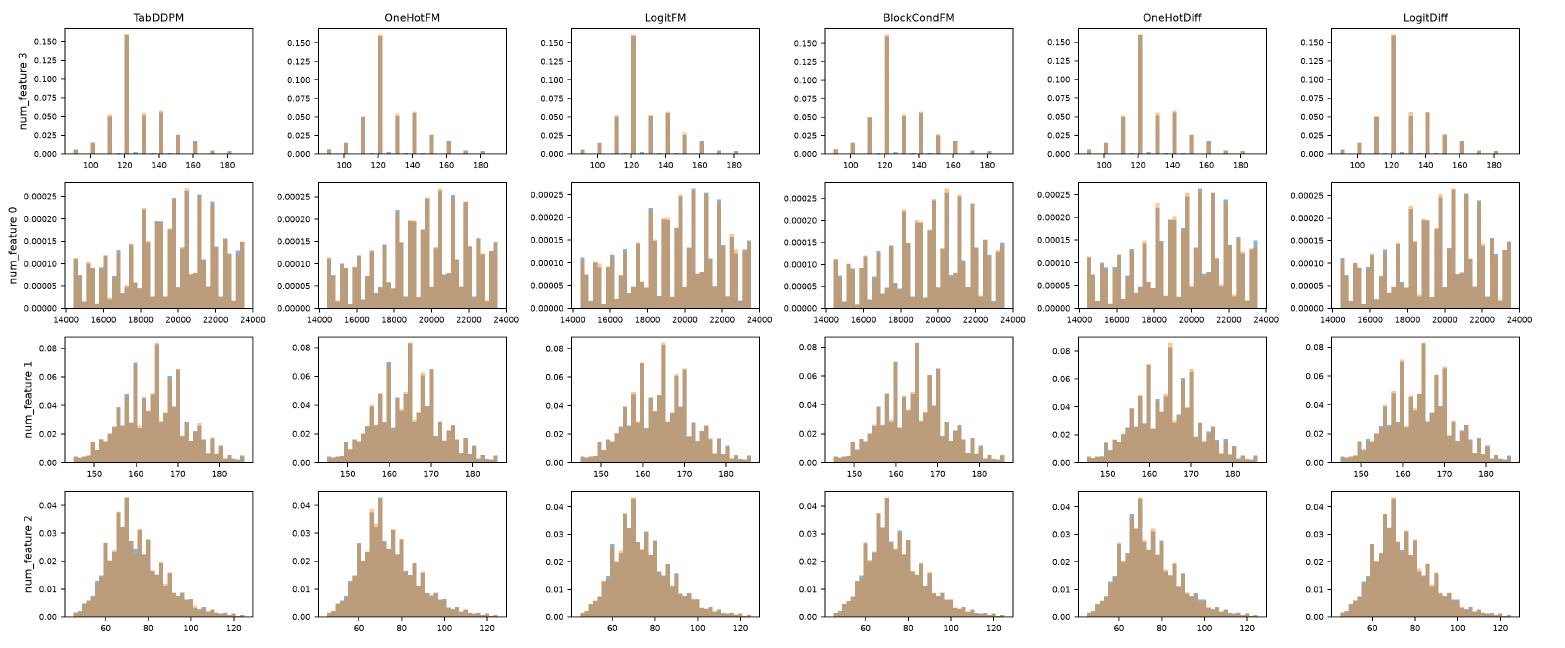}
\caption{Cardio marginal numerical distributions on the original benchmark split. Columns show TabDDPM; one-hot, logit, and block-conditional logit Flow Matching; and one-hot/logit diffusion. Rows are selected challenging features. Blue/orange histograms represent real/synthetic observations; the axes give feature value and density. Greater overlap indicates better marginal fidelity. Spikes, skewness, and broad shapes align closely, with local bin differences. This is a single-split diagnostic of univariate marginals.}
\label{fig:num_cardio_v4}
\end{figure}

\begin{figure}[p]
\centering
\includegraphics[width=.92\textwidth,height=.82\textheight,keepaspectratio]{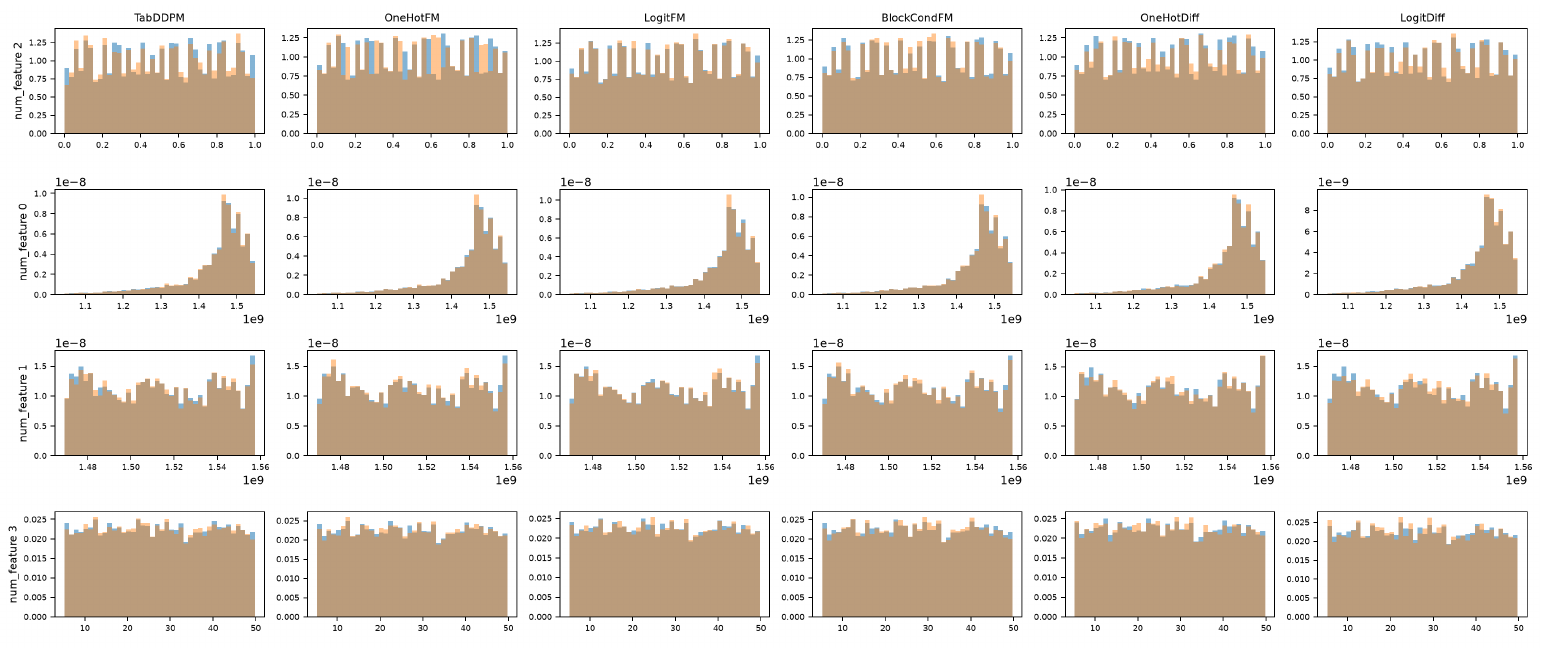}
\caption{Buddy marginal numerical distributions on the original benchmark split. Columns show TabDDPM; one-hot, logit, and block-conditional logit Flow Matching; and one-hot/logit diffusion. Rows are selected challenging features. Blue/orange histograms represent real/synthetic observations; the axes give feature value and density. Greater overlap indicates better marginal fidelity. All methods preserve the broad bounded and skewed shapes, with local frequency differences. This is a single-split diagnostic of univariate marginals.}
\label{fig:num_buddy_v4}
\end{figure}

\begin{figure}[p]
\centering
\includegraphics[width=.74\textwidth,height=.82\textheight,keepaspectratio]{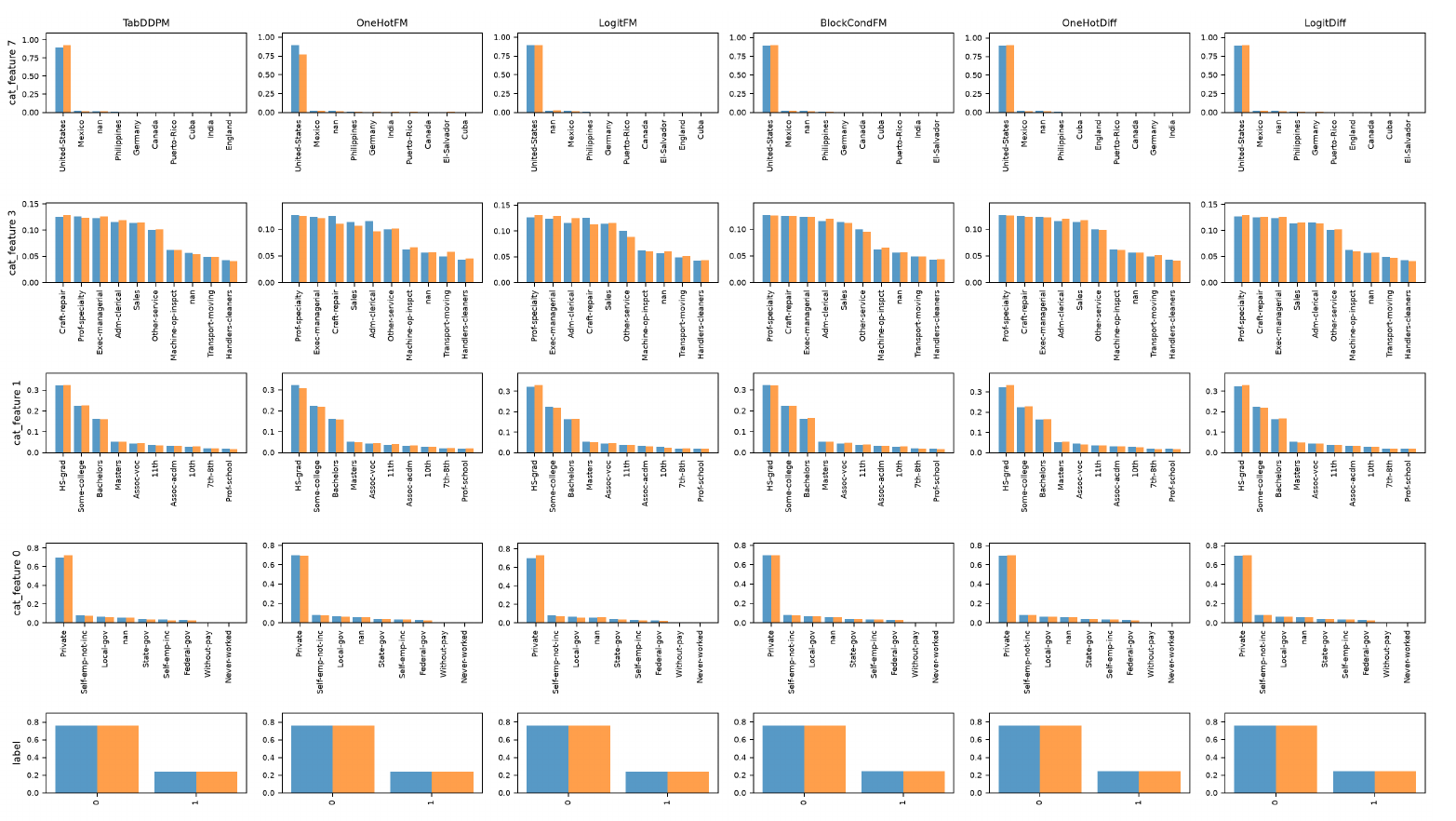}
\caption{Adult marginal categorical distributions on the original benchmark split. Columns show TabDDPM; one-hot, logit, and block-conditional logit Flow Matching; and one-hot/logit diffusion. Rows are selected predictors and the label. Blue/orange bars are real/synthetic category probabilities; smaller gaps indicate lower marginal TV. Most probabilities align, although OneHotFM underrepresents the dominant United-States level in feature 7. This is a single-split marginal diagnostic.}
\label{fig:cat_adult_v4}
\end{figure}

\begin{figure}[p]
\centering
\includegraphics[width=.74\textwidth,height=.82\textheight,keepaspectratio]{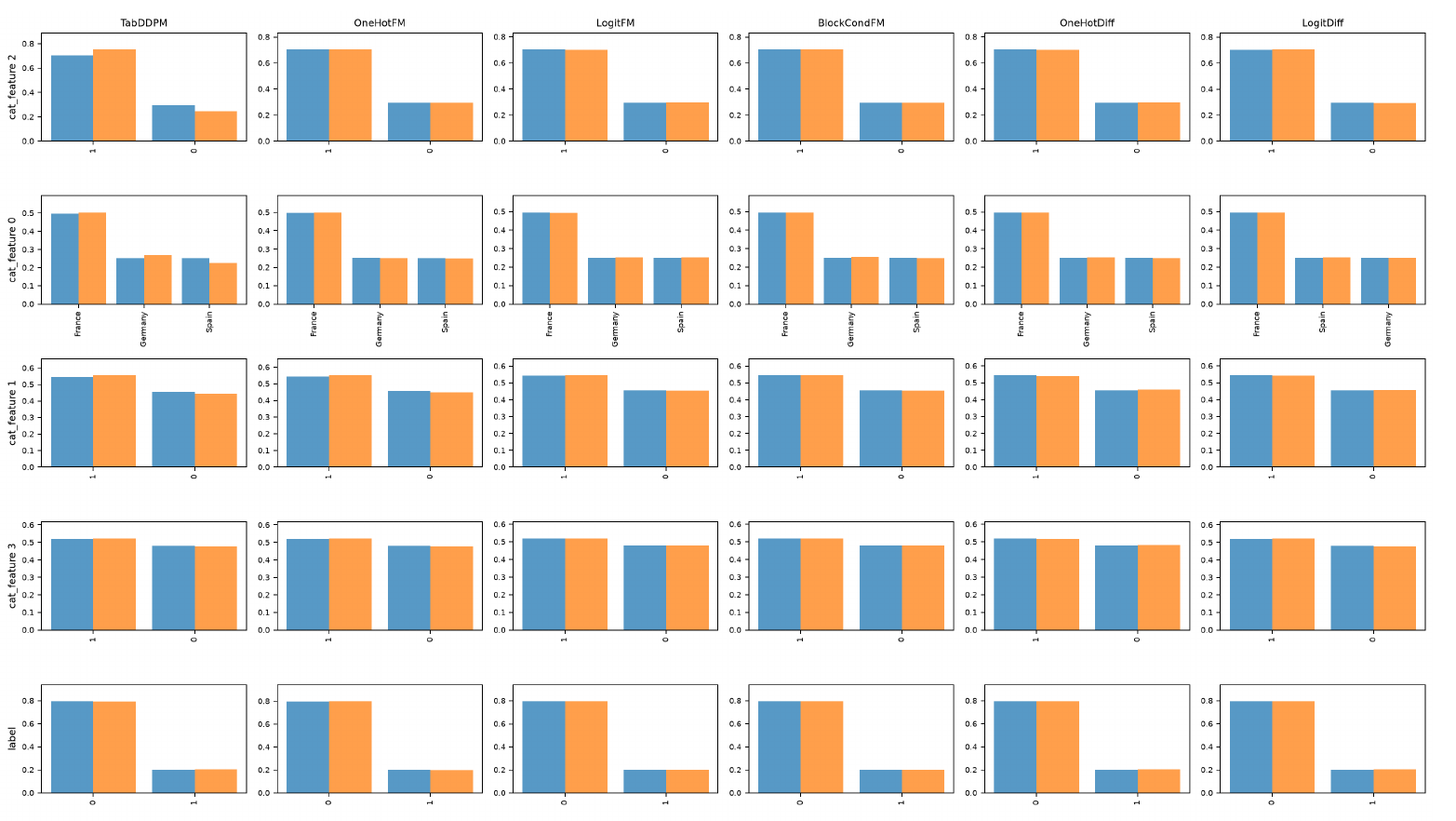}
\caption{Churn2 marginal categorical distributions on the original benchmark split. Columns show TabDDPM; one-hot, logit, and block-conditional logit Flow Matching; and one-hot/logit diffusion. Rows are selected predictors and the label. Blue/orange bars are real/synthetic category probabilities; smaller gaps indicate lower marginal TV. Most paired bars nearly coincide, with the largest displayed shift for TabDDPM in feature 2. This is a single-split marginal diagnostic.}
\label{fig:cat_churn_v4}
\end{figure}

\begin{figure}[p]
\centering
\includegraphics[width=.74\textwidth,height=.82\textheight,keepaspectratio]{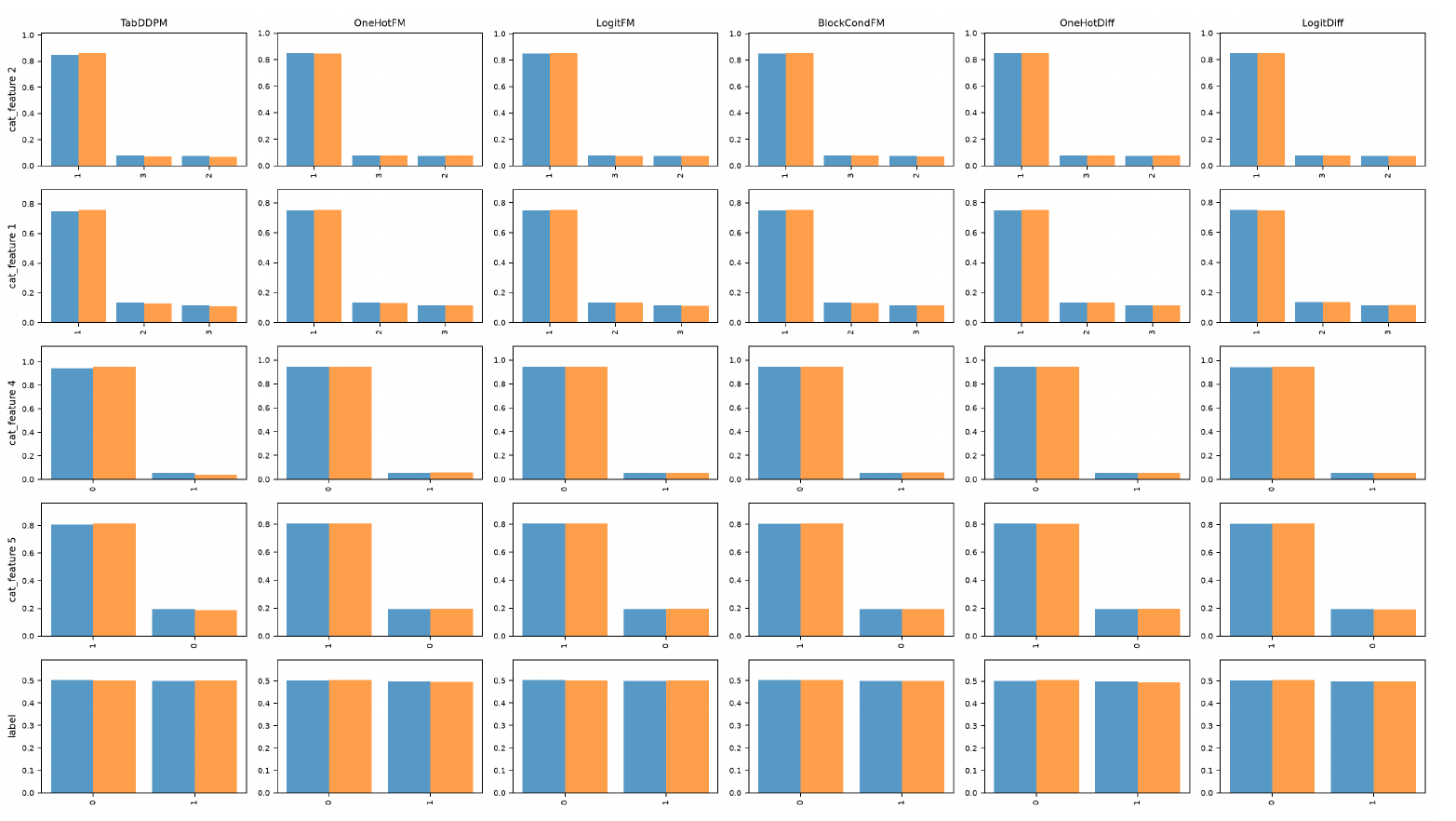}
\caption{Cardio marginal categorical distributions on the original benchmark split. Columns show TabDDPM; one-hot, logit, and block-conditional logit Flow Matching; and one-hot/logit diffusion. Rows are selected predictors and the label. Blue/orange bars are real/synthetic category probabilities; smaller gaps indicate lower marginal TV. The displayed proportions nearly coincide across variables and methods. This is a single-split marginal diagnostic.}
\label{fig:cat_cardio_v4}
\end{figure}

\begin{figure}[p]
\centering
\includegraphics[width=.74\textwidth,height=.82\textheight,keepaspectratio]{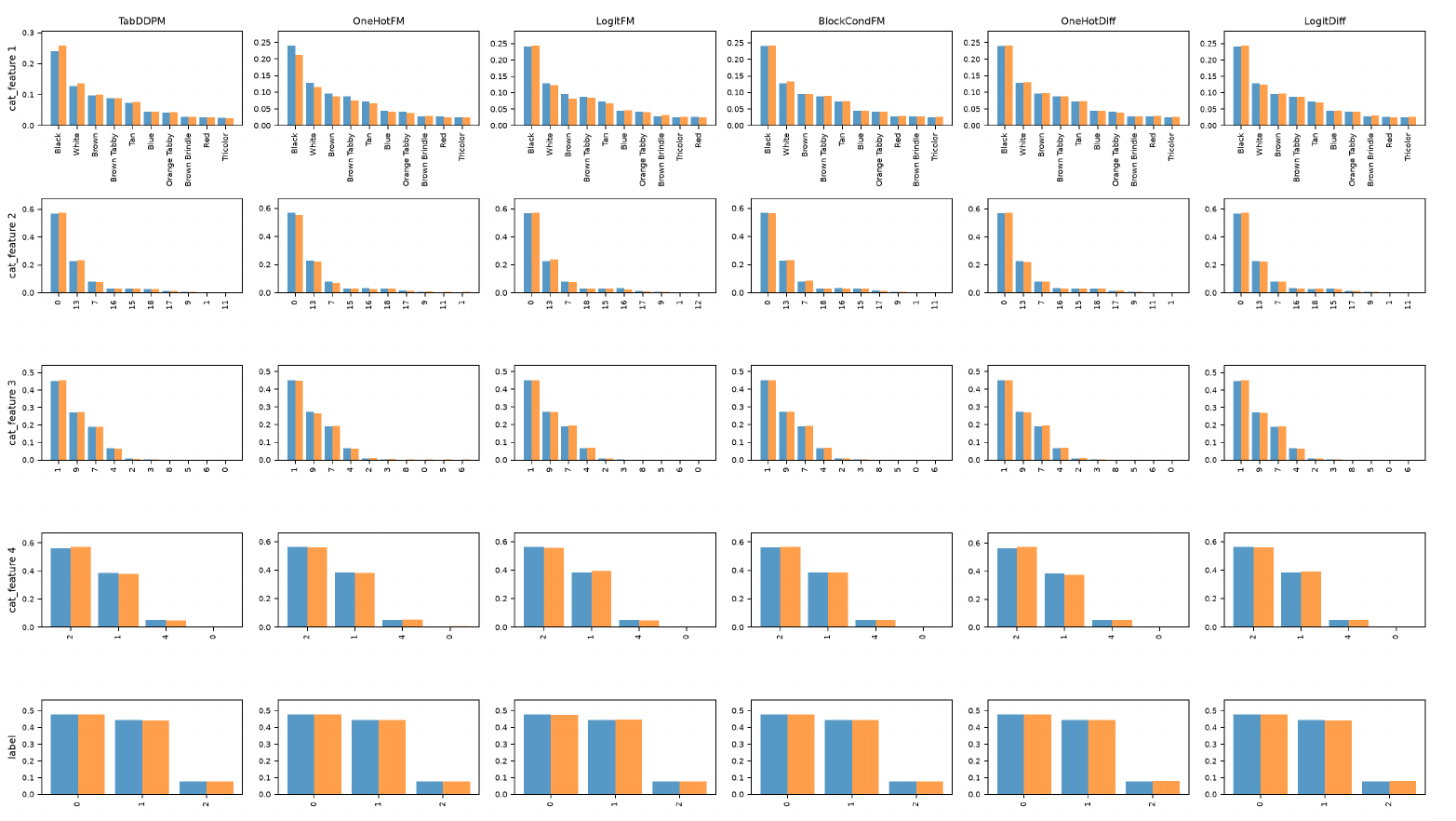}
\caption{Buddy marginal categorical distributions on the original benchmark split. Columns show TabDDPM; one-hot, logit, and block-conditional logit Flow Matching; and one-hot/logit diffusion. Rows are selected predictors and the three-level label. Blue/orange bars are real/synthetic category probabilities; smaller gaps indicate lower marginal TV. Binary and label proportions align closely; larger gaps occur in higher-cardinality predictors. This is a single-split marginal diagnostic.}
\label{fig:cat_buddy_v4}
\end{figure}

\begin{figure}[p]
\centering
\includegraphics[width=.82\textwidth,height=.82\textheight,keepaspectratio]{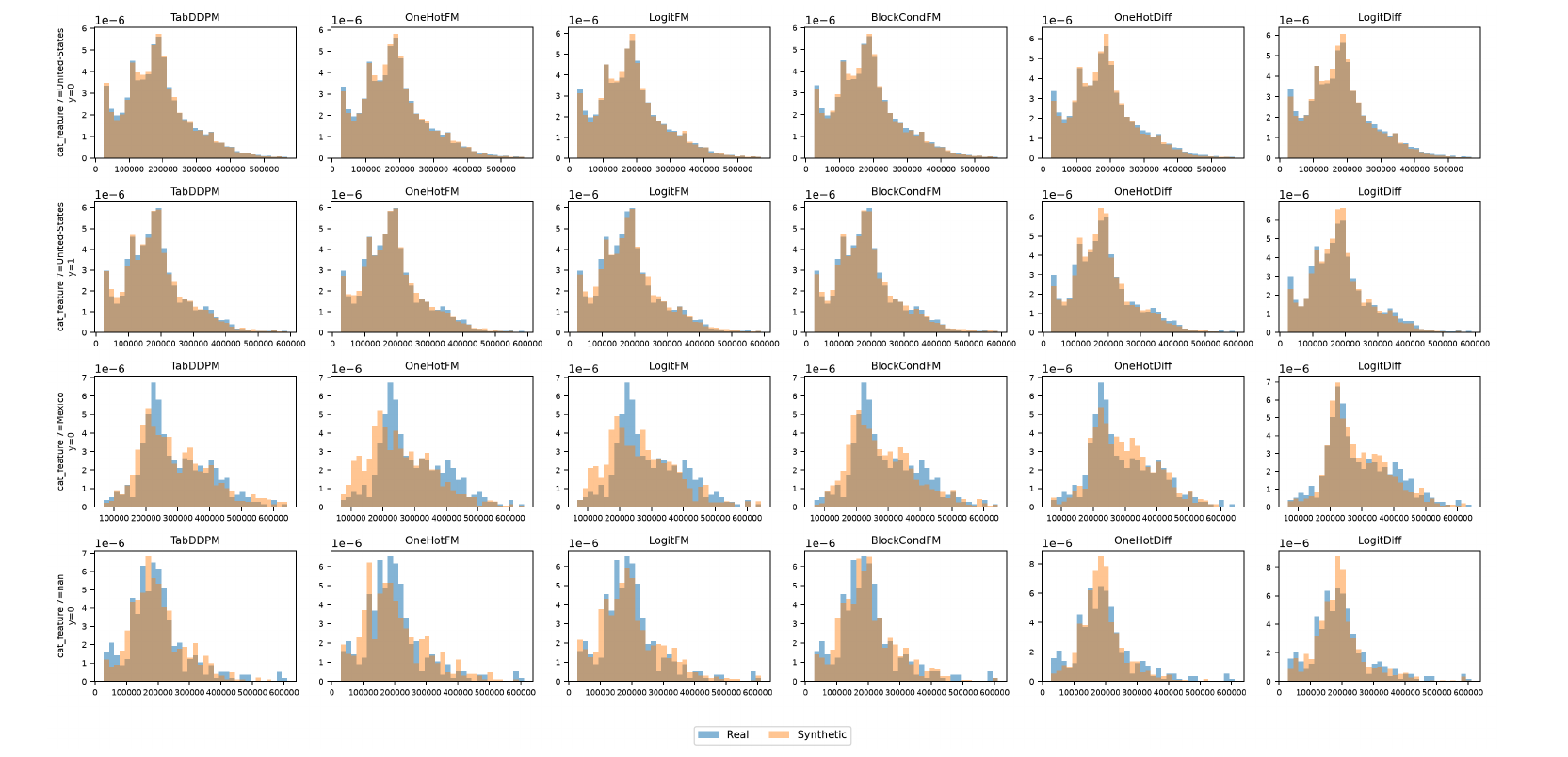}
\caption{Adult conditional numerical distributions on the original benchmark split. Columns show TabDDPM; one-hot, logit, and block-conditional logit Flow Matching; and one-hot/logit diffusion. Rows fix the categorical level and label shown at left and plot $P(X_{\mathrm{num}}\mid X_{\mathrm{cat}},y)$. Blue/orange histograms are real/synthetic; the axes give numerical value and density. Greater overlap indicates better conditional fidelity. The United-States cells align more closely than the displayed Mexico and Iran cells. This is a single-split diagnostic of selected high-support cells.}
\label{fig:cond_adult_v4}
\end{figure}

\begin{figure}[p]
\centering
\includegraphics[width=.82\textwidth,height=.82\textheight,keepaspectratio]{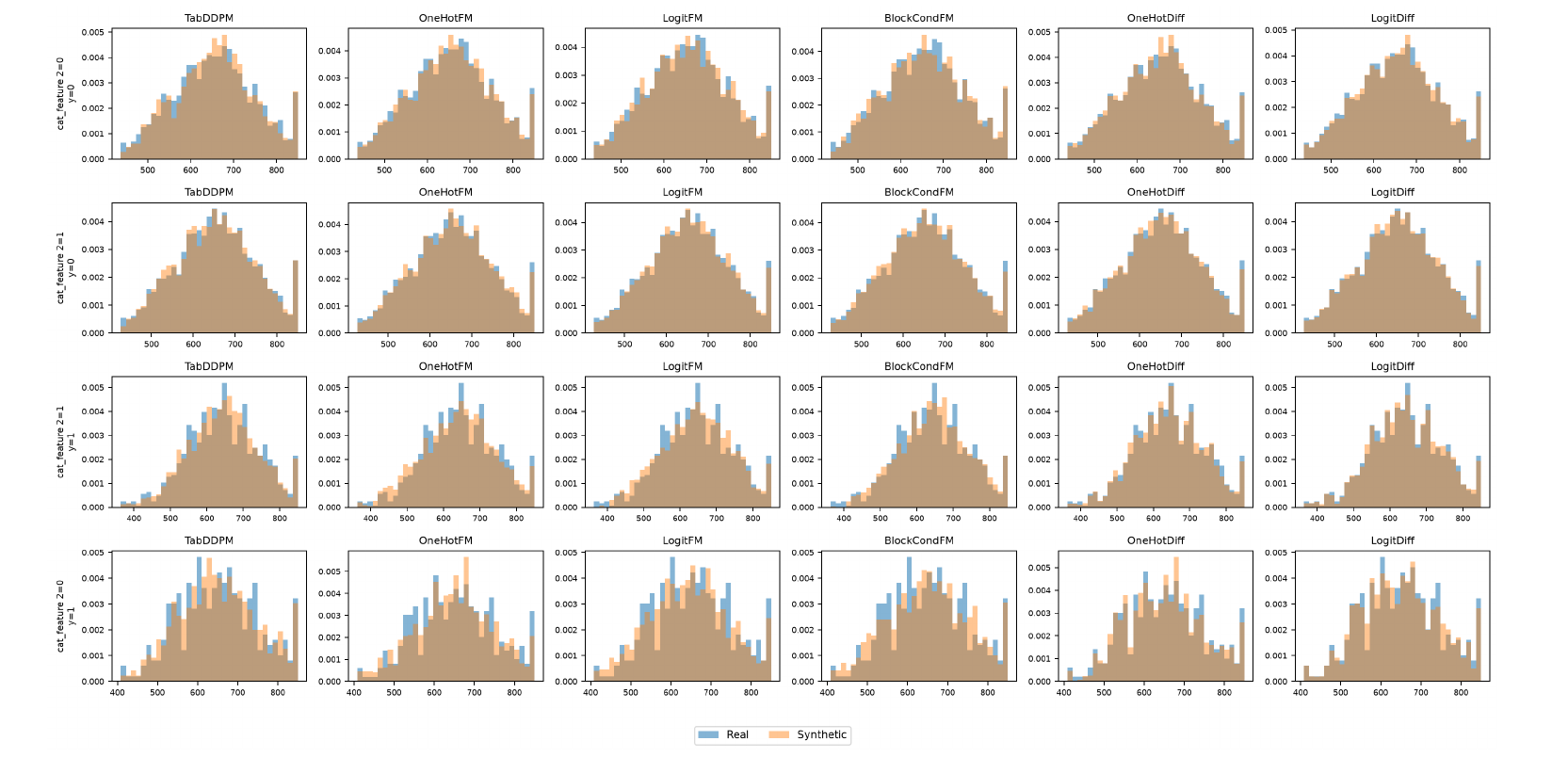}
\caption{Churn2 conditional numerical distributions on the original benchmark split. Columns show TabDDPM; one-hot, logit, and block-conditional logit Flow Matching; and one-hot/logit diffusion. Rows fix $(z,y)$ as shown at left and plot $P(X_{\mathrm{num}}\mid z,y)$. Blue/orange histograms are real/synthetic; the axes give numerical value and density. Greater overlap indicates better conditional fidelity. Broad shapes are recovered, with more binwise variation in the $y=1$ cells. This is a single-split diagnostic of selected high-support cells.}
\label{fig:cond_churn_v4}
\end{figure}

\begin{figure}[p]
\centering
\includegraphics[width=.82\textwidth,height=.82\textheight,keepaspectratio]{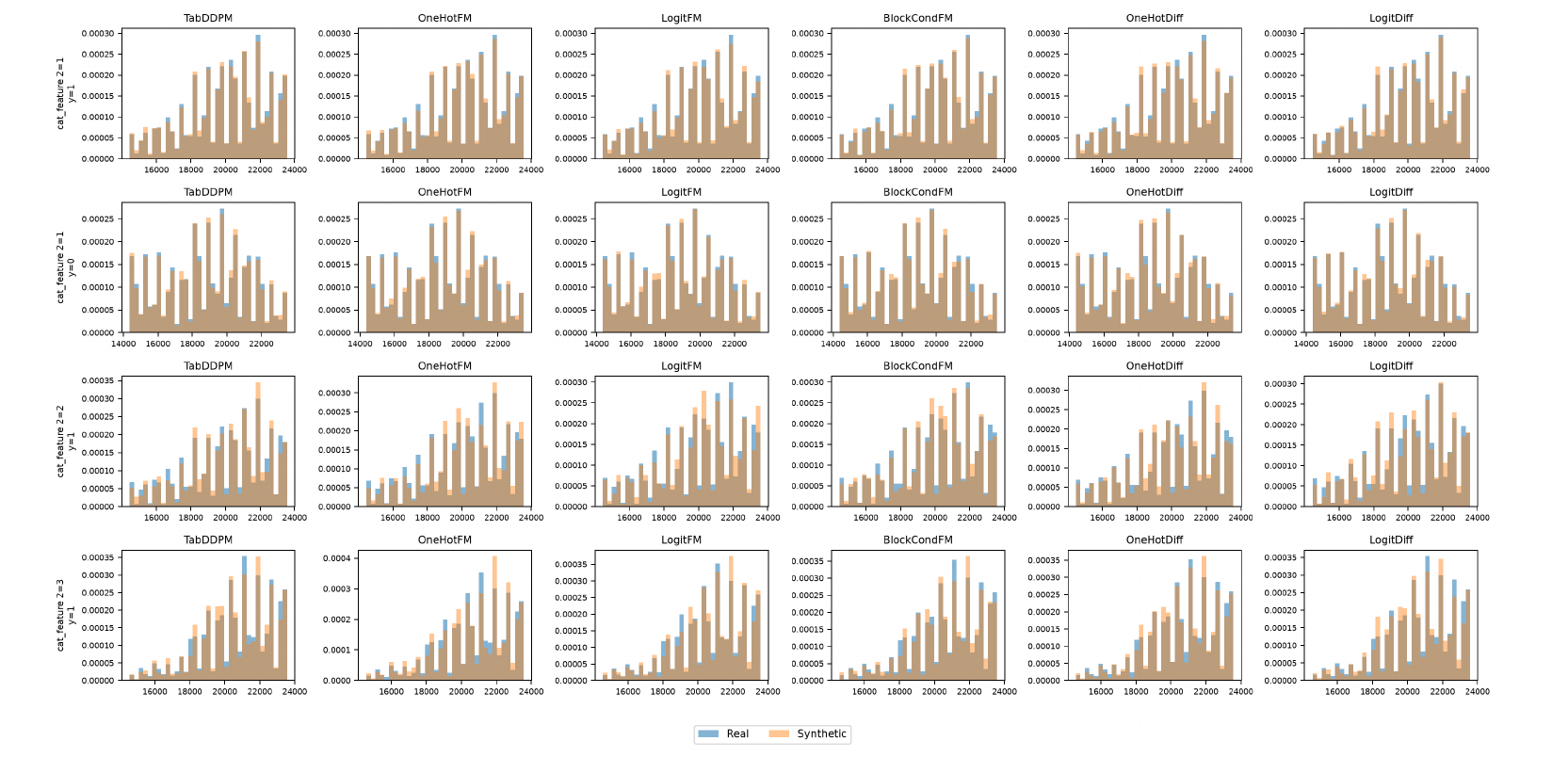}
\caption{Cardio conditional numerical distributions on the original benchmark split. Columns show TabDDPM; one-hot, logit, and block-conditional logit Flow Matching; and one-hot/logit diffusion. Rows fix $(z,y)$ as shown at left and plot $P(X_{\mathrm{num}}\mid z,y)$. Blue/orange histograms are real/synthetic; the axes give numerical value and density. Greater overlap indicates better conditional fidelity. The increasing multimodal pattern is recovered, with local peak and tail differences in the $y=1$ cells. This is a single-split diagnostic of selected high-support cells.}
\label{fig:cond_cardio_v4}
\end{figure}

\begin{figure}[p]
\centering
\includegraphics[width=.82\textwidth,height=.82\textheight,keepaspectratio]{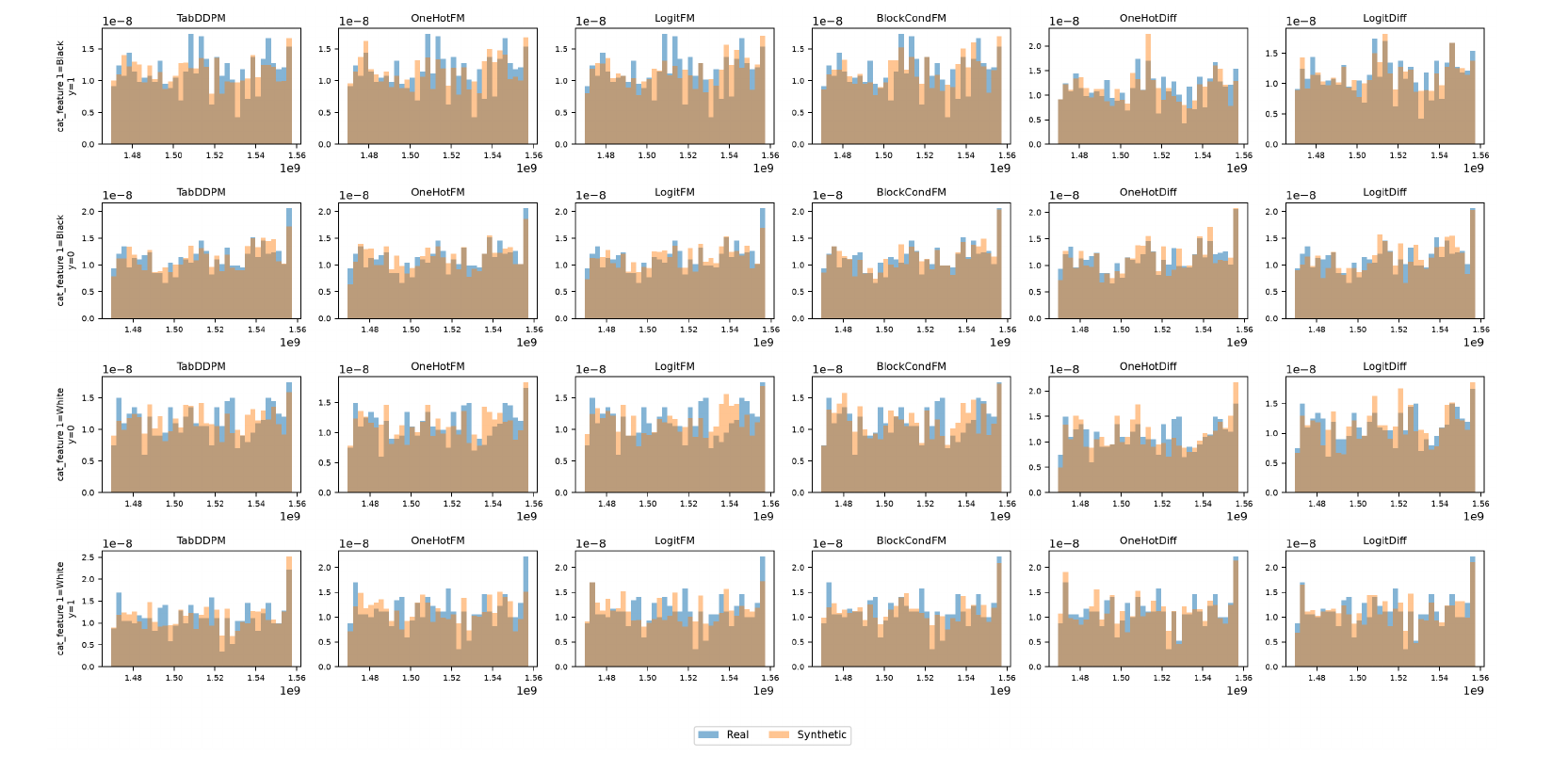}
\caption{Buddy conditional numerical distributions on the original benchmark split. Columns show TabDDPM; one-hot, logit, and block-conditional logit Flow Matching; and one-hot/logit diffusion. Rows fix race and label as shown at left and plot $P(X_{\mathrm{num}}\mid X_{\mathrm{cat}},y)$. Blue/orange histograms are real/synthetic; the axes give numerical value and density. Greater overlap indicates better conditional fidelity. Broad levels are reproduced, with the largest local gaps in the displayed White, $y=1$ cell. This is a single-split diagnostic of selected high-support cells.}
\label{fig:cond_buddy_v4}
\end{figure}

Overall, the marginal plots show that individual numerical and categorical feature distributions are often easier to match than the full mixed joint distribution. The conditional plots are more directly related to the proposed structured modeling approach: they visualize whether generated data preserve conditional distributions of the form $P(X_{\mathrm{num}}\mid X_{\mathrm{cat}},y)$. These diagnostics support the use of the conditional Wasserstein term and standardized $\mathcal{D}_{\mathrm{mix}}$ as the main quantitative   criteria  for real-data evaluation.

\end{document}